
\documentclass[a4paper, twoside, 11pt]{article}

\usepackage{jair}
\usepackage{amsmath,amssymb,amsthm,color}
\usepackage{graphics,epsfig}
\usepackage{wrapfig}
\usepackage{boxit,url}
\usepackage{enumerate}
\usepackage{apacite}
\usepackage[english]{babel}
\usepackage{epsf}





\newcommand{\BlackBox}{\rule{1.5ex}{1.5ex}}  
\renewenvironment{proof}{\par\noindent{\bf Proof\ }}{\hfill\BlackBox\\[2mm]}

\newtheoremstyle{default}
{\topsep}
{\topsep}
{}
{}
{\bfseries}
{.}
{0.5em}
{\thmname{#1}\thmnumber{ #2}\textbf{\thmnote{ (#3)}}}
\theoremstyle{default}
\newtheorem{example}{Example}
\newtheorem{theorem}{Theorem}
\newtheorem{lemma}[theorem]{Lemma}

\newtheorem{remark}[theorem]{Remark}

\newtheorem{definition}[theorem]{Definition}

\newcommand{\theoremx}[2]{{\newtheorem*{theoremx{#1}}{Theorem {#1}}\begin{theoremx{#1}}#2\end{theoremx{#1}}}}
\newcommand{\lemmax}[2]{{\newtheorem*{lemmax{#1}}{Lemma {#1}}\begin{lemmax{#1}}#2\end{lemmax{#1}}}}

\newcommand{\PropA}{{\textrm{Property A}}}
\newcommand{\PropB}{{\textrm{Property B}}}
\newcommand{\PropC}{{\textrm{Property C}}}
\newcommand{\PropAcont}{{\textrm{Property A$_\textrm{cont}$}}}

\newcommand{\eqdef}{{\stackrel{\vartriangle}{=}}}

\newcommand{\rE}{{\mathbf{E}}}

\newcommand{\Real}{{\mathbb{R}}}
\newcommand{\bB}{{\mathbb{B}}}
\newcommand{\cR}{{\mathcal{R}}}
\newcommand{\cX}{{\mathcal{X}}}
\newcommand{\cRdeltastar}{{\cR_{\delta}^{*}}}

\newcommand{\cQ}{{\mathcal{Q}}}
\newcommand{\cL}{{\mathcal{L}}}

\newcommand{\cM}{{\mathcal{M}}}
\newcommand{\cH}{{\mathcal{H}}}

\newcommand{\sumn}{{\sum_{i=1}^N}}
\newcommand{\sumj}{{\sum_{i=1}^J}}
\newcommand{\sumk}{{\sum_{i=1}^K}}
\newcommand{\supp}{\mathop{\mathrm{supp}}}
\newcommand{\argmax}{\mathop{\mathrm{argmax}}}
\newcommand{\argmin}{\mathop{\mathrm{argmin}}}
\newcommand{\lebeq}{\stackrel{\lambda}{=}}
\newcommand{\symdif}{\mathop{\Delta}}
\newcommand{\aslim}{\stackrel{\mathrm{a.s.}}{\longrightarrow}}
\newcommand{\aseq}{\stackrel{\mathrm{a.s.}}{=}}
\newcommand{\klpart}[2]{{#1\log{\frac{#1}{#2}}}}

\newcommand{\pderiv}[2]{{\frac{\partial{#1}}{\partial{#2}}}}

\newcommand{\ind}{{\mathbb{I}}}
\newcommand{\cldp}{{\textrm{core}_\delta(P)}}


\definecolor{darkGreen}{rgb}{0.1,0.5,0.1}

\definecolor{orange}{rgb}{1,0.5,0.2}


\newcommand{\rnote}[1]{}
\newcommand{\mnote}[1]{}


\ShortHeadings{Foundations of Adversarial Single-Class Classification}{El-Yaniv and Nisenson}

\begin{document}
\thispagestyle{plain}



\title{On the Foundations of Adversarial Single-Class Classification}

\author{\name Ran El-Yaniv \email rani@cs.technion.ac.il \\
       \addr Department of Computer Science\\
       Technion -- Israel Institute of Technology\\
       Technion, Israel 32000
       \AND
       \name Mordechai Nisenson \email motinis@il.ibm.com \\
       \addr IBM Research -- Haifa\\
       Haifa University Campus, Mount Carmel, 31905 Haifa, Israel}


\maketitle

\begin{abstract}
Motivated by authentication, intrusion and spam detection applications
we consider single-class classification (SCC) as a two-person game between the learner and an adversary.
In this game the learner has a sample from a target distribution and the goal is to construct a
classifier capable of distinguishing observations from the target distribution from
observations emitted from an unknown other distribution.
The ideal SCC classifier must
guarantee a given tolerance for the false-positive error (false alarm rate) while minimizing
the false negative error  (intruder pass rate).
Viewing SCC as a two-person zero-sum game
we identify  both deterministic and randomized optimal classification strategies for different game variants.
We demonstrate that randomized classification can provide a significant advantage. In the deterministic setting we
show how to reduce SCC to two-class classification where in the two-class problem the other class is
a synthetically generated distribution.
We provide an efficient and practical algorithm for constructing and solving the two class problem.
The algorithm distinguishes low density regions of the target distribution and is shown to be consistent.
\end{abstract}


\section{Introduction}
In \emph{Single-Class Classification (SCC)} the learner observes a training set of sampled instances
from one \emph{target distribution}. The goal is to create a classifier that can distinguish instances emitted from
distributions other than the target distribution and unknown to the learner during training.
This SCC problem
can model many applications such as intrusion, fault and novelty detection.
For example, in an instance of an intrusion detection problem \cite<see e.g.,>{NisensonYEM03},
the goal is to create a classifier that can distinguish `legal' users from intruders based
on behaviometric or biometric patterns. This classifier can then be used to guard against illegal
attempts to gain access into protected systems or regions.

Single-class classification (also termed one-class classification) has been receiving considerable
research attention in the machine learning and pattern recognition communities.
For example, only the survey papers \cite{MarkouSingh03part1,MarkouSingh03part2,HodgeAustin04} cite, altogether, over
100 SCC papers.
Most SCC works implicitly assume that a good solution can be achieved
by identifying low density regions of the target distribution
and then, the objective is to reject sub-domains of low density.
Thus,
the main consideration in previous SCC studies has been
\emph{statistical}: how can  a prescribed false positive
rate be guaranteed given a finite sample from the target distribution.

The proposed approaches are typically
\emph{generative} or \emph{discriminative}. Generative solutions
range from full density estimation \cite{Bishop94}, to partial
density estimation such as quantile estimation \cite{LanckrietGJ02},
level set estimation \cite{BenDavidL95,SteinwartHS05} or local
density estimation \cite{BreunigKNS00}. In discriminative methods
one attempts to generate a decision boundary appropriately enclosing
the high density regions of the training set \cite{Yu05}.
In addition to such constructions, there are many empirical studies of the proposed solutions.
Nevertheless, it appears that the area
suffers from a lack of theoretical contributions and principled (empirical)
comparative studies of the proposed solutions.

Motivated mainly by intrusion detection applications, in this paper we examine the SCC
problem from an adversarial viewpoint where an adversary selects the attacking distribution.
We begin by abstracting away the statistical estimation component of the problem by considering
a setting where the learner has a very large sample from the target distribution.
This setting is modeled by assuming that the learning algorithm has precise knowledge of the target distribution.
While this assumption would render almost the entire body of SCC literature superfluous,
it turns out that
a significant and non-trivial \emph{decision-theoretic} component of the adversarial SCC problem remains --
one that has
so far been overlooked.
For a discrete version of the SCC problem we provide an in depth analysis of adversarial SCC and identify optimal strategies
for variants of the problem depending on whether or not the learner can play a randomized strategy
and on various constraints on the adversary. As a consequence of this analysis, it can be demonstrated
that a randomized learner strategy can be superior on average to standard deterministic classification.
For an infinitely continuous version of this game we provide a simple and consistent SCC algorithm
that implements the standard
low-density rejection by reducing the SCC problem to two-class soft classification.

The body of this
paper contains the principal results that are simpler to present. The appendices contain some of
the more technical proofs.
to the presented results.
An earlier version of this work containing a subset of the results was presented at NIPS  \cite{ElYanivN06}.
Extensions to this work can be found in the thesis of \cite{Nisenson2010}.

\section{Problem Formulation}
\label{sec:formulation}
We define the adversarial \emph{single-class classification (SCC)} problem as a two-person
zero-sum game between the
\emph{learner} and an \emph{adversary}. The learner receives a training sample of examples from
a \emph{target distribution} $P$ defined over some space $\Omega$.
On the basis of this training sample, the learner should select a rejection function
$r: \Omega \to [0,1]$, where for each $\omega \in \Omega$, $r(\omega)$ is the probability with
which the learner will reject $\omega$.
On the basis of any knowledge  of $P$ and/or $r(\cdot)$, the adversary
selects an \emph{attacking distribution} $Q$, defined over $\Omega$.
Then,
a new example is drawn from $\gamma P + (1-\gamma)Q$, where
$0<\gamma<1$, is a \emph{switching probability} unknown to the learner.

The \emph{rejection rate} of the learner, using a rejection function $r$, with respect to any distribution
$D$ (over $\Omega$), is $\rho(r,D) \eqdef \rE_D \{r(\omega)\}$.
The two main quantities of interest here are the \emph{false positive rate} (type I error) $\rho(r,P)$, and
the \emph{false negative rate} (type II error) $1-\rho(r,Q)$.
Before the start of the game, the learner receives a tolerance
parameter $0<\delta<1$, giving the maximally allowed false positive
rate. A rejection function $r(\cdot)$ is \emph{valid} if its false
positive rate satisfies the constraint $\rho(r,P) \leq \delta$. A valid rejection function (strategy) is
\emph{optimal} if it guarantees the smallest false
negative rate amongst all valid strategies.

This setting conveniently models various SCC applications and in particular, intrusion detection problems.
For example, considering biometric authentication, the false alarm rate $\rho(r,P)$ is the rejection
(failed authentication) rate of the legal
users and $\rho(r,Q)$ is the rejection rate of intruders, which should be maximized.

\begin{remark}
Clearly, a dual SCC problem can be formulated
where a sufficiently high intruder rejection rate must be guaranteed and the false alarm rate should be minimized.
We briefly discuss this dual problem and its relation to the ``primal'' in Section~\ref{sec:dual}.
Other types of SCC problems can be considered where the loss is a function of the type I and type II errors. For example, one may be interested in
minimizing a convex combination of these errors.
Any such loss function can be handled using our definition and searching for the $\delta$ for which the SCC solution
optimizes the desired loss function.
\end{remark}


Our analysis begins by focusing on the Bayes decision theoretic version of the SCC problem
in which the learner knows
the target distribution $P$ precisely.
The problem is thus viewed as a two-person zero sum game where the payoff to the learner is $\rho(r,Q)$.
The set $\cR_{\delta}(P) \eqdef \{r : \rho(r,P) \leq \delta \}$ of valid rejection functions
is the learner's strategy space.
We denote by $\cQ$ be the strategy space of the adversary, consisting of all
allowable distributions $Q$ that can be selected by the adversary.\footnote{The game can be
expressed in  `extensive form' (i.e., a game tree) where
in the first move the learner selects a rejection function, followed by a chance move to determine
the source (either $P$ or $Q$) of the test example (with probability $\gamma$). In the case where $Q$ is selected,
the adversary chooses (randomly using $Q$) the test example. In this game the choice of $Q$ depends on
knowledge of $P$ and $r(\cdot)$.}


We are concerned with optimal learner strategies
for game variants distinguished by the adversary's knowledge of the
learner's strategy, $P$ and/or of $\delta$ and by other limitations on $\cQ$.
We also distinguish a special type of this game, which we call the \emph{hard setting} in which the learner
is constrained to employ only deterministic reject functions; that is,
$r: \Omega \to \{0,1\}$, and such rejection functions are termed ``hard.''
The more general game defined above (with ``soft'' functions) is called the
\emph{soft setting}.
As far as we know,
only the hard setting has been considered in the SCC literature thus far.
The reason for considering soft rejection functions is that they can achieve significant advantage in terms
of type II error reduction. Later on in Section~\ref{sec:numerical} we numerically demonstrate such error reductions.

For any rejection function, the learner can reduce the type II error by rejecting more (i.e., by increasing
$r(\cdot)$).
Therefore, in the soft setting for an optimal $r(\cdot)$ we must have $\rho(r,P) = \delta$ (rather than
$\rho(r,P) \leq \delta$). It follows that the switching parameter $\gamma$ is immaterial to the selection of
an optimal strategy.


Given an adversary strategy space, $\cQ$, we define the set
$\cRdeltastar(P)$  of optimal valid rejection functions as
$\cRdeltastar \eqdef \{r \in \cR_{\delta}(P) \ : \min_{Q \in \cQ} \rho (r,Q) =
\max_{r' \in \cR_{\delta}(P)} \min_{Q' \in \cQ} \rho(r',Q') \}$.\footnote{For certain strategy spaces, $\cQ$, it may be necessary to consider the infimum rather than
the minimum. In such cases it may be necessary to replace `$Q \in \cQ$'
(in definitions, theorems, etc.)
with `$Q \in cl(\cQ)$', where $cl(\cQ)$ is the closure of $\cQ$.}
We note that $\cRdeltastar$ is never empty in the cases we consider.

\section{Related Work}
One-Class Classification is often given different names, depending on the desired use. For example,
other common names include outlier detection, fault detection and novelty detection.
Historically, one of the
earliest works is due to~\citeA{Grubbs69} who considered in-sample outlier detection. Grubbs calculates
a cut-off statistic for determining outliers in the 1-dimensional Gaussian
case at the 5\%, 2.5\% and 1\% significance levels within samples of various sizes.
\citeA{Minter75} appears to be the first to use the term ``single-class classification''.
Minter starts from a fairly standard two-class approach, assuming that there is a class of interest (class 1)
and a class of ``others'' (class $\emptyset$). Given the switching parameter $\gamma$ (which is the a priori probability
of class 1), Minter gives the rule to accept a point $x$ iff $\gamma\Pr\{x|1\} \geq (1-\gamma)\Pr\{x|\emptyset\}$,
which is equivalent to $\gamma\Pr\{x|1\} \geq \frac{1}{2}\Pr\{x\}$. It is assumed
that both $\gamma$ and $\Pr\{x\}$ are known or can be estimated from historical data, leaving
the problem of estimating $\Pr\{x|1\}$ from the given sample.
While, technically, only a sample from the class of interest is given,
the additional assumptions make this a modified form of a two-class
problem.\footnote{This differs from more recent works, where $\gamma$ and $\Pr\{x\}$ are assumed to be unknown
(whereby the learner's knowledge is much more restricted), and the type I error is required to not exceed a bound, $\delta$, which
is the setting we use in this work.}
These are the
earliest explicit works we have found.
Note that statisticians have long been considering the \emph{two-sample problem}, which is similar but
perhaps simpler. One can view the SCC problem as an extremely unbalanced instance of the two-sample problem
that prevents using the standard statistical hypothesis testing techniques.

Since virtually all prior works on SCC that we have encountered deal
with how to approximate a low-density rejection strategy given a set, $\{x_1,\ldots,x_n\}$, of training points,
sampled from the class of interest, we will focus our review here on such methods.

We begin with discussing support, quantile and level-set estimation. Support estimation aims to estimate the support
of a density $p$. In terms of outlier detection, the goal is clear: a point falling outside the estimated support is
taken to be an outlier. One of the simpler methods, analyzed by \citeA{DevroyeWise80}, is to
estimate the support as $\hat{S}_n = \bigcup_{i=1}^n B(x_i, \epsilon_n)$, where
$B(x,a)$ is a closed ball centered at $x$ with radius $a$ (i.e. $||x'-x|| \leq a$, for some norm $||\cdot||$),
and $\epsilon_n$ is a (vanishing) sequence of smoothing parameters.
In quantile estimation,
the goal is to find a set $U(\beta)$ such that $\lambda(U(\beta)) = \inf_S\{\lambda(S): P(S) > \beta\}$,
where $\lambda$ is a real valued function. For our purposes, we take $\lambda$ as the Lebesgue measure,
in which case the problem is also called \emph{minimum volume estimation}.
When $\beta = 0$ this becomes support estimation, and when $\beta = 1-\delta$ this problem is the same as low-density rejection.
In level-set estimation,
the goal is to approximate the set $\cL(t) = \{x: p(x) > t\}$ (or alternatively as $\{x: p(x) \geq t\}$). Of course,
level-set estimation can be used for support estimation by taking $t=0$ or
by taking $t = t_n$ as a sequence which approaches zero \cite<see>{Cuevas97}.
Clearly, level-set estimation approximates the low-density rejection strategy when $P(\cL(t)) = 1 - \delta$.
A significant amount of prior SCC works have focused on minimum volume and level-set estimation. We distinguish between
explicit and implicit methods, where explicit methods try to directly solve one of the problems, and implicit methods which use a heuristic which may or may
not give the desired result. We note that whether the method is explicit or implicit
is not necessarily an indicator of whether the underlying model is generative or discriminative,
although there is a clear tendency for explicit methods to be generative. Transformations from the one-class setting to the two-class setting
tend to be implicit and discriminative. We will consider minimum volume estimation approaches first and
then look at various level-set estimation results. Finally we will examine other results, including transformations to the two-class setting.

Minimum volume estimation has been a favored approach at solving the SCC problem in the literature.
This perhaps is due to two works which reused the popular Support Vector Machine \cite<SVM, see>{Vap98} from two-class classification problems.
The earlier work \cite{TaxD99} sought to fit the sample data inside a sphere of minimal radius, a solution
they called the Support-Vector Data Description (SVDD). Specifically,
given a sphere with center $a$ and radius $R$, the error function to be minimized is $R^2 + C\sum_i \xi_i$,
under the constraints $(x_i - a)^T(x_i - a) \leq R^2 + \xi_i$, where $C$ is a regularization term which
relates to the type I error. Outliers in the sample data would lie on, or outside the sphere (and have $\xi_i > 0$).
The kernel trick was then employed to allow for solving the problem in a higher dimensional feature space. They note
that polynomial kernels do not result in small volumes in the input space, as points distant from the origin tend
to have high error values. They found that Gaussian kernels worked well. The type I error can be estimated
from the number of support vectors divided by the sample size, $n$, where the support vectors are the points lying on the sphere (i.e. they define the sphere's boundary).
Changing the regularization parameter $C$, or the bandwidth parameter of the Gaussian kernel, can be used to control the trade-off between the volume
of the sphere and the number of support vectors. In a follow up work, \citeA{TaxD01}
show how samples from a uniform distribution can be used to optimize for both parameters simultaneously. The second
work \cite{Scholkopf01estimatingthe} introduced what is commonly called the One-Class Support Vector Machine (OC-SVM).
The technique used is that of a standard two-class SVM where the second class is the origin (in \emph{feature space}).
In other words, a hyper-plane is sought which maximizes the soft-margin between the origin and the sample points. Points
lying on the ``wrong'' side of the hyper-plane are outliers. The kernel trick can also be employed for OC-SVM. Sch\"{o}lkopf et.~al show that for kernels $k(x,y)$ that depend only on $x-y$, such as the Gaussian kernel,
the solutions found by OC-SVM and SVDD are identical.
They further showed that the value $\nu = \frac{1}{nC}$, where $C$ is the regularization parameter in the SVM equation, is
an upper bound on the number of outliers, a lower bound on the number of support vectors, and that for probability measures $P$
without discrete components, asymptotically the number of outliers and support vectors are equal, in probability.
\citeA{Vert06} correctly point out that while OC-SVM can guarantee the type I error, no guarantees are made regarding consistency of the result (i.e.,
whether the result converges to a region of minimum volume). This same point is
valid for SVDD as well. Indeed, the poor performance of SVDD using polynomial kernels is sufficient proof that the minimum volume set (in the original feature space) is not found.
Thus, both of these approaches are implicit, as they do not explicitly solve for the minimum volume set.
Similar results for the Minimax Probability Machine
(where the type I error is bounded but the resulting set does not necessarily have the minimum volume) are provided by Lanckriet et. al \cite{Lanckriet02arobust,LanckrietGJ02}.
\citeA{ScottNowak06} overcome these limitations where they use
Empirical Risk Minimization to prove consistency (in a distribution free manner) and convergence rates
of $\left(\frac{\log n}{n}\right)^{\frac{1}{d}}$
using Structural Risk Minimization for trees
 (these results aren't distribution free; specifically there is a requirement which can be satisfied if $p$ has no plateaus).
\citeA{Scott07} expands on this analysis, which served as the basis for the 2-class SVM approach used in \cite{Davenport06learningminimum}, where the second class
is the uniform distribution. The results significantly outperformed those of OC-SVM (i.e. a significantly smaller volume was found for approximately
the same type I error).

We now turn our attention to level-set estimation.
Let $\cL_n(t)$ be the estimation of $\cL(t)$ given the $n$ sample points. One of the most common error measures is $\lambda(\cL(t) \symdif \cL_n(t))$,
where $\lambda$ is the Lebesgue measure and $\symdif$ is the symmetric difference (i.e. $A \symdif B = (A \setminus B) \bigcup (B \setminus A)$).
Another common measure is $H_P(\cL(t),t) - H_P(\cL_n(t),t)$, where $H_P(S,t) = P(S) - t\lambda(S)$ is the excess mass of $S$.
Both of these measures are non-negative and equal to zero at the optimal solution.
Much of the prior work which explicitly solves
the level-set estimation problem shows consistency by proving that as $n$ goes to infinity, one of these two measures goes to zero. Most
recent work focuses on calculating convergence rates under various conditions on the density $p$. One of the most common techniques for level-set
estimation is the \emph{plug-in estimate} where $\cL_n(t) = \{x: \hat{p}_n(x) > t\}$,
for a density estimate $\hat{p}_n$ of $p$. The kernel density estimate \cite{Parzen:1962} is most often used. For a thorough analysis of the plug-in estimate
(in terms of consistency and convergence rates) see \citeA{Cuevas97,Cadre97,Rigollet08}. Interestingly,
the SCC community appears to have been inclined to pursue alternate and novel approaches over the straight-forward use
of the kernel density estimate as part of the plug-in estimator. It must be stressed that these approaches
have largely been implicit, in the sense that they are based on either a heuristic or some other approximation, and consistency is not proven. For example,
\citeA{BreunigKNS00} develop a measure they call the Local Outlier Factor (LOF). LOF is calculated based on a smoothed k-nearest-neighbor
distance, where the LOF is calculated as an average ratio of these distances between the neighbors of a point and the point itself. In other words,
the LOF is calculated so that objects ``deep within a cluster'' will have a LOF of approximately 1, while objects near edges of clusters or far from other points will have large values.
This seems to be a heuristic way of estimating $f(p)$ where $f$ is hoped to be a monotonically decreasing function. \citeA{Hempstalk2008} use the plug-in estimate
approach where they use a rather different way of establishing $\hat{p}_n$. Using Minter's notation from above, they generate an artificial distribution for class $\emptyset$,
and then it follows from Bayes Theorem that:
$$
\Pr\{x|1\} = \frac{\Pr\{\emptyset\}\Pr\{1|x\}}{\Pr\{1\}\Pr\{\emptyset|x\}}\Pr\{x|\emptyset\}.
$$
Since the artificial distribution is known, and the prior can be controlled, $\Pr\{x|1\}$ can be estimated from $\Pr\{1|x\}$,
which is estimated using class-probability estimation techniques, specifically bagged trees with Laplacian smoothing. In practice, they use
a density estimate of $p$ to establish the density for the artificial set. While the technique is
certainly interesting, it would be of great interest to see if consistency or convergence rates could be proven. \citeA{Vert06} demonstrated
that one need not estimate the density directly in order to determine the level-set. They prove
that an SVM, with a convex loss function and Gaussian kernel with a ``well-calibrated bandwidth $\sigma$,'' can produce
an estimate $\cL_n(t)$, such that $\lim_{n \to \infty} H_P(\cL(t),t) - H_P(\cL_n(t),t) = 0$, in probability.
\citeA{SteinwartHS04} provide convergence rates when using L1-SVM for the error measure $\mu(\cL(t) \symdif \cL_n(t))$,
where $\mu$ is a reference probability distribution.

Finally, we consider other works, starting with transformations to the two-class setting. All of these approaches
rely on the creation of a second class in the vicinity of the target class. Examples of this are \cite{CounterExample07} where
SVM is used to separate between the two classes, and \cite{GeneticOC09}, where genetic programming is used and the fitness
function accounts for overlap between the two classes. Other works, such as \cite{RatschMSM02}, look at how
boosting can be applied in the one-class setting. A recent and interesting work is by \citeA{Juszczak_09_spanning_tree}, which uses the premise
that the target class should largely be continuous; in other words, if two points belong to the target class, there should be a path from one to the other.
For points which are very close to each other, we may expect this to be a straight line. They propose building a minimum spanning tree covering the data,
and test membership to the target class by testing the distance of a point to the tree. Since the continuity assumption may be violated for points in different
clusters, they allow for the removal of edges in the tree, where longer edges are better candidates for removal. They also allow for a form of dimensionality reduction by
removing the shortest paths in the tree. The approach has very good performance on the tested data sets, and it would be of great interest to
see if the authors can develop consistency or other theoretical results for it.


\

\section{An Informal Look - an Investment/ROI Analogy}

To gain some insight into the one-class classification setting, we now
describe an  analogous investment game.
The learner is given an amount of money to invest, $\delta$.
There are $N$ assets which can be invested in, with a cost of $p_i$ to invest in asset $i$.
For each asset $i$, the learner
purchases an amount $r(i) \in [0,1]$ (i.e., from none to all of an asset)
and then sells it at a price $q_i$, determined by the adversary.  Any monies
not invested are lost.
Since the initial wealth is $\delta$, the allocation strategy $r(\cdot)$ must satisfy
$\sum_i r(i) p_i \leq \delta$. The overall return to be maximized is
$\sum_i r(i) q_i$.

Clearly, the Return-On-Investment (ROI) for asset $i$ is $\frac{q_i}{p_i}$,
and thus the learner should invest in assets which have the highest ROI (where free assets are taken
to have infinite ROI).
In the SCC setting, the fact that
the learner must select the investment strategy, $r(\cdot)$, before the adversary determines the selling prices, clearly makes
this a difficult proposition. Had we reversed the order, and the adversary were to determine the
selling prices first, we would have a two-class classification problem (i.e., the learner,
with full knowledge of both classes,
is to minimize
type II error subject to a maximum type I error). In this case, the learner's optimal investment strategy would be clear:
\begin{quote}
The learner shouldn't invest in an asset $k$, unless all assets with a higher ROI than $k$ have already been purchased.
\end{quote}
Note that while
this strategy
applies to the soft setting ($r(i) \in [0,1]$), the optimal solution is very nearly identical to that of the hard solution ($r(i) \in \{0,1\}$), with the only difference being
that any left over money is invested. How does this investment strategy translate from the two-class classification setting to our
original one-class classification setting, where the learner must invest without knowing the ROI values? Clearly,
if the adversary's strategy space has some inherent constraints on the relative ROI of assets, then the learner could take advantage of them.
For example, in the simplest case, if the adversary's strategy space
enforces an ordering on the ROI values, for example $j < k \Rightarrow \frac{q_j}{p_j} < \frac{q_k}{p_k}$, then the learner can invest optimally without knowing $Q$. However,
the less the adversary's strategy space constrains the relative ROI of assets,
the more difficult the learner's task is. We would intuitively expect that, in the face of an adversary determined to minimize
the learner's return, that less constraints on the adversary would force the learner to diversify his investment.
In the
extreme case of no constraints at all on the adversary, the learner should purchase the same amount of every non-free asset.\footnote{Note
that this is different than `dollar-cost averaging'; the same amount of money isn't spent on each asset, rather the same absolute amount of each asset is purchased.
This guarantees the learner a total ROI of at least $1$ (i.e., for every dollar invested, a dollar is earned upon selling).} We also
note that the more the learner diversifies, the ``further'' his investment strategy becomes relative to the optimal
two-class strategy (in accordance to \emph{known} ROI values).

\section{On the Optimality of Monotone and Low-Density Rejection Functions}
\label{sec:characterization}
The vast majority of the literature on SCC deals with various techniques for implementing
the Low-Density Rejection Strategy (LDRS). This raises the question of whether
such a strategy is optimal or not, and under what conditions may it be reasonable to use such
a strategy.
Since we are interested in adversarial applications, \emph{worst-case} performance
is a natural measure for us to consider. For example, if one considers an authentication system
every attempt to gain access results in either access being granted or an alarm being fired.
From a worst-case perspective, we should expect a sophisticated intruder to be
capable of spying upon legitimate use of the system for some period of time and
seeing what events or patterns should provide access. Thus, it is more likely that the intruder will attempt
to  enter a highly probable event
in order to gain access, rather than a low-probability event.
In fact, the intruder's distribution
could be even more concentrated on the highly-probable events than the user's!

Viewed in this perspective, it is not at all clear at the outset that the standard LDRS approach
to SCC is the best for adversarial applications.
By constraining the adversary's strategy space to one
where all of the distributions are tightly concentrated on the highly-probably events under $P$,
low-density-rejection may not be an optimal strategy for the learner. In the extreme case
where the adversary always plays the most probable event under $P$, the adversary would always be able
to gain access if the learner plays the low-density-rejection strategy, while potentially the learner
could completely deny the adversary access if $\delta$ is greater than the probability for that event. Clearly,
the nature of the constraints placed on the adversary is critical not only in terms of whether
LDRS is optimal, but also in terms of the error that is achievable (both by LDRS and by other strategies).
Here we address the former issue, which we feel is of particular relevance considering
the large body of existing work which examines approximating low-density
rejection functions\footnote{See, e.g., \cite{Scholkopf01estimatingthe,Cuevas97,Cadre97,BreunigKNS00}.}
that can be leveraged in solving practical problems, and leave the latter for future research.

The partially good news is that low-density rejection is worst-case optimal if the learner is
confined to ``hard'' decisions and when the adversary is strong enough in the sense that
her strategy space is sufficiently large as shown in Theorem~\ref{thm:LDRS_optimal}.
However, as we demonstrate in Section~\ref{sec:games}, LDRS is inferior in general
to the optimal soft strategy. Thus, by playing a randomized strategy,
a very significant gain can be achieved.

In this section, we assume a finite support of size $N$; that is, $\Omega = \{1,\ldots,N\}$
and $P \eqdef \{p_1,\ldots,p_N\}$ and $Q \eqdef \{q_1,\ldots,q_N\}$ are probability mass functions.
Note that this assumption still leaves us with an infinite game because the learner's pure strategy space,
$\cR_{\delta}(P)$, is infinite. Extensions to infinite support ($N \to \infty$) for many of the finite support results are given in \citeA{Nisenson2010}.
A simple observation is that for any $r \in \cRdeltastar$ there exists $r' \in \cRdeltastar$ such that
$r'(i) = r(i)$ for all $i$ such that $p_i > 0$ and for zero probabilities, $p_j = 0$,
$r'(j) = 1$. We thus assume w.l.o.g. that $p_i > 0$ for all $i \in \Omega$.

While the low-density rejection strategy implies an assumption that
lower probability events should be completely rejected, we instead examine a weaker, but perhaps
more useful, condition.
Intuitively, it seems plausible that the learner should not assign higher rejection values
to higher probability events under $P$. That is, one may expect that a reasonable rejection function
$r(\cdot)$ would be monotonically decreasing with probability values.
In the ROI analogy, we would state this as ``the learner
should prefer cheaper assets to more expensive ones.'' This is appealing, as more of a cheaper
asset can be purchased for the same amount of money than a more expensive asset, and a lower selling price is necessary to
achieve the same ROI. We now define two types of monotonicity.

\begin{definition}[Monotonicity]
A rejection function $r(\cdot)$ is \emph{monotone} if $p_j < p_k \Rightarrow r(j) \geq r(k)$.
A monotone rejection function $r(\cdot)$ is \emph{strictly monotone} if
$p_j = p_k \Rightarrow r(j) = r(k)$.
\end{definition}
We note that completely rejecting null-events under $P$ (i.e., $p_j = 0 \Rightarrow r(j)=1$)
does not break strict-monotonicity so our assumption that there are no null events under $P$ is taken w.l.o.g. Surprisingly, optimal monotone strategies are not always guaranteed
as shown in the following example.
%

\begin{example}[Non-Monotone Optimality]
In the hard setting, take $N=3$, $P = (0.06, 0.09, 0.85)$ and $\delta = 0.1$.
The two $\delta$-valid hard rejection functions are
$r' = (1,0,0)$ and $r'' = (0,1,0)$.
Let $\cQ = \{Q = (0.01, 0.02, 0.97) \}$.
Clearly $\rho(r',Q) = 0.01$ and $\rho(r'',Q) = 0.02$ and therefore,
$r''(\cdot)$ is optimal despite breaking monotonicity.
More generally, this example holds if
$\cQ = \{ Q : q_2 -q_1 \geq \varepsilon\}$ for any $0 < \varepsilon \leq 1$.

In the soft setting, let
$N = 2$, $P = (0.2, 0.8)$, and $\delta = 0.1$. We note that
$\cR_{\delta}(P) = \{ r^\varepsilon = (0.1 + 4\varepsilon, 0.1 - \varepsilon) \}$, for $\varepsilon \in [-0.025, 0.1]$.
We take $\cQ = \{Q = (0.1, 0.9)\}$. Then $\rho^\epsilon(Q) = 0.1 + 0.4\varepsilon - 0.9\varepsilon = 0.1 - 0.5\varepsilon$.
This is clearly maximized when we minimize $\varepsilon$ by taking $\varepsilon = -0.025$, and
then the optimal rejection function is $(0, 0.125)$, which clearly breaks monotonicity.
This example also holds for $\cQ = \{ Q: q_2 \geq c q_1 \}$ for any $c > 4$.
\end{example}

This example naturally raises the question of which conditions are necessary or sufficient for optimal
monotone strategies to be guaranteed. To motivate our sufficient condition for optimality
($\PropA$ below), recall the intrusion detection setting discussed in the beginning of this section.
There the adversary is constrained to distributions that are tightly concentrated on the highly probably events
under $P$. In this case, since low probability events are scarcely ``attacked'' by the adversary, the
optimal learner would not waste rejection ``resources'' on low probability events.
In other words, in such cases monotone rejection functions aren't optimal.
This begets the question if monotone rejection functions are optimal when
the adversary is not constrained from attacking low probability events.


\begin{definition}[$\PropA$]
Let $P$ be a distribution and $\cQ$ be a set of distributions. If for all $p_j < p_k$
and $Q \in \cQ$ for which $q_j < q_k$, there exists a distribution $Q' \in \cQ$
such that for all $i \neq j,k$, $q'_i = q_i$ and
$q_j + q'_j \geq q_k + q'_k$, then
$\cQ$ possesses $\PropA$ w.r.t.~$P$.

\end{definition}

\begin{example}[Possession of $\PropA$]
Let $P$ be any distribution over $\Omega$.
Let $\cQ_1 = \{U\}$, where $U$ is the uniform distribution
over $\Omega$. Then $\cQ_1$ has $\PropA$ w.r.t.~$P$ since $q_j < q_k$ is never true.
Similarly, let $\cQ_2$ be the set of all distributions (if $Q$ is a distribution over $\Omega$, then $Q \in \cQ_2$).
Then $\cQ_2$ also has $\PropA$ w.r.t.~$P$.
If $P \neq U$, and $\cQ_3 = \{P\}$, then, $\cQ_3$ doesn't possess $\PropA$ w.r.t $P$.
\end{example}


The following theorem ensures that there exists an optimal  monotone rejection function
whenever $\cQ$ satisfies $\PropA$. In such cases
the learner's search space can be conveniently confined to monotone strategies.

\begin{theorem}[Optimal Monotone Hard Strategies]
\label{thm:hard-decision}
When the learner is restricted to hard-decisions
and $\cQ$ satisfies $\PropA$ w.r.t.~$P$, then there exists a monotone $r \in \cRdeltastar$.
\end{theorem}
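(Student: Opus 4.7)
My plan is to iteratively ``monotonize'' any optimal hard strategy via single-element swaps, preserving both validity and optimality at every step. I identify a hard rejection function with the set $R \eqdef \{i : r(i) = 1\}$, so $\rho(r, D) = D(R)$. Let $V^* \eqdef \min_{Q \in \cQ} \rho(r^*, Q)$ denote the optimal value for any $r^* \in \cRdeltastar$, and suppose $R^* \in \cRdeltastar$ is not monotone; then there exist indices $j \notin R^*$ and $k \in R^*$ with $p_j < p_k$, forming an inversion. I define $\tilde{R} \eqdef (R^* \setminus \{k\}) \cup \{j\}$. Validity is immediate from $P(\tilde{R}) = P(R^*) - p_k + p_j < P(R^*) \leq \delta$; the real content is to verify that $\tilde{R}$ is also optimal.

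To that end, I let $Q^\dagger \in \cQ$ attain $\min_{Q \in \cQ} \rho(\tilde{R}, Q)$ and split on the sign of $q^\dagger_j - q^\dagger_k$. In the easy case $q^\dagger_j \geq q^\dagger_k$, directly $\rho(\tilde{R}, Q^\dagger) = \rho(R^*, Q^\dagger) + (q^\dagger_j - q^\dagger_k) \geq \rho(R^*, Q^\dagger) \geq V^*$, so $\tilde{R}$ is optimal. The nontrivial case is $q^\dagger_j < q^\dagger_k$, where I would invoke $\PropA$ on $Q^\dagger$ to produce a witness $Q' \in \cQ$ that agrees with $Q^\dagger$ outside $\{j, k\}$ and satisfies $q^\dagger_j + q'_j \geq q^\dagger_k + q'_k$.

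I expect this hard case to be the main obstacle, because $\PropA$ supplies only a sum inequality whereas the argument requires a pointwise comparison of $q'_k$ and $q^\dagger_j$. The step that unlocks the proof is that both $Q'$ and $Q^\dagger$ are probability distributions agreeing outside $\{j, k\}$, forcing $q'_j + q'_k = q^\dagger_j + q^\dagger_k$; combined with the $\PropA$ inequality this yields $q'_k \leq q^\dagger_j$ (and symmetrically $q'_j \geq q^\dagger_k$). Consequently,
\[
V^* \leq \rho(R^*, Q') = \rho(R^*, Q^\dagger) - q^\dagger_k + q'_k \leq \rho(R^*, Q^\dagger) - q^\dagger_k + q^\dagger_j = \rho(\tilde{R}, Q^\dagger),
\]
so $\min_{Q \in \cQ} \rho(\tilde{R}, Q) \geq V^*$ and $\tilde{R}$ is indeed optimal.

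To finish, I iterate the construction: every swap strictly decreases $P(R)$ by $p_k - p_j > 0$ while preserving membership in $\cRdeltastar$. Because $\Omega$ is finite, the iteration terminates after finitely many steps, and it can only stop at an $R$ with no remaining inversions -- that is, a monotone element of $\cRdeltastar$.
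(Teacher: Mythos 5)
Your proposal is correct and follows essentially the same route as the paper: a single-pair swap on an inversion, with validity immediate from $p_j<p_k$ and optimality preserved by combining the $\PropA$ inequality with $q'_j+q'_k=q^\dagger_j+q^\dagger_k$ to get $q'_k\leq q^\dagger_j$ (the paper isolates exactly this rearrangement as a separate lemma). The only difference is bookkeeping for termination — you use the strictly decreasing potential $P(R)$ over a finite domain, whereas the paper inducts on ``$x$-monotonicity'' with a specific choice of $j$ — and both are sound.
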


Theorem~\ref{thm:hard-decision} only concerns the hard setting where $r$ is a zero-one rule.
The following  $\PropB$ and the accompanying Theorem~\ref{thm:chr-monotonic-equality}
treat the more general soft setting.

\begin{definition}[$\PropB$]
Let $P$ be a distribution and $\cQ$ be a set of distributions. If for all $0 < p_j \leq p_k$
and $Q \in \cQ$ for which $\frac{q_j}{p_j} < \frac{q_k}{p_k}$, there exists $Q' \in \cQ$ such that
for all $i \neq j,k$, $q'_i = q_i$ and $\frac{q'_j}{p_j} \geq \frac{q'_k}{p_k}$, then
$\cQ$ possesses $\PropB$ w.r.t.~$P$.

\end{definition}

\begin{example}[Possession of $\PropB$]
Let $P$ be any distribution over $\Omega$.
Let $\cQ_1 = \{U\}$, $\cQ_2$ be the set of all distributions and
$\cQ_3 = \{P\}$. All three sets, $\cQ_1$, $\cQ_2$ and $\cQ_3$, have $\PropB$ w.r.t.~$P$.
\end{example}

Recalling our informal investment analogy,
if the strategy
space of the adversary satisfies $\PropB$, then cheaper assets always have the potential for higher ROI (and
equally priced assets have equal ROI opportunities). If this is the case, then
Theorem~\ref{thm:chr-monotonic-equality}
states that
there is an optimal investment strategy (that maximizes the overall return), which never purchases
more of an expensive asset than a cheaper one and always invests identically
in equally priced assets.

\begin{theorem}[Optimal Monotone Soft Strategies]
\label{thm:chr-monotonic-equality}
~\\
If $\cQ$ satisfies $\PropB$ w.r.t.~$P$, then there exists an optimal strictly monotone rejection function.
\end{theorem}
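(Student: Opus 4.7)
The plan is to pick an optimal rejection function that minimizes a strictly convex potential over $\cRdeltastar$, and then show that any violation of strict monotonicity can be ``averaged out'' in a way that both remains in $\cRdeltastar$ (using $\PropB$) and strictly lowers the potential, yielding a contradiction. Since $r \mapsto \min_{Q \in \cQ}\rho(r,Q)$ is concave (a minimum of linear functionals of $r$), $\cRdeltastar$ is a nonempty, convex, compact subset of $[0,1]^N$. I choose $r^* \in \cRdeltastar$ to minimize $\Phi(r) \eqdef \sum_i p_i\, r(i)^2$; this minimizer exists by compactness and continuity.

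Suppose for contradiction $r^*$ is not strictly monotone, so there are indices $j \neq k$ with $p_j \leq p_k$ but $r^*(j) < r^*(k)$. I define $r'$ to agree with $r^*$ off $\{j,k\}$ and to equal the probability-weighted mean
\begin{equation*}
\bar r \eqdef \frac{r^*(j)p_j + r^*(k)p_k}{p_j+p_k}
\end{equation*}
on both $j$ and $k$. Clearly $\bar r \in [0,1]$ and $\rho(r',P) = \rho(r^*,P)$, so $r' \in \cR_{\delta}(P)$. Cauchy--Schwarz applied to the vectors $(\sqrt{p_j}r^*(j),\sqrt{p_k}r^*(k))$ and $(\sqrt{p_j},\sqrt{p_k})$ gives $(p_j+p_k)\bar r^{\,2} \leq p_j r^*(j)^2 + p_k r^*(k)^2$, strictly since $r^*(j) \neq r^*(k)$, so $\Phi(r') < \Phi(r^*)$.

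It remains to prove $r' \in \cRdeltastar$, which will contradict the minimality of $r^*$. Set $V^* \eqdef \min_{Q \in \cQ}\rho(r^*,Q)$; I must show $\rho(r',Q) \geq V^*$ for every $Q \in \cQ$. A short computation, using that any distribution $\tilde Q$ agreeing with $Q$ off $\{j,k\}$ must satisfy $\tilde q_j + \tilde q_k = q_j + q_k$, produces the identity
\begin{equation*}
\rho(r',Q) - \rho(r^*,\tilde Q) = \frac{r^*(k)-r^*(j)}{p_j+p_k}\bigl(p_k\,\tilde q_j - p_j\,\tilde q_k\bigr).
\end{equation*}
If $q_j/p_j \geq q_k/p_k$, take $\tilde Q = Q$ and the bracket is non-negative, giving $\rho(r',Q) \geq \rho(r^*,Q) \geq V^*$. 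Otherwise, $\PropB$ supplies some $Q' \in \cQ$ that agrees with $Q$ off $\{j,k\}$ and satisfies $q'_j/p_j \geq q'_k/p_k$; taking $\tilde Q = Q'$ again makes the bracket non-negative and yields $\rho(r',Q) \geq \rho(r^*,Q') \geq V^*$.

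The main obstacle is the second case, where the ``companion'' distribution produced by $\PropB$ is essential: at such a $Q$, the new function $r'$ does pointwise worse than $r^*$ on $\{j,k\}$, so one cannot directly compare $\rho(r',Q)$ to $\rho(r^*,Q)$. What makes the argument work is precisely that $\PropB$ converts the adversary's local advantage at $Q$ into a reachable, at-least-as-bad adversary $Q'$ against $r^*$, against which $r^*$ is already known to score at least $V^*$.
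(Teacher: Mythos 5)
Your proof is correct, and while the local mechanism is the same as the paper's, the global organization is genuinely different and cleaner. The paper also repairs a violating pair by replacing $r$ on the pair (in fact on the two whole level sets containing $j$ and $k$) with the $P$-weighted average, and also invokes $\PropB$ exactly at those adversary distributions where the averaged function does pointwise worse; your identity $\rho(r',Q)-\rho(r^*,\tilde Q)=\frac{r^*(k)-r^*(j)}{p_j+p_k}(p_k\tilde q_j-p_j\tilde q_k)$ is the same algebra as the paper's computation $\rho(r^*,D)-\rho(r,D)=(r^*(j)-r(j))\,|S_r(j)|\,p_j\,[g(D,j)-g(D,k)]$. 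Where you diverge is in how the repairs are assembled into a proof of existence: the paper runs a double induction (on a ``right-strict-monotonicity'' index, with an inner while-loop that repeatedly applies $\PropB$ to extremal pairs inside the two level sets and requires a termination argument), whereas you replace all of that by selecting the minimizer of the strictly convex potential $\Phi(r)=\sum_i p_i r(i)^2$ over $\cRdeltastar$ and showing a single averaging step would strictly decrease $\Phi$ while staying in $\cRdeltastar$ --- so no iteration or termination argument is needed, and one application of $\PropB$ per adversary distribution suffices (which is also why you can average over the bare pair $\{j,k\}$ rather than over level sets). The price is that you must know $\cRdeltastar$ is nonempty, convex and compact; your justification via concavity is right for convexity, but nonemptiness and closedness additionally use that $r\mapsto\inf_{Q\in\cQ}\rho(r,Q)$ is upper semicontinuous (an infimum of linear functions), a point worth stating explicitly --- it is the same well-definedness issue the paper relegates to its footnote about replacing $\cQ$ by its closure. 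With that one sentence added, your argument is complete and, to my eye, preferable.
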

\begin{remark}
It is not hard to prove that a slightly stronger version of
$\PropA$ implies $\PropB$. The stronger version of $\PropA$ is
that the property also holds
when $p_j=p_k$ (rather than only for $p_j < p_k$).
\end{remark}



In the remainder of this section we only consider the hard setting.
Theorem~\ref{thm:hard-decision} tells us that there exists an optimal rejection function
in the set of monotone rejection functions provided that $\PropA$ holds.
Obviously, to be optimal the rejection function should reject as much as possible
up to the $\delta$ bound.
We now show that if $\cQ$ is sufficiently rich (satisfying $\PropC$ below) then
any ``low-density rejection function'' is optimal.

\begin{definition}[Low-Density Rejection Function (LDRF) and Strategy (LDRS)]
A hard, $\delta$-valid, monotone rejection function $r(\cdot)$  is called a \emph{low-density rejection function}  if its
$\rho(r,P)$ is maximal among all hard, monotone $\delta$-valid rejection functions.
The strategy of selecting any LDRF is called the \emph{low-density rejection strategy (LDRS)}.
\end{definition}

%

\begin{definition}[$\PropC$]
Let $P$ be a distribution. We say that the set $\cQ$ satisfies $\PropC$ (w.r.t.~$P$)
if for each $p_j = p_k$ and $Q \in \cQ$, there exists $Q' \in \cQ$ such that
$q'_j = q_k$ and $q'_k = q_j$ , and for all other events, $Q'$ identifies with $Q$.
\end{definition}

Some intuition about $\PropC$ can be gained by considering some adversary strategy space $\cQ$.
First note that by expanding $\cQ$ to satisfy $\PropC$ the adversary can only be strengthened.
The property ensures that the adversary can take advantage of situations where the learner
doesn't identically treat equally probable events under $P$.
When the adversary is sufficiently strong in this sense  we are able to show that LDRS
dominates any monotone rejection function. Therefore, if $\cQ$ also satisfies $\PropA$, in which case
there exists an optimal monotone rejection function (Theorem~\ref{thm:hard-decision}), then LDRS is optimal.
This is summarized in the following theorem.

\begin{theorem}[LDRS Optimality]
\label{thm:LDRS_optimal}
Let $r^*$ be an LDRF. Let $r$ be any monotone $\delta$-valid
rejection function. Then, $r^*$ dominates $r$,
\begin{equation}
\label{eq:domination}
\min_{Q \in \cQ} \rho(r^*,Q) \geq \min_{Q \in \cQ} \rho(r,Q),
\end{equation}
for any $\cQ$
satisfying $\PropC$. Thus, if $\cQ$ possess both $\PropA$ and $\PropC$ w.r.t.~$P$, then LDRS is
hard-optimal.
\end{theorem}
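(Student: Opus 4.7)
The plan is to establish the dominance inequality \eqref{eq:domination} directly, by showing that for every adversary response $Q^* \in \cQ$ to $r^*$ there is a response $Q \in \cQ$ to $r$ with $\rho(r,Q) \le \rho(r^*,Q^*)$; the optimality statement then follows by combining this with Theorem~\ref{thm:hard-decision}.

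First I would set up the structural picture of monotone hard rejection functions. Writing $R = \{i : r(i)=1\}$ and $R^* = \{i : r^*(i)=1\}$, monotonicity forces each of these to contain every index with $p_i$ strictly smaller than the threshold $t(r) \eqdef \max\{p_i : i \in R\}$ (resp.\ $t^* \eqdef t(r^*)$), together with a subset $S$ (resp.\ $S^*$) of the ``threshold layer'' $T = \{i : p_i = t^*\}$. A short budget argument, using only that $r$ is $\delta$-valid and $r^*$ is LDRF (i.e.\ maximizes $\rho(\cdot,P)$ among monotone $\delta$-valid hard rejection functions), shows $t(r) \le t^*$, and in the boundary case $t(r)=t^*$ one has $|S| \le |S^*|$.

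Next I would split into the two cases. When $t(r) < t^*$, monotonicity of $r^*$ gives $R \subseteq R^*$, so $\rho(r^*,Q^*) = Q^*(R^*) \ge Q^*(R) = \rho(r,Q^*)$; simply take $Q = Q^*$. When $t(r) = t^*$, $R$ and $R^*$ agree on $\Omega \setminus T$ and differ only on $T$, where $S \subseteq T$ and $S^* \subseteq T$ with $|S| \le |S^*|$. This is where $\PropC$ does the work: iterating its pairwise-swap guarantee yields that $\cQ$ is closed under arbitrary permutations of the values of $Q^*$ on $T$. Choose any bijection $\sigma$ of $T$ with $\sigma(S) \subseteq S^*$ (possible since $|S| \le |S^*|$) and let $Q$ be the corresponding permutation of $Q^*$; then $Q$ agrees with $Q^*$ off $T$ and
\begin{equation*}
\rho(r,Q) = Q^*(\sigma(S)) + Q^*(R \setminus T) \le Q^*(S^*) + Q^*(R^* \setminus T) = \rho(r^*,Q^*),
\end{equation*}
using nonnegativity of the $q^*_i$ and that $R \setminus T = R^* \setminus T$. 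Taking infima over $Q^* \in \cQ$ on the right and over $Q \in \cQ$ on the left proves \eqref{eq:domination}.

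The main obstacle is isolating the right ``symmetrizing'' use of $\PropC$: the whole argument pivots on the observation that $r^*$ and $r$ can only disagree inside the threshold layer $T$ (after the $t(r) < t^*$ case is disposed of), so the adversary's freedom to shuffle mass within $T$ is exactly what is needed to match $r$'s rejection mass to $r^*$'s. Once \eqref{eq:domination} is in hand, the final claim is immediate: if $\cQ$ also satisfies $\PropA$, Theorem~\ref{thm:hard-decision} produces a monotone $r \in \cRdeltastar$, and then dominance of $r^*$ over this $r$ forces $r^* \in \cRdeltastar$ as well, i.e.\ LDRS is hard-optimal.
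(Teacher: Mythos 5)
Your proposal is correct and follows essentially the same route as the paper's proof: compare the rejection thresholds of $r$ and $r^*$ (the paper uses $\theta(r)=\min_{\omega:r(\omega)=0}p_\omega$ where you use the maximal rejected level, a cosmetic difference), dispose of the strict-inequality cases via LDRF maximality and pointwise domination, then compare the sizes of the rejected subsets of the threshold layer and invoke $\PropC$ to permute $Q^*$'s mass within that layer. Your version is a direct argument rather than the paper's proof by contradiction, and is slightly more explicit that transpositions generate all permutations of the layer, but the substance is identical.
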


\begin{example}[Violating $\PropC$ Breaks Domination]
We illustrate here a violation of $\PropC$ may result in a violation of
the domination inequality (\ref{eq:domination}) in
Theorem~\ref{thm:LDRS_optimal}.
Let $N = 5$, $P = (0.02, 0.03, 0.05, 0.05, 0.85)$,
and $\delta = 0.1$. Then the two $\delta$-valid LDRS rejection functions are $r = (1,1,1,0,0)$
and $r' = (1,1,0,1,0)$. Let $\cQ = \{ Q : q_3 - q_4 > \varepsilon \}$ for some $0 < \varepsilon < 1$.
Clearly, $\cQ$ does not satisfy $\PropC$.
For any $Q \in \cQ$, $\rho(r,Q) - \rho(r',Q) = q_3 - q_4 > \varepsilon$, and therefore,
$\min_{Q \in \cQ} \rho(r',Q) < \min_{Q \in \cQ} \rho(r,Q)$. Thus, the monotone function
$r$ dominates the LDRF, $r'$. Hence, LDRS isn't optimal because $r'$ could be chosen.
\end{example}

\section{The Omniscient Adversary: Games, Strategies and Bounds}
\label{sec:games}

We next turn our attention to the power of the adversary, an issue that hasn't been emphasized
in the SCC literature, but has crucial impact on the relevancy of SCC solutions in adversarial applications.
For example, when considering intrusion detection
\cite<see, e.g.,>{LazarevicEKOS03},
it is necessary to assume that the ``attacking distribution'' has some worst-case characteristics
and it is important to quantify precisely what the adversary knows or can do.
The simple observation in this setting is that an \emph{omniscient and unconstrained adversary}, who knows all
parameters of the game including the learner's strategy, would completely
demolish the learner who uses hard strategies.
By using a soft strategy, the learner can achieve the slightly better result of $1-\delta$
type II error (false negative rate). In either case, the presence of such a powerful adversary makes
the SCC problem trivial and the resulting rejection function is practically worthless.
These simple results are developed in Section~\ref{sec:tooPowerfull}.

We therefore consider an omniscient but limited adversary.
In seeking a useful and quantifiable constraint on $\cQ$ it is
helpful to recall that the essence of the SCC problem is to try to distinguish between
two probability distributions (albeit one of them unknown).
A natural constraint is a lower bound on the ``distance'' between these distributions.
Indeed, it is immediately obvious that if $P \in \cQ$, the adversary can always
achieve the maximal type II error of $1 - \delta$ by selecting $Q = P$.
Following similar results in hypothesis testing
\cite<see>[Chapt. 12]{CoverT91},
we could consider games in which the adversary must select $Q$ such that
$D(P||Q) \geq \Lambda$, for some constant $\Lambda > 0$, where $D(\cdot||\cdot)$ is the KL-divergence;
that is, $D(P||Q) \eqdef \sumn \klpart{p_i}{q_i}$ \cite{CoverT91}.
Unfortunately, this constraint is vacuous since
$D(P||Q)$
``explodes'' when
$q_i \ll p_i$ (for any $i$).
In this case
the adversary
can optimally play the same strategy as in the unrestricted game while meeting the KL-divergence
constraint.
Fortunately, by taking $D(Q||P) \geq \Lambda$, we can effectively constrain the adversary.\footnote{Under
the investment analogy, requiring that $D(P||Q)$ be large is equivalent to requiring a small ``average" value
for $\frac{q_i}{p_i}$ (giving the learner poor investment opportunities). On the other hand, requiring that $D(Q||P)$
be large is equivalent to requiring that the ``average'' value of $\frac{q_i}{p_i}$ be sufficiently large
(providing the learner with good investment opportunities, and potentially increasing the value of $\rho(r,Q)$).}
Instead of only considering the KL-divergence we consider adversary constraints
using a large family of divergences that include the KL-divergence, the $L_2$ norm and various
Bregman divergences. Definitions~\ref{def:twoSymmetric} and~\ref{def:receding} characterize this family.

One of our main contributions is a complete analysis of
this constrained game in Section~\ref{sec:adversaryConstrained},
including identification of the optimal strategy for the learner and the adversary,
as well as the best achievable false negative rate.
The optimal learner strategy  and best achievable rate
are obtained via a solution of a linear program
specified in terms of the problem parameters.
These results are immediately applicable as \emph{lower bounds} for standard (finite-sample)
SCC problems, but may also be used to
inspire new types of algorithms for standard SCC.
While we do not have a closed form expression for the best achievable false-negative rate,
we provide a few numerical examples demonstrating and comparing
the optimal ``hard'' and ``soft'' performance.

\subsection{Unrestricted Adversary}
\label{sec:tooPowerfull}
In the first game we analyze an adversary who is completely unrestricted. This means that
$\cQ$ is the set of all distributions. Unsurprisingly, this game leaves little opportunity
for the learner. For any rejection function $r(\cdot)$, define $r_{min} \eqdef \min_i r(i)$ and
 $I_{min}(r) \eqdef \{i: r(i) = r_{min}\}$.
For any distribution $D$,
$\rho(r,D) = \sumn{d_i r(i)} \geq \sumn{d_i r_{min}} = r_{min}$, in particular,
$\delta = \rho(r,P) \geq r_{min}$ and $\min_Q \rho(r,Q) \geq r_{min}$.
By choosing $Q$ such that $q_i = 1$ for some  $i \in I_{min}(r)$,
the adversary can achieve $\rho(r,Q) = r_{min}$
(the same rejection rate is achieved by taking any $Q$ with
$q_i = 0$ for all $i \not\in I_{min}(r)$).
In the soft setting, $\min_Q \rho(r,Q)$ is maximized by the rejection function
$r^\delta(i) \eqdef \delta$ for all $p_i > 0$ ($r^\delta(i) \eqdef 1$ for all $p_i = 0$).
This is equivalent to flipping a $\delta$-biased coin for non-null events (under $P$).
The best achievable type II error is $1 - \delta$. In the hard setting,
clearly $r_{min} = 0$ (otherwise $1 > \delta \geq 1$), and the best  achievable type II error is
precisely 1. That is, absolutely nothing can be achieved.

This simple analysis shows the futility of the SCC game when the adversary is too powerful.
In order to consider SCC problems at all one must consider reasonable restrictions on the adversary
that lead to more useful games. One type of such a restriction would be to limit the adversary's knowledge of
$r(\cdot)$, $P$ and/or of $\delta$.
Another type would be to directly limit the strategic choices
available to the adversary.
We note that the former type of restriction doesn't affect the best achievable
type II error, and thus in the next section we will focus on the latter.

\subsection{An Omniscient, but Constrained, Adversary}
\label{sec:adversaryConstrained}
While we could therefore define $\cQ = \cQ_{\Lambda} \eqdef \{ Q :  D(Q||P) \geq \Lambda \}$,
we instead will consider a more general family. First,
let $\cX$ be the $N$-dimensional simplex:
$\cX \eqdef \{(x_1, \dots, x_N) : x_i \geq 0, \sumn x_i = 1 \}$.
For convenience, we now define a transfer function, $t(X,a,b) \to \cX$, where
$X \in \cX$, and $a$ and $b$ are indices in $\{1, \dots, N\}$, which transfers probability from event $b$ to event $a$, as:
\begin{align*}
t(X,a,b)_i = \begin{cases}
    x_a + x_b& i=a,\\
    0& i=b,\\
    x_i& \text{otherwise} .
    \end{cases}
\end{align*}

\begin{definition}[2-Symmetric]
\label{def:twoSymmetric}
A function $f_P: \cX \to \Real$, is called \emph{2-symmetric}
if for all $X \in \cX$
and for all $j,k$ such that $p_j = p_k$, $f_P\left(t(X,j,k)\right) = f_P\left(t(X,k,j)\right)$.
\end{definition}
\begin{remark}
\label{rem:bregman1}
We note that a Bregman divergence (defined over $[0,1]^N$) may be 2-symmetric. Specifically,
define $D_P(Q) = B_F(Q||P) \eqdef F(Q) - F(P) - \nabla F(P) \cdot (Q - P)$.
Let $\Delta_F \eqdef F(t(Q,j,k)) - F(t(Q,k,j))$.
Then, the divergence is 2-symmetric if:
\begin{align*}
0 = D_P(t(Q,j,k)) - D_P(t(Q,k,j)) =& \Delta_F - \nabla F(P) \cdot \left(t(Q,j,k) - t(Q,k,j)\right)\\
  =& \Delta_F + (q_j + q_k)\left(\pderiv{F(P)}{x_k} - \pderiv{F(P)}{x_j}\right).
\end{align*}
We note that if $F(X) = \sumn f(x_i)$, where $f(\cdot)$ is a strictly convex function,
then clearly the Bregman divergence is 2-symmetric.
\end{remark}

\begin{definition}[Receding]
\label{def:receding}
A function $f_P: \cX \to \Real$, is called \emph{receding}
if for all $X \in \cX$, $p_j < p_k$ and $x_k > 0$, $f_P(t(X,j,k)) > f_P(X)$.
A receding function $D_P: \cX \to \Real$ is called a \emph{receding divergence} if
it is defined over the domain $[0,1]^N$, it is differentiable over $(0,1)^N$ and is strictly convex.
\end{definition}

\begin{remark}
\label{rem:bregman2}
We note that a Bregman divergence may be a receding divergence, as well. Specifically,
define $D_P(Q) = B_F(Q||P) \eqdef F(Q) - F(P) - \nabla F(P) \cdot (Q - P)$. This trivially
meets the differentiability and strict convexity requirements.
Let us examine if it is receding.
Let $p_j < p_k$, $q_k > 0$ and let $\Delta \eqdef t(Q,j,k) - Q$. Then, in order to satisfy the property:
\begin{align*}
0 < D_P(t(Q,j,k)) - D_P(Q) =& F(Q + \Delta) - F(Q) - \nabla F(P) \cdot \Delta\\
=& F(Q + \Delta) - F(Q) +
q_k\left(\pderiv{F(P)}{x_k} - \pderiv{F(P)}{x_j}\right).
\end{align*}
We note that if $F(X) = \sumn f(x_i)$, where $f(\cdot)$ is a strictly convex function,
then $F(t(X,j,k)) = F(t(X,k,j))$ for all $j,k$, and thus, by convexity:
\begin{align*}
F(Q + \Delta) - F(Q) = F(t(Q,j,k)) - F(Q) \geq 0\\
\pderiv{F(P)}{x_k} - \pderiv{F(P)}{x_j} = f'(p_k) - f'(p_j) > 0.
\end{align*}
Thus, Bregman divergences which are of this form, such as the squared Euclidean distance $D_P(Q) = ||Q - P||^2$
and the KL-Divergence, are also (2-symmetric) receding divergences. Note that this condition is sufficient and not necessary.
It is certainly possible for Bregman divergences which are not of this form to be receding divergences as well.
\end{remark}

We define $\cQ_\Lambda \eqdef \{Q : D_P(Q) \geq \Lambda\}$, where $D_P(\cdot)$ is a 2-symmetric receding divergence.
We say that a distribution $Q$ \emph{meets the divergence constraint} if $D_P(Q) \geq \Lambda$.
As we will shortly see, this is consistent with an adversary that can't eavesdrop on the user,
as the constraint prevents the adversary from selecting distributions which are only concentrated on high-probability events under $P$.
\begin{lemma}
\label{lem:gd_hasPropsABC}
$\cQ_{\Lambda}$ possesses Properties $A$ and $B$ w.r.t.~$P$.
\end{lemma}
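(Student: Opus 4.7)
The plan is to use, in every case of both properties, the same explicit witness $Q' := t(Q, j, k)$, which transfers all mass from event $k$ to event $j$. Because $t$ preserves the probability simplex and leaves all coordinates outside $\{j,k\}$ untouched, $Q'$ is automatically a distribution and $q'_i = q_i$ for $i \neq j,k$, so the coordinate-agreement clauses of $\PropA$ and $\PropB$ are free. The entire job reduces to two items: (i) a trivial inequality at the two coordinates $j,k$, and (ii) verifying $D_P(Q') \geq \Lambda$ so that $Q' \in \cQ_\Lambda$.

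For $\PropA$, suppose $p_j < p_k$ and $Q \in \cQ_\Lambda$ with $q_j < q_k$. Then $q_k > q_j \geq 0$, so the receding property applied to $Q$ (with indices $j,k$) gives $D_P(Q') > D_P(Q) \geq \Lambda$, putting $Q' \in \cQ_\Lambda$. The required $q_j + q'_j \geq q_k + q'_k$ collapses to $2q_j + q_k \geq q_k$, which is immediate. For $\PropB$ when $p_j < p_k$ the same witness works: the hypothesis $q_j/p_j < q_k/p_k$ forces $q_k > 0$ (otherwise both ratios vanish and no strict inequality with a nonnegative $q_j/p_j$ is possible), so receding again yields $Q' \in \cQ_\Lambda$, and $q'_j/p_j = (q_j + q_k)/p_j \geq 0 = q'_k/p_k$.

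The only delicate sub-case is $\PropB$ with $p_j = p_k$, where receding is silent. Here I combine 2-symmetry with the strict convexity built into the definition of a receding divergence. Consider the affine path $Q(s) \in \cX$ given by $Q(s)_j = s$, $Q(s)_k = q_j + q_k - s$, and $Q(s)_i = q_i$ otherwise, for $s \in [0, q_j+q_k]$. Since $D_P$ is strictly convex and $Q(\cdot)$ traces a non-degenerate line segment, $g(s) := D_P(Q(s))$ is strictly convex in $s$. Observe that $Q(0) = t(Q,k,j)$, $Q(q_j+q_k) = t(Q,j,k) = Q'$, and $Q(q_j) = Q$; because $p_j = p_k$, 2-symmetry gives $g(0) = g(q_j+q_k)$. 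Strict convexity together with equal endpoint values forces $g(s) \leq g(q_j+q_k)$ on the entire interval, so evaluating at $s = q_j$ yields $D_P(Q') = g(q_j+q_k) \geq g(q_j) = D_P(Q) \geq \Lambda$. The remaining ratio condition reduces to $q_j + q_k \geq 0$. The main obstacle is precisely this $p_j = p_k$ step: the receding axiom gives nothing, and one must squeeze $D_P(Q')$ against $D_P(Q)$ from above by marrying 2-symmetry (to equate the two transfer endpoints) with the strict convexity inherent in a receding divergence.
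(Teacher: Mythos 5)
Your proof is correct and follows essentially the same route as the paper's: the same single witness $Q' = t(Q,j,k)$, the receding property for the $p_j < p_k$ cases, and 2-symmetry combined with convexity along the transfer segment for the $p_j = p_k$ case of $\PropB$ (which the paper states in one line and you spell out; plain convexity already suffices there). The coordinate inequalities at $j,k$ match the paper's verbatim.
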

\begin{proof}
Let $j,k$ be such that $p_j \leq p_k$. For any distribution $Q \in \cQ_{\Lambda}$ we define $Q' = t(Q,j,k)$.
If $p_j < p_k$, then since $D_P(\cdot)$ is receding, $D_P(Q') \geq D_P(Q) \geq \Lambda$.
Otherwise, if $p_j = p_k$, since $D_P(\cdot)$ is 2-symmetric and convex, $D_P(Q') \geq D_P(Q) \geq \Lambda$.
Thus, in either case, $Q' \in \cQ_{\Lambda}$.
If $Q$ is such that $q_j < q_k$, then $q'_j + q_j = 2q_j + q_k \geq q_k = q'_k + q_k$, and
$\cQ_{\Lambda}$ has $\PropA$. If $Q$ is such that $\frac{q_j}{p_j} < \frac{q_k}{p_k}$, then
$\frac{q'_j}{p_j} = \frac{q_j+q_k}{p_j} \geq 0 = \frac{q'_k}{p_k}$ and $\cQ_{\Lambda}$ possesses
$\PropB$.
\end{proof}
Therefore, by Theorems~\ref{thm:hard-decision}
and~\ref{thm:chr-monotonic-equality}
there exists a (strictly) monotone $r \in \cRdeltastar$ in the hard (respectively, soft)
setting. If $Q_\Lambda$ has $\PropC$ as well, then by Theorem~\ref{thm:LDRS_optimal} any $\delta$-valid
LDRF is hard-optimal. It is easy to verify that Bregman divergences of the form described in Remark~\ref{rem:bregman2}
possess $\PropC$.

We now define $X^{(j)}$ as the distribution which is completely concentrated on event $j$. In
other words $x^{(j)}_i \eqdef \ind(i=j)$, where $\ind(\cdot)$ is the indicator function.
We assume that $0 < p_1 \leq p_2 \leq \dots \leq p_N$. Therefore, since $D_P(\cdot)$ is receding,
$D_P\left(X^{(1)}\right) \geq D_P\left(X^{(2)}\right) \geq \dots \geq D_P\left(X^{(N)}\right)$.
Therefore if $D_P\left(X^{(N)}\right) \geq \Lambda$, then any $Q$ that is concentrated on a single event
meets the constraint $D_P(Q) \geq \Lambda$. Then, the adversary can play the same strategy as in
the unrestricted game, and the learner should select $r^\delta$ as before.
For the game to be non-trivial it is thus required that $\Lambda > D_P\left(X^{(N)}\right)$.
Similarly, if the optimal $r$ is such that there exists $j \in I_{min}(r)$ (that is $r(j) = r_{min}$) and
$D_P\left(X^{(j)}\right) \geq \Lambda$, then a distribution $Q$ that is completely concentrated on $j$
has $D_P(Q) \geq \Lambda$ and achieves $\rho(r,Q) = r_{min}$, as in the unrestricted game.
Therefore, $r = r^\delta$, and so maximizes $r_{min}$. This yields the following definition:
\begin{definition}
A rejection function $r$ is called \emph{vulnerable} if there exists $j \in I_{min}(r)$
such that $D_P\left(X^{(j)}\right) \geq \Lambda$.
\end{definition}

We begin our analysis of the game by identifying some useful characteristics of
optimal adversary strategies under the assumption that the chosen rejection function isn't vulnerable.
These properties, that are stated in Lemma~\ref{lem:nature-gd-characteristics},
are then used to prove Theorem~\ref{thm:gd-2-points} showing that the effective support of an optimal
$Q$ has a size of two at most. Based on these properties, we provide in Theorem~\ref{thm:linear_program}
a linear program that computes an optimal rejection function (under the assumption that it
isn't vulnerable).Finally, in Lemma~\ref{lem:lin-prog-delta}
we show that the solution computed by the linear program is $r^\delta$ if it is vulnerable,
giving optimal (though trivial) performance. Thus, in any case, the output of the linear program is optimal.

If $\Lambda > D_P\left(X^{(1)}\right)$, then no adversary distribution can meet
the divergence constraint. We therefore
limit ourselves to cases where $\Lambda \leq D_P\left(X^{(1)}\right)$.
We can now divide
the events in $\Omega$ into two groups: $H$ and $L$, such that $H = \{ i : D_P\left(X^{(i)}\right) < \Lambda\}$ and
$L = \Omega \setminus H$.
We note that the assumption that $r$ isn't vulnerable implies that $I_{min}(r) \subseteq H$.
By definition, $\forall h \in H,l \in L$, we have that $p_h > p_l$.

\begin{lemma}
\label{lem:prob-in-Lo}
If $Q$ meets the divergence constraint, there exists an event $i \in L$ for which $q_i > 0$.
\end{lemma}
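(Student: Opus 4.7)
The plan is to proceed by contradiction and exploit the (strict) convexity of the receding divergence $D_P$, together with the defining property of the partition $\Omega = H \cup L$ in which $H = \{i : D_P(X^{(i)}) < \Lambda\}$ collects exactly those point masses whose divergence from $P$ is below the threshold.

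First I would assume, toward a contradiction, that $Q$ meets the divergence constraint yet satisfies $q_i = 0$ for every $i \in L$. Since $Q \in \cX$, we may then write
\[
Q = \sum_{i \in H} q_i X^{(i)}, \qquad \sum_{i \in H} q_i = 1,
\]
expressing $Q$ as a convex combination of the point-mass distributions $X^{(i)}$ for $i \in H$.

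Next I would apply convexity of $D_P$ (guaranteed by the fact that $D_P$ is a receding divergence, hence strictly convex on $[0,1]^N$) via Jensen's inequality to obtain
\[
D_P(Q) \;\leq\; \sum_{i \in H} q_i \, D_P\!\left(X^{(i)}\right).
\]
By the very definition of $H$, each term $D_P(X^{(i)})$ with $i \in H$ is strictly less than $\Lambda$, so
\[
\sum_{i \in H} q_i \, D_P\!\left(X^{(i)}\right) \;<\; \Lambda \sum_{i \in H} q_i \;=\; \Lambda,
\]
yielding $D_P(Q) < \Lambda$. This contradicts the hypothesis that $Q$ meets the divergence constraint $D_P(Q) \geq \Lambda$, and hence some $i \in L$ must receive positive mass.

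The argument is short and largely mechanical; there is no real obstacle beyond making sure the convexity of $D_P$ is invoked correctly. The only subtlety worth noting is that one must use the strict inequality $D_P(X^{(i)}) < \Lambda$ on $H$ (to ensure the convex combination stays strictly below $\Lambda$), but this is immediate from the definition of $H$ and does not require strict convexity of $D_P$ itself.
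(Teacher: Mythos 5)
Your proof is correct, but it takes a genuinely different route from the paper's. The paper's argument never invokes convexity: it picks the smallest event $j$ in $H$ (the one with minimal $p_j$, hence maximal $D_P(X^{(j)})$ among $H$) and uses the \emph{receding} property alone --- transferring all of $Q$'s mass onto $j$ can only increase the divergence, so $D_P(Q) \leq D_P\bigl(X^{(j)}\bigr) < \Lambda$, a contradiction. You instead decompose $Q = \sum_{i \in H} q_i X^{(i)}$ and apply Jensen's inequality, using only the (non-strict) convexity of $D_P$ and the strict inequality $D_P(X^{(i)}) < \Lambda$ on $H$; your closing remark that strict convexity is not needed is accurate. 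Both properties are baked into the definition of a receding divergence, so both arguments are legitimate, and they yield essentially the same bound (since the events are ordered so that the smallest event in $H$ maximizes $D_P(X^{(i)})$ over $H$, the paper's bound $D_P(X^{(j)})$ coincides with the sup over the support that your weighted average is dominated by). The trade-off is which half of the definition each proof leans on: yours would survive for a convex constraint function that is not receding, while the paper's would survive for a receding function that is not convex; within the paper's framework neither generalization is exploited, so the two proofs are interchangeable.
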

\begin{proof}
Let us assume that $q_i = 0$ for all $i \in L$. Let $j$ be the smallest event in $H$. Since
$D_P(\cdot)$ is receding, $D_P(Q) \leq D_P\left(X^{(j)}\right) < \Lambda$. Contradiction.
\end{proof}

\begin{lemma}
\label{lem:nature-gd-characteristics}
Let $r$ be a rejection function which isn't vulnerable. If $Q$ meets the divergence constraint and minimizes $\rho(r,Q')$:
\renewcommand{\theenumi}{\roman{enumi}}
\begin{enumerate}
    \item $D_P(Q) = \Lambda$;
    \item Let $u,v$ be two indices in $\{1, \dots, N\}$. Define $Q'' = t(Q,u,v)$.
    If $q_v > 0$ and $D_P(Q'') \geq \Lambda$, then $r(u) \geq r(v)$. Furthermore, $r(u) = r(v) \Rightarrow D_P(Q'') = \Lambda$;
    \item $p_j < p_k$ and $q_k > 0 \Rightarrow r(j) > r(k)$;
    \item $p_j < p_k$ and $q_j, q_k > 0 \Rightarrow \pderiv{D_P(Q)}{x_j} > \pderiv{D_P(Q)}{x_k}$;
    \item $q_j, q_k > 0 \Rightarrow p_j \neq p_k$;
    \item $p_j < p_k$ and $q_j > 0 \Rightarrow D_P(Q) > D_P(t(Q,k,j))$.
\end{enumerate}
\end{lemma}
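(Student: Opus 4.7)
The plan is to prove (i)--(vi) in order, each leveraging the earlier facts; the only step requiring real work is (iv), which I would handle with a KKT argument. For (i), suppose for contradiction $D_P(Q) > \Lambda$. Since $r$ is not vulnerable, every $i \in I_{\min}(r)$ lies in $H$, and by Lemma~\ref{lem:prob-in-Lo} some $\ell \in L$ has $q_\ell > 0$; because $\ell \notin I_{\min}(r)$, $r(\ell) > r_{\min} = r(i)$. Transferring a small mass $\varepsilon$ from $\ell$ to $i$ strictly reduces $\rho(r, \cdot)$ while, by continuity of $D_P$, preserving $D_P > \Lambda$ for $\varepsilon$ small enough, contradicting optimality. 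For (ii), $\rho(r,Q'') - \rho(r,Q) = q_v(r(u)-r(v))$, so $r(u) < r(v)$ would beat $Q$, whereas equality makes $Q''$ another optimum and (i) then forces $D_P(Q'') = \Lambda$. Property (iii) is immediate from (ii) applied with $Q''=t(Q,j,k)$: the receding property gives $D_P(Q'') > D_P(Q)=\Lambda$, so $r(j) \geq r(k)$, and $r(j)=r(k)$ would contradict $D_P(Q'') > \Lambda$ via the equality case of (ii).

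The main obstacle is (iv). I would set up the Lagrangian for the constrained minimization $\min \sum q_i r(i)$ subject to $Q$ in the simplex and $D_P(Q) \geq \Lambda$, using the differentiability and strict convexity of $D_P$ in the interior. The stationarity condition at any $i$ with $q_i > 0$ reads $r(i) + \mu = \lambda\, \pderiv{D_P(Q)}{x_i}$ with $\lambda \geq 0$. The delicate point is to rule out $\lambda = 0$: otherwise $r$ would be constant on $\mathrm{supp}(Q)$ at its unconstrained minimum $r_{\min}$, which forces $\mathrm{supp}(Q) \subseteq I_{\min}(r) \subseteq H$ by non-vulnerability, contradicting Lemma~\ref{lem:prob-in-Lo}. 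With $\lambda > 0$, subtracting the stationarity identities at $j$ and $k$ gives $\pderiv{D_P(Q)}{x_j} - \pderiv{D_P(Q)}{x_k} = (r(j) - r(k))/\lambda$, which is strictly positive by (iii).

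For (v), suppose $p_j = p_k$ with $q_j, q_k > 0$ and parameterize along the line $q_j(\theta) = \theta s$, $q_k(\theta) = (1-\theta) s$ with $s = q_j + q_k$ and the other coordinates frozen, so $Q = Q_{\theta_0}$ with $\theta_0 = q_j/s \in (0,1)$. The 2-symmetry of $D_P$ makes $\theta \mapsto D_P(Q_\theta)$ symmetric about $\theta = 1/2$, and strict convexity then forces $D_P(Q_0) = D_P(Q_1) > D_P(Q_{\theta_0}) = \Lambda$. A case split on $r(j)$ versus $r(k)$ yields a contradiction: $r(j) > r(k)$ means $Q_0$ beats $Q$ in $\rho$; $r(j) < r(k)$ means $Q_1$ does; $r(j) = r(k)$ makes $Q_0$ another optimum, for which (i) demands $D_P(Q_0) = \Lambda$. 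For (vi), set $Q^* = t(Q,k,j)$. If $q_k = 0$, one application of the receding property to $Q^*$ (whose $k$-th coordinate equals $q_j > 0$) gives $D_P(Q) = D_P(t(Q^*, j, k)) > D_P(Q^*)$ directly. If $q_k > 0$, (iii) gives $r(j) > r(k)$, so $\rho(r, Q^*) < \rho(r, Q)$, and optimality of $Q$ then forbids $D_P(Q^*) \geq \Lambda$, yielding $D_P(Q^*) < \Lambda = D_P(Q)$.
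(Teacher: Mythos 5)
Parts (i), (ii), (iii), (v) and (vi) of your proposal are correct and essentially coincide with the paper's proof: (i) via a mass transfer from an $L$-event to an $I_{min}(r)$-event (the paper transfers until $D_P$ hits $\Lambda$, you transfer a small $\varepsilon$ and invoke continuity --- both work), (ii) and (iii) are the same computations, (vi) is identical, and your (v) is the paper's argument in a parameterized form. One cosmetic caveat on (v): 2-symmetry as defined only gives $D_P(Q_0)=D_P(Q_1)$ at the two endpoints $t(Q,k,j)$ and $t(Q,j,k)$; it does not by itself make $\theta\mapsto D_P(Q_\theta)$ symmetric about $\theta=\tfrac12$. Since you only use the endpoint equality plus strict convexity on the segment, the conclusion stands, but the intermediate claim overreaches the definition.

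The genuine gap is in (iv). You assert the KKT stationarity condition $r(i)+\mu=\lambda\,\pderiv{D_P(Q)}{x_i}$ on the support of $Q$ without verifying any constraint qualification, and here that is not a formality: the feasible set $\{Q: D_P(Q)\geq\Lambda\}$ is the complement of a convex set, so neither Slater nor convexity arguments apply, and one must check LICQ/MFCQ. The active constraint gradients are $\mathbf{1}$, $\nabla D_P(Q)$, and the coordinate vectors $e_i$ for $i\notin\supp(Q)$; these are linearly independent precisely when $\nabla D_P(Q)$ restricted to $\supp(Q)$ is \emph{not} constant. In the degenerate case (Fritz John with zero multiplier on the objective) one gets exactly ``$\pderiv{D_P(Q)}{x_i}$ constant on $\supp(Q)$,'' which is the very situation part (iv) must exclude --- so invoking KKT here is circular. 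The paper avoids this entirely with a one-step first-order argument: if $\pderiv{D_P(Q)}{x_j}\leq\pderiv{D_P(Q)}{x_k}$, then for $Q'=Q+\epsilon(X^{(k)}-X^{(j)})$ convexity gives $D_P(Q')\geq D_P(Q)+\epsilon\left(\pderiv{D_P(Q)}{x_k}-\pderiv{D_P(Q)}{x_j}\right)\geq\Lambda$, so $Q'$ is feasible, while $\rho(r,Q')=\rho(r,Q)+\epsilon(r(k)-r(j))<\rho(r,Q)$ by part (iii) --- contradiction. You should replace your Lagrangian step with this perturbation (or first prove non-constancy of the gradient on the support, which amounts to the same argument).
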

\begin{proof}
\renewcommand{\theenumi}{\roman{enumi}}
\begin{enumerate}
  \item Assume that $D_P(Q) > \Lambda$. By Lemma~\ref{lem:prob-in-Lo}
   there exists a non-empty set $L_Q \eqdef \{ i \in L \; | \; q_i > 0\}$.
  Let $h_{max} = \argmax_{i \in I_{min}(r)}q_i$. Clearly, $h_{max} \in H$. We define a new distribution $Q^*$,
  which is identical to $Q$ except that probability is transferred from events in $L_Q$ to $h_{max}$, in order to make
  $D_P(Q^*) = \Lambda$ (this is possible, since $D_P(\cdot)$ is continuous and, by Lemma~\ref{lem:prob-in-Lo},
  transferring all probability
  from $L_Q$ to $h_{max}$ would result in $D_P(\cdot) < \Lambda$). Since transferring any probability from $i \in L_Q$ to
  $h_{max}$ results in making $\rho(r,Q)$ smaller, $\rho(r,Q^*) < \rho(r,Q)$, contradicting
  the fact that $Q$ minimizes $\rho(r,Q')$.

  \item We note that $\rho(r,Q'') = \rho(r,Q) - q_v (r(v) - r(u))$. Since $\rho(r,Q)$ is minimal
  and $D_P(Q'') \geq \Lambda$
  it follows that $r(u) \geq r(v)$. If $r(u) = r(v)$ then $\rho(r,Q'') = \rho(r,Q)$,
  and by part (i), $D_P(Q'') = \Lambda$.

  \item By part (ii), taking $u=j$ and $v=k$ we trivially get $r(j) \geq r(k)$.
  Furthermore, since $p_u = p_j < p_k = p_v \Rightarrow D_P(Q'') > \Lambda$, $r(j) \neq r(k)$. Thus, $r(j) > r(k)$.

  \item
Assume, contradictorily,  that $\pderiv{D_P(Q)}{x_j} \leq \pderiv{D_P(Q)}{x_k}$.
Let $0 < \epsilon \leq \min\{q_j, q_k\}$. We define $\epsilon_{j,k} = \epsilon\left(X^{(k)} - X^{(j)}\right)$. Then, by convexity:
\begin{align*}
D_P(Q + \epsilon_{j,k}) \geq& \ D_P(Q) + \nabla D_P(Q) \cdot \epsilon_{j,k}\\
    =& \ D_P(Q) + \epsilon\left(\pderiv{D_P(Q)}{x_k} - \pderiv{D_P(Q)}{x_j}\right)\\
    \geq& \ D_P(Q).
\end{align*}
Therefore, by defining $Q' = Q + \epsilon_{j,k}$, we have that $D_P(Q') \geq D_P(Q) \geq \Lambda$.
  Furthermore, by part (iii), $r(j) > r(k)$. Therefore,
  $\rho(r,Q') = \rho(r,Q) + \epsilon(r(k) - r(j)) < \rho(r,Q)$. Contradiction.

  \item Assume that $p_j = p_k$. We consider two cases. In the first case, $r(j) < r(k)$,
        w.l.o.g.
        By defining $u=j$, $v=k$, from part (ii) we get that $r(j) \geq r(k)$, which is a contradiction.
        In the second case, $r(j) = r(k)$. However, since both $q_j$ and $q_k$ are greater
        than zero, defining $u=j$ and $v=k$ in part (ii) gives us that $D_P(Q'') > \Lambda$, which
        is again a contradiction.

  \item  If $q_k = 0$ then $Q = t(t(Q,k,j),j,k)$ and $D_P(Q) > D_P(t(Q,k,j))$. Otherwise, $q_k > 0$
  and by part (iii), $r(j) > r(k)$. If we assume contradictorily that $D_P(t(Q,k,j)) \geq D_P(Q) = \Lambda$,
  then by part (ii), taking $u = k$ and $v = j$, $r(k) \geq r(j)$. Contradiction.

\end{enumerate}
\end{proof}

\begin{theorem}
\label{thm:gd-2-points}
If $r$ isn't vulnerable, then any optimal adversarial strategy  $Q$ has an effective support of size at most two.
\end{theorem}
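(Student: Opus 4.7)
I argue by contradiction. Suppose $Q$ is optimal (so, in particular, it satisfies all parts (i)--(vi) of Lemma~\ref{lem:nature-gd-characteristics}) and its effective support has size at least three. By part (v) the $p$-values on the support of $Q$ are pairwise distinct, so I may pick three events $a,b,c$ in $\supp(Q)$ with $p_a < p_b < p_c$. Abbreviating $\partial_i \eqdef \pderiv{D_P(Q)}{x_i}$, parts (iii) and (iv) force the two strict chains
\begin{equation*}
r(a) > r(b) > r(c), \qquad \partial_a > \partial_b > \partial_c.
\end{equation*}
The whole strategy is to exhibit a small perturbation of $Q$, supported on $\{a,b,c\}$, that keeps $Q$ a probability vector, preserves $D_P(Q) \geq \Lambda$, and strictly decreases $\rho(r,Q)$, contradicting optimality.

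The natural candidate is a vector $v$ supported on $\{a,b,c\}$ obeying $\sum_i v_i = 0$ (mass-preserving) and $\sum_i \partial_i v_i = 0$ ($D_P$ stationary to first order). These are two independent linear conditions on three unknowns, so there is a one-dimensional family of solutions and I may take $v\neq 0$. Eliminating $v_b,v_c$ in terms of $v_a$, a direct computation gives
\begin{equation*}
r\cdot v \;=\; \frac{v_a}{\partial_b-\partial_c}\bigl[(r(a)-r(b))(\partial_b-\partial_c)-(r(b)-r(c))(\partial_a-\partial_b)\bigr].
\end{equation*}
If the bracket is nonzero, choose the sign of $v_a$ so that $r\cdot v<0$. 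Then $\rho(r,Q+\epsilon v)$ drops by order $\epsilon$, while strict convexity of $D_P$ (part of the definition of a receding divergence) gives $D_P(Q+\epsilon v) = \Lambda + \tfrac{1}{2}\epsilon^2 v^{\top}H(Q)v + O(\epsilon^3) > \Lambda$ for small $\epsilon>0$, since the Hessian $H(Q)$ is positive definite and $v\neq 0$. Nonnegativity is preserved because $q_a,q_b,q_c>0$. This contradicts the optimality of $Q$.

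The remaining case---first-order cancellation, i.e.~the bracket vanishes---is what I expect to be the main obstacle. Along $\epsilon v$ the objective is now \emph{exactly} unchanged, since $\rho$ is linear and $r\cdot v=0$, so first order in $\epsilon$ gives no decrease; yet strict convexity still buys $\Theta(\epsilon^2)$ slack in $D_P$. I exploit this slack by adding a secondary perturbation $\eta(e_c-e_a)$: it decreases $\rho$ by $\eta(r(a)-r(c))>0$ and costs only $\eta(\partial_a-\partial_c)$ in divergence at first order. Choosing $\eta=\Theta(\epsilon^2)$ small enough that the quadratic divergence gain from $\epsilon v$ dominates the linear divergence loss from $\eta(e_c-e_a)$ keeps $D_P(Q') \geq \Lambda$ and still strictly reduces $\rho$, again contradicting optimality. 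This two-stage argument is the only step where the full strength of a receding divergence (strict convexity plus differentiability) is needed; the monotonicity-style parts of Lemma~\ref{lem:nature-gd-characteristics} alone would not furnish the second-order slack that the cancellation case requires.
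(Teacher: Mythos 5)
Your proof is correct, but it takes a genuinely different route from the paper's. The paper argues via Lagrange duality: it shows that the constraint gradients at $Q^*$ are linearly independent (using parts (iv)--(v) of Lemma~\ref{lem:nature-gd-characteristics}), extracts the unique multiplier $\lambda_1<0$ from the KKT conditions, concludes that the Lagrangian is \emph{strictly concave} so $Q^*$ is its unique global maximum, and then exhibits (by an implicit-function-style argument on three coordinates) a second point $\tilde Q$ with the same support on the constraint surface, forcing $\rho(r,\tilde Q)<\rho(r,Q^*)$. You instead run a direct two-stage perturbation: a first-order-neutral direction $v$ in the tangent space of the two equality constraints, followed by a case split on whether $r\cdot v$ vanishes, with the cancellation case resolved by spending the strict-convexity slack on a secondary mass transfer from $a$ to $c$. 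Note that for a genuinely optimal $Q$ the KKT conditions must hold, so the paper is implicitly always in your ``cancellation'' case; your bracket-nonzero case is a self-contained way of ruling out KKT failure, and your argument is more elementary in that it never needs the uniqueness of the multiplier or global concavity of the Lagrangian. What the paper's route buys is a cleaner uniform treatment (no case split) and the reusable fact that $Q^*$ globally maximizes $L$ over the constraint surface.

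One point needs repair, though it is cosmetic rather than structural: a receding divergence is only assumed \emph{differentiable} and strictly convex, not twice differentiable, so the expansion $D_P(Q+\epsilon v)=\Lambda+\tfrac12\epsilon^2 v^{\top}H(Q)v+O(\epsilon^3)$ and the ``$\Theta(\epsilon^2)$ slack'' claim are not licensed. Fortunately you do not need them: strict convexity of a differentiable function gives the strict first-order inequality $D_P(Q+\epsilon v) > D_P(Q)+\epsilon\,\nabla D_P(Q)\cdot v=\Lambda$ for every $\epsilon\neq 0$ small enough to stay in the simplex, which is all the first stage requires; and in the cancellation case you may simply fix $\epsilon$, call the resulting strictly positive slack $s$, and choose $\eta$ small enough (relative to $s$ and to $\pderiv{D_P(Q+\epsilon v)}{x_a}-\pderiv{D_P(Q+\epsilon v)}{x_c}$) that the convexity lower bound keeps $D_P\geq\Lambda$. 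With that substitution the proof is complete.
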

\begin{proof}
Let us assume, by contradiction, that the theorem's statement is wrong; that is, there exists an optimal $Q^*$
that has $J > 2$ events for which $q^*_i \neq 0$. W.l.o.g. we rename our events such that
these are the first $J$ events. We note that $Q^*$ is a
solution (i.e., global minimum) to the following problem ($*$):
\begin{align*}
  \text{minimize}\;\rho(r,Q) = \sumj{r(i)q_i}, \;&\; \text{subject to:}\\
                 \sumj{q_i} = 1, \;&\; D_P(Q) = \Lambda,\\
                 0 < q_i < 1, \;&\; i \in \{1, \dots, J\}.
\end{align*}
We will now prove that $Q^*$ does not in fact solve the problem. We do so in two parts:
\begin{enumerate}
  \item We show that $Q^*$ is the unique global maximum of the Lagrangian of ($*$).
  \item We show that there exists a different distribution $\tilde{Q}$ with the same effective support, which
  meets the equality constraints. We therefore conclude that $\rho(r,\tilde{Q}) < \rho(r,Q)$, contradicting the
  optimality of $Q^*$.
\end{enumerate}
We now prove the first part. The Jacobian matrix for the equality constraints at $Q^*$ is:
$$
\begin{pmatrix}
  {1} & {1} & {1} & {\dots} & {1} \\
  {\pderiv{D_P(Q^*)}{x_1}} & {\pderiv{D_P(Q^*)}{x_2}} & {\pderiv{D_P(Q^*)}{x_3}} & {\dots} & {\pderiv{D_P(Q^*)}{x_J}}
\end{pmatrix}.
$$
Since all $q^*_i > 0$, by parts (v) and (iv) of Lemma~\ref{lem:nature-gd-characteristics}, for all $j,k \leq J$:
$p_j \neq p_k$ and $\pderiv{D_P(Q^*)}{x_j} \neq \pderiv{D_P(Q^*)}{x_k}$.
Therefore, the gradients of the constraints are linearly
independent at $Q^*$ and therefore, since $Q^*$ is (at least) a local minimum to the problem $(*)$, there exists a unique
Lagrangian multiplier vector $\lambda = (\lambda_1, \lambda_2)$ such that
$Q^* = (q^*_1, q^*_2, \dots, q^*_J)$ is an extremum point of the Lagrangian:
$$
L(Q,\lambda) = \sumj{r(i)q_i} + \lambda_1 \left(D_P(Q) - \Lambda\right) + \lambda_2 \left(\sumj{q_i} - 1\right).
$$
The partial derivatives are:
$\pderiv{L(Q^*,\lambda)}{q_i} = r(i) + \lambda_1\pderiv{D_P(Q^*)}{x_i} + \lambda_2 = 0$.
Therefore, for all $j,k \in \{1, \dots, J\}$:
\begin{align*}
r(j) + \pderiv{D_P(Q^*)}{x_j} + \lambda_2 &= r(k) + \lambda_1\pderiv{D_P(Q^*)}{x_k} + \lambda_2\\
\Rightarrow\;\lambda_1\left(\pderiv{D_P(Q^*)}{x_j} - \pderiv{D_P(Q^*)}{x_k}\right)&= r(k) - r(j)\\
\Rightarrow\;\lambda_1 &= \frac{r(k) - r(j)}{\pderiv{D_P(Q^*)}{x_j} - \pderiv{D_P(Q^*)}{x_k}}
\end{align*}
If we assume (w.l.o.g.) that $p_k < p_j$, then, from parts (iii) and (iv) of Lemma~\ref{lem:nature-gd-characteristics},
$r(k) > r(j)$ and $\pderiv{D_P(Q^*)}{x_k} > \pderiv{D_P(Q^*)}{x_j}$. Thus, $\lambda_1 < 0$. Therefore,
due to the strict convexity of $D_P(\cdot)$ and the linearity of the other two equations, the Lagrangian $L(Q,\lambda)$
is strictly concave.
Therefore, since $Q^*$ is an extremum point of the (strictly concave) Lagrangian function, it is the unique global maximum.

We now wish to show that there exists some other distribution $\tilde{Q}$ that meets the divergence constraint
and has the same support as $Q^*$. We define $Q^{123}$ as $q^{123}_i = \ind(i > 3)q^*_i$
and $c_{123} \eqdef q^*_1 + q^*_2 + q^*_3$. Then we define:
\begin{align*}
g(q_1,q_2) \eqdef \ \ & Q^{123} + q_1X^{(1)} + q_2X^{(2)} + (c_{123} - q_1 - q_2)X^{(3)}\\
f(q_1,q_2) \eqdef \ \ & D_P\left(g(q_1,q_2)\right) - \Lambda\\
\Rightarrow \textrm{for } i \in \{1,2\}:\ \pderiv{f(q_1,q_2)}{q_i} =& \nabla D_P\left(g(q_1,q_2)\right) \cdot \left(X^{(i)} - X^{(3)}\right)\\
 =& \pderiv{D_P(g(q_1,q_2))}{x_i} - \pderiv{D_P(g(q_1,q_2))}{x_3}
\end{align*}
 Clearly, $g(q^*_1,q^*_2) = Q^*$ and $f(q^*_1,q^*_2) = 0$. From part (iv) of Lemma~\ref{lem:nature-gd-characteristics},
 we have for $i \in \{1, 2\}$:
$$
           \pderiv{f(q^*_1,q^*_2)}{q_i} = \pderiv{D_P(Q^*)}{x_i} - \pderiv{D_P(Q^*)}{x_3} \neq 0.
$$
Therefore, $f$ is smooth in the open, convex domain $\{q_1, q_2 > 0\} \cap \{q_1 + q_2 < c_{123}\}$
and has a root in this domain at $(q^*_1, q^*_2)$ at which none of its partial derivatives are 0.
Then, there exist an infinite number of points in the domain for which $f = 0$ (this is true for
any sub-domain for which $(q^*_1, q^*_2)$ is an interior point).
Let $(\tilde{q}_1, \tilde{q}_2) \neq (q^*_1, q^*_2)$ be one of these points. Then, the distribution
$\tilde{Q} = (\tilde{q}_1, \tilde{q}_2, c_{123} - \tilde{q}_1 - \tilde{q}_2, q^*_4, q^*_5, \dots, q^*_J) \neq Q^*$
satisfies $D(\tilde{Q},P) = \Lambda$ and has the exact same effective support as $Q^*$. Therefore,
$\tilde{Q}$ meets the equality criteria of the Lagrangian. Since $Q^*$ is the unique global maximum
of $L(Q, \lambda)$:
$\rho(r,\tilde{Q}) = L(\tilde{Q}, \lambda) < L(Q^*, \lambda) = \rho(r,Q^*)$,
contradicting the fact that $Q^*$ is optimal.
\end{proof}

We now turn our attention to the learner's selection of $r(\cdot)$. As already established
by Lemma~\ref{lem:gd_hasPropsABC} and Theorem~\ref{thm:chr-monotonic-equality}, it is sufficient
for the learner to consider only strictly monotone rejection functions. Since for these functions
$p_j = p_k \Rightarrow r(j) = r(k)$, the learner can partition $\Omega$
into $K = K(P, \Omega)$ event subsets, which correspond, by probability, to ``level sets'', $S_1, S_2, \dots, S_K$
(all events in a level set $S_j$
have probability $p^{(S_j)}$). We re-index these subsets such that $0 < p^{(S_1)} < p^{(S_2)} < \cdots < p^{(S_K)}$.
Define $K$ variables $r_1, r_2, \dots, r_K$,
representing the rejection rate assigned to each of the $K$ level sets ($\forall \omega \in S_i, r(\omega) = r_i$).
Since $D_P(\cdot)$ is 2-symmetric, $D_P\left(X^{(\omega)}\right)$ is constant for all $\omega$ in a level set $S$.
Therefore, we use the notation $D^S_P \eqdef D_P\left(X^{(\omega)}\right)$ for any $\omega \in S$.
We group our level sets by probability: $\cL = \{ S : D^S_P > \Lambda \}$, $\cM = \{ S : D^S_P = \Lambda \}$,
and $\cH = \{ S : D^S_P < \Lambda \}$. We define $w \eqdef \argmax_i \{S_i \in \cL \bigcup \cM\}$.
\begin{lemma}
\label{lem: w}
If $Q$ minimizes $\rho(r,Q)$ and meets the constraint $D_P(Q) \geq \Lambda$, then
$r_w \geq \rho(r,Q)$.
\end{lemma}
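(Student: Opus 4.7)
The plan is to exhibit an explicit, feasible adversary strategy whose rejection rate under $r$ equals $r_w$; since $Q$ is assumed to minimize $\rho(r,\cdot)$ over all feasible distributions, the claimed inequality follows immediately.

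First, I would observe that because the level set indexing $r_1,\dots,r_K$ is being used, the rejection function $r$ is implicitly taken to be strictly monotone (cf.\ Theorem~\ref{thm:chr-monotonic-equality} together with Lemma~\ref{lem:gd_hasPropsABC}), so $r(\omega)=r_i$ for every $\omega\in S_i$. I would also briefly note that $w$ is well defined: the assumption $\Lambda\le D_P(X^{(1)})$ guarantees that $S_1\in\cL\cup\cM$, so the argmax is taken over a non-empty set.

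Next, I would pick an arbitrary element $\omega\in S_w$ and consider the degenerate distribution $Q_\omega\eqdef X^{(\omega)}$. Two facts are essentially by definition: (a) $D_P(Q_\omega)=D_P\!\left(X^{(\omega)}\right)=D_P^{S_w}$, and since $S_w\in\cL\cup\cM$ this is $\ge\Lambda$, so $Q_\omega$ meets the divergence constraint; (b) $\rho(r,Q_\omega)=r(\omega)=r_w$. Since $Q$ minimizes $\rho(r,\cdot)$ over the feasible set, we obtain $\rho(r,Q)\le\rho(r,Q_\omega)=r_w$, which is the desired inequality.

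There is really no substantive obstacle here — the whole argument is a one-line exhibition of a feasible point. The only care needed is bookkeeping: making sure strict monotonicity has been invoked (so $r_w$ is defined unambiguously), confirming that the receding property implies $D_P^{S_1}>\cdots>D_P^{S_K}$ so that $w$ picks out exactly the largest-probability level set still capable of supporting a point-mass adversary, and reading off that the divergence of a point mass depends only on the level set (which is why we may write $D_P^{S}$ in the first place — this itself uses 2-symmetry combined with the receding/strict-convexity structure). Once these notational matters are in place, plugging in $Q=X^{(\omega)}$ finishes the proof.
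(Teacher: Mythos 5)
Your argument is exactly the paper's proof: exhibit the point mass $X^{(\omega)}$ for $\omega\in S_w$ as a feasible adversary strategy with rejection rate $r_w$, and conclude by minimality of $Q$. The extra bookkeeping you note (strict monotonicity, non-emptiness of $\cL\cup\cM$) is consistent with the setup the paper establishes just before the lemma, so nothing further is needed.
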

\begin{proof}
Let $j \in S_w$. Then $D_P\left(X^{(j)}\right) \geq \Lambda$, and since $Q$ minimizes $\rho(r,Q)$,
$r_w = \rho\left(r,X^{(j)}\right) \geq \rho(r,Q)$.
\end{proof}

By Theorem~\ref{thm:gd-2-points}, if $r$ isn't vulnerable, the adversary-optimal $Q$ will have an effective support of
at most size 2. If it has an
effective support of size 1, then the event $\omega$ for which $q_{\omega} = 1$ cannot
 be from a level set in $\cL$ or $\cH$
(otherwise, part (i) of Lemma~\ref{lem:nature-gd-characteristics} would be violated).
Therefore, it must belong to the single level set in $\cM$. Thus, if $\cM = \{S_m\}$
(for some index $m$),
there are feasible solutions $Q$ such that $q_\omega = 1$ (for $\omega \in S_m$), all of which have
$\rho(r,Q) = r_m$. The following lemma characterizes optimal distributions $Q$ which have an effective support
of size 2.
\begin{lemma}
If $r$ isn't vulnerable and
$Q$ is optimal with an effective support of size 2 (that is, there are $j,k$ such that $q_j,q_k > 0$ and
$q_j + q_k = 1$), then, assuming w.l.o.g. that $p_j \leq p_k$,
$j \in S_l \in \cL$ and $k \in S_h \in \cH$ for some $l$ and $h$.
\end{lemma}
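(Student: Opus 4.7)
The plan is to combine three elementary consequences of Lemma~\ref{lem:nature-gd-characteristics} with a one-line convexity estimate and a one-line feasibility swap. First I would collect the basic facts: since $q_j, q_k > 0$, part (v) upgrades the hypothesis $p_j \leq p_k$ to the strict inequality $p_j < p_k$; part (i) gives the active constraint $D_P(Q) = \Lambda$; and part (iii) gives $r(j) > r(k)$. The receding property of $D_P$, applied with $X = X^{(k)}$ and the transfer $t(X^{(k)}, j, k) = X^{(j)}$, yields $D_P(X^{(j)}) > D_P(X^{(k)})$.

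To rule out the possibility that $k \in \cL \cup \cM$, I would observe that this would mean $D_P(X^{(k)}) \geq \Lambda$, so $X^{(k)}$ is itself feasible for the adversary. But
\[
\rho(r, X^{(k)}) = r(k) < q_j\, r(j) + q_k\, r(k) = \rho(r, Q),
\]
using $r(j) > r(k)$ and $q_j > 0$; this contradicts the optimality of $Q$. Hence $D_P(X^{(k)}) < \Lambda$, so $k$ lies in a level set in $\cH$.

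To rule out $j \in \cH \cup \cM$, suppose instead that $D_P(X^{(j)}) \leq \Lambda$. Combined with the strict inequality $D_P(X^{(k)}) < D_P(X^{(j)})$ established above, convexity of $D_P$ on the simplex gives
\[
D_P(Q) = D_P\!\left(q_j X^{(j)} + q_k X^{(k)}\right) \leq q_j D_P(X^{(j)}) + q_k D_P(X^{(k)}) < \Lambda,
\]
the last inequality strict because $q_k > 0$ and $D_P(X^{(k)}) < \Lambda$. This contradicts $D_P(Q) = \Lambda$, so $D_P(X^{(j)}) > \Lambda$ and $j$ lies in a level set in $\cL$, completing the proof.

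I do not expect any real obstacle here; the argument is pure bookkeeping once the three consequences of Lemma~\ref{lem:nature-gd-characteristics} and the receding/convex behavior of $D_P$ are in hand. The only mildly delicate point is ensuring strict inequality in the convexity estimate, which is handed to us automatically by the strict inequality $D_P(X^{(k)}) < D_P(X^{(j)})$ and the fact that $q_k > 0$, so one does not even need the strict convexity part of the receding-divergence definition.
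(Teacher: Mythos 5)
Your proof is correct, but it takes a genuinely different route from the paper's. The paper sandwiches $D_P(Q)$ using part~(vi) of Lemma~\ref{lem:nature-gd-characteristics}: since the support has size two, $t(Q,k,j) = X^{(k)}$ and $t(Q,j,k) = X^{(j)}$, so part~(vi) together with the receding property gives $D_P\left(X^{(k)}\right) < D_P(Q) < D_P\left(X^{(j)}\right)$, and both placements then follow immediately by comparing with $D_P(Q) = \Lambda$ from part~(i). You instead avoid part~(vi) entirely: for $k$ you argue directly that if $X^{(k)}$ were feasible it would achieve $\rho\left(r, X^{(k)}\right) = r(k) < \rho(r,Q)$ (using part~(iii) and $q_j > 0$), contradicting adversary-optimality rather than contradicting part~(i); for $j$ you use Jensen's inequality on the convex divergence, with strictness supplied by $q_k > 0$ and $D_P\left(X^{(k)}\right) < \Lambda$. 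Both arguments are sound. Your version is arguably more self-contained, leaning only on parts (i), (iii), (v) plus bare convexity and the receding property, whereas the paper's is shorter once part~(vi) is available. One cosmetic remark: in your second step the inequality $D_P\left(X^{(k)}\right) < D_P\left(X^{(j)}\right)$ is not what does the work --- what you actually need and use is $D_P\left(X^{(k)}\right) < \Lambda$, which follows either from your first step or from $D_P\left(X^{(k)}\right) < D_P\left(X^{(j)}\right) \leq \Lambda$ under the contradiction hypothesis; either justification is fine, but it would be cleaner to say which one you are invoking.
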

\begin{proof}
Since $q_j, q_k > 0$, and $Q$ is optimal,
we have that $p_j \neq p_k$, by part (v) of Lemma~\ref{lem:nature-gd-characteristics}.
Therefore, $p_j < p_k$, and by part (vi) of Lemma~\ref{lem:nature-gd-characteristics},
$$
D_P\left(X^{(k)}\right) = D_P(t(Q,k,j)) < D_P(Q) < D_P(t(Q,j,k)) = D_P\left(X^{(j)}\right).
$$

Assume, by contradiction, that $k$ belongs to a level set in $\cL$ or $\cM$.
This is equivalent to $D_P\left(X^{(k)}\right) \geq \Lambda$.
We therefore have that $D_P(Q) > D_P\left(X^{(k)}\right) \geq \Lambda$, which is a violation of part (i) of
Lemma~\ref{lem:nature-gd-characteristics}. Therefore, $k$ belongs to a level set in $\cH$. Likewise, were
we to assume that $j$ belongs to a level set in $\cM$ or $\cH$ ($D_P\left(X^{(j)}\right) \leq \Lambda$),
it would follow that $D_P(Q) < D_P\left(X^{(j)}\right) \leq \Lambda$, which would also violate part (i)
of Lemma~\ref{lem:nature-gd-characteristics}. Therefore, $j$ belongs to a level set in $\cL$.
\end{proof}
\begin{lemma}
\label{lem:qlh_soltn}
Let $S_l \in \cL$ and $S_h \in \cH$. Then, there always exists a single solution $q_\Lambda^{(l,h)} \in (0,1)$ to
$$D_P\left(qX^{(j)} + (1-q)X^{(k)}\right) = \Lambda,$$ for any $j \in S_l, k \in S_h $.
\end{lemma}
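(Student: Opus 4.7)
The plan is to parametrize the straight-line segment between $X^{(j)}$ and $X^{(k)}$ by $q \in [0,1]$ and apply the intermediate value theorem for existence and strict convexity for uniqueness. Concretely, define $g : [0,1] \to \Real$ by
\[
  g(q) \eqdef D_P\!\left(qX^{(j)} + (1-q)X^{(k)}\right).
\]
Since $D_P$ is differentiable on $(0,1)^N$ and continuous on $[0,1]^N$ (being a receding divergence), $g$ is continuous on $[0,1]$.

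First I would pin down the endpoint values. Because $j \in S_l$ and $D_P$ is 2-symmetric, $g(1) = D_P(X^{(j)}) = D^{S_l}_P > \Lambda$ by definition of $\cL$. Similarly $g(0) = D_P(X^{(k)}) = D^{S_h}_P < \Lambda$ by definition of $\cH$. Existence of some $q_\Lambda^{(l,h)} \in (0,1)$ with $g(q_\Lambda^{(l,h)}) = \Lambda$ is then immediate from the intermediate value theorem.

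For uniqueness, I would use strict convexity. Note that since $S_l \neq S_h$ (their level values differ, as $\cL$ and $\cH$ are disjoint) we have $p_j \neq p_k$, hence $j \neq k$ and $X^{(j)} \neq X^{(k)}$; so the affine map $q \mapsto qX^{(j)} + (1-q)X^{(k)}$ is non-constant, and composing with the strictly convex $D_P$ gives a strictly convex function $g$. The key step is the following observation: if a strictly convex function on $[0,1]$ hit the value $\Lambda$ at two distinct points $q_1 < q_2$, then by strict convexity $g(q) < \Lambda$ for all $q \in (q_1,q_2)$ and $g(q) > \Lambda$ for all $q \in [0,1] \setminus [q_1,q_2]$. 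In particular $g(0) > \Lambda$ unless $q_1 = 0$, and $g(0) = \Lambda$ if $q_1 = 0$; both contradict $g(0) < \Lambda$. Hence $q_\Lambda^{(l,h)}$ is unique.

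The main (minor) obstacle is justifying uniqueness cleanly — one must rule out the possibility that a strictly convex $g$ crosses the level $\Lambda$ twice. The strict straddling of $\Lambda$ by the endpoint values $g(0)$ and $g(1)$ is precisely what excludes this, so this step is really the heart of the argument, while everything else is bookkeeping with the definitions of $\cL$, $\cH$, and 2-symmetry.
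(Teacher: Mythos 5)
Your existence-plus-uniqueness argument for a \emph{fixed} pair $(j,k)$ is correct and is essentially the paper's own route: define $g(q) = D_P\left(qX^{(j)} + (1-q)X^{(k)}\right)$, observe $g(0) = D^{S_h}_P < \Lambda < D^{S_l}_P = g(1)$, get existence from the intermediate value theorem, and rule out a second root from convexity (the paper invokes plain convexity, which already suffices given that the endpoints strictly straddle $\Lambda$; your strict-convexity variant is fine but not needed).

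However, there is a gap: the lemma asserts a \emph{single} value $q_\Lambda^{(l,h)}$, indexed only by the level sets $l$ and $h$, that solves the equation \emph{for any} choice of $j \in S_l$ and $k \in S_h$. You prove uniqueness of the root for one fixed pair $(j,k)$ but never show that this root is the same across all such pairs, and this independence is exactly what is used afterwards to define the per-pair rejection rate $\rho^{(l,h)} = q_\Lambda^{(l,h)} r_l + (1 - q_\Lambda^{(l,h)}) r_h$. The paper closes this with 2-symmetry: writing $X = q^*X^{(j)} + (1-q^*)X^{(k)}$, one has $D_P(X) = D_P\left(t(X,j',j)\right) = D_P\left(t\left(t(X,j',j),k',k\right)\right) = \Lambda$ for any $j' \in S_l$, $k' \in S_h$, so the same $q^*$ works for every pair. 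You use 2-symmetry only to identify the endpoint values $D^{S_l}_P$ and $D^{S_h}_P$; you need it again, as above, to finish the claim. The repair is short, but without it the lemma as stated is not fully proved.
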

\begin{proof}
Let $Q$ be a distribution with an effective support of size 2, where the events $j,k$ for which
$q_j, q_k > 0$ are such that $j \in S_l$ and $k \in S_h$. Furthermore, let $q_j = q$ and
$q_k = 1-q$.
Define $g(q) \eqdef g(q,j,k) \eqdef D_P\left(qX^{(j)} + (1-q)X^{(k)}\right)$.
Then, $g(q) = D_P(Q)$.
We note that $g(0) = D^{S_h}_P < \Lambda$
and $g(1) = D^{S_l}_P > \Lambda$. Thus a solution, $q^*$, exists in the range
$(0,1)$. Since $g(q)$ is continuous and convex, there cannot exist another solution in this range.
Let $X = q^*X^{(j)} + (1-q^*)X^{(k)}$. Let $j' \in S_l$ and $k' \in S_h$.
Then, since $D_P(\cdot)$ is 2-symmetric,
$\Lambda = D_P\left(X\right) = D_P\left(t(X,j',j)\right) = D_P\left(t(X,k',k)\right) = D_P\left(t\left(t(X,j',j),k',k\right)\right)$, and
thus the solution is the same for all pairs of members between $S_l$ and $S_h$.
\end{proof}

Therefore, if an adversary-optimal $Q$ has an effective support of size 2,
where the events with non-zero probability are from $S_l$ and $S_h$ respectively,
then, $\rho(r,Q) = \rho^{(l,h)} \eqdef q_\Lambda^{(l,h)} r_l + (1 - q_\Lambda^{(l,h)}) r_h$.

Therefore, the adversary's choice of an optimal distribution, $Q$, must have
one of $|\cL||\cH| + |\cM| \leq \lfloor \frac{K^2}{4} \rfloor$
(possibly different) rejection rates.
Each of these rates, $\rho_1, \rho_2, \dots, \rho_{|\cL||\cH| + |\cM|}$,
is a linear combination of at most two variables, $r_i$ and $r_j$.
We introduce an additional variable, $z$, to represent the max-min rejection rate.
This entails the following theorem.
\begin{theorem}
\label{thm:linear_program}
An optimal soft rejection function
and the lower-bound on the optimal type II error, $1-z$, is obtained by solving the following
linear program:
\begin{align}
\label{eqn:linprog}
  \text{maximize}_{r_1, r_2, \dots, r_K, z} \;\;z, \;&\; \text{subject to:}\nonumber\\
                 &\sum_{i=1}^{K}{r_i|S_i|p(S_i)} = \delta \\
                 &1 \geq r_1 \geq r_2 \geq \dots \geq r_K \geq 0\nonumber\\
                 &r_w \geq z\nonumber\\
                 &\rho_i \geq z, \; i \in \{1, 2, \dots, |\cL||\cH|+|\cM|\} .\nonumber
\end{align}
\end{theorem}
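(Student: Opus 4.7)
The plan is to assemble the machinery already built up into the LP formulation. First, I would invoke Lemma~\ref{lem:gd_hasPropsABC} and Theorem~\ref{thm:chr-monotonic-equality} to restrict attention to strictly monotone rejection functions, which allows me to parameterize $r$ by one value per level set: $r_i$ for every $\omega \in S_i$, with the monotonicity chain $1 \geq r_1 \geq r_2 \geq \cdots \geq r_K \geq 0$ following from $p^{(S_1)} < \cdots < p^{(S_K)}$. The observation in Section~\ref{sec:formulation} that an optimal soft $r$ must satisfy $\rho(r,P) = \delta$ then gives the equality constraint $\sum_{i=1}^{K} r_i |S_i| p(S_i) = \delta$. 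So the feasibility polytope of the LP exactly matches the learner's (relevant) strategy space, assuming $r$ is not vulnerable; the vulnerable case is deferred to Lemma~\ref{lem:lin-prog-delta}.

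Next, I would compute $\min_{Q \in \cQ_{\Lambda}} \rho(r,Q)$ for any fixed monotone $r$ by enumerating the finitely many forms of an adversary-optimal $Q$. By Theorem~\ref{thm:gd-2-points}, the effective support has size at most $2$. For size one, $Q = X^{(j)}$ is feasible iff $j$ lies in a level set in $\cL \cup \cM$, i.e.\ iff $j \in S_i$ for some $i \leq w$, in which case $\rho(r,Q) = r_i$. Because $r_1 \geq \cdots \geq r_w$, the smallest such value is $r_w$, which is exactly the quantity controlled in the LP by $r_w \geq z$ (this is also the content of Lemma~\ref{lem: w}). For size two, the lemma preceding Lemma~\ref{lem:qlh_soltn} forces the support to consist of one event in some $S_l \in \cL$ and one in some $S_h \in \cH$, and Lemma~\ref{lem:qlh_soltn} uniquely determines the split $q^{(l,h)}_{\Lambda}$, giving the linear expression $\rho^{(l,h)} = q^{(l,h)}_{\Lambda} r_l + (1 - q^{(l,h)}_{\Lambda}) r_h$. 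Hence
\[
\min_{Q \in \cQ_{\Lambda}} \rho(r,Q) \;=\; \min\!\left(r_w,\; \min_{S_l \in \cL,\, S_h \in \cH} \rho^{(l,h)}\right),
\]
where in the case $\cM = \{S_m\}$ the single-event distribution on $S_m$ is already covered by $r_w = r_m$.

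Introducing $z$ as the learner's guaranteed payoff and applying the standard max-min epigraph reformulation, the learner's problem becomes: maximize $z$ subject to $z \leq r_w$, $z \leq \rho^{(l,h)}$ for each of the at most $|\cL||\cH|+|\cM|$ relevant pairs, together with monotonicity and validity. That is precisely the LP in the theorem statement, so its optimum $(r_1^*,\ldots,r_K^*,z^*)$ yields a strictly monotone optimal $r^*$ and an achievable type II error of exactly $1 - z^*$.

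The routine parts are the packaging steps; the only place requiring real care is confirming that the enumeration of adversary distributions is tight, i.e.\ that no feasible $Q$ can drive $\rho(r,Q)$ strictly below $\min(r_w, \min_{l,h} \rho^{(l,h)})$. This is exactly where Theorem~\ref{thm:gd-2-points} plus the size-two structural lemma plus Lemma~\ref{lem:qlh_soltn} are indispensable, and is the step I would write most carefully. A secondary subtlety is why the constraint $r_w \geq z$ is never redundant in a harmful way: when $\cM = \emptyset$ the optimal $Q$ has size-two support and the constraint is slack, while when $\cM \neq \emptyset$ it corresponds to an actually realizable single-event distribution on $S_w = S_m$ and is needed to bound the adversary's value.
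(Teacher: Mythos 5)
Your proposal is correct and follows essentially the same route as the paper, which derives the LP from exactly this chain: strict monotonicity via Lemma~\ref{lem:gd_hasPropsABC} and Theorem~\ref{thm:chr-monotonic-equality}, the level-set parameterization with $\rho(r,P)=\delta$, Theorem~\ref{thm:gd-2-points} to bound the effective support, the $(\cL,\cH)$ structural lemma together with Lemma~\ref{lem:qlh_soltn} to enumerate the finitely many linear payoffs $\rho^{(l,h)}$, Lemma~\ref{lem: w} for the $r_w \geq z$ constraint, and the max-min epigraph reformulation. The deferral of the vulnerable case to Lemma~\ref{lem:lin-prog-delta} also matches the paper's treatment.
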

Let $r^*$ be the solution to the linear program~(\ref{eqn:linprog}). Our derivation of the linear program
is dependent on the restriction that $r^*$ isn't vulnerable.
If $r^*$ contradicts this restriction then, as discussed,
the optimal strategy is $r^\delta$. The following lemma shows that in this case $r^* = r^\delta$ anyway, and thus the solution to the linear program is always optimal.
Its proof can be found in Appendix~\ref{sec: appendix-games-proof}.
\begin{lemma}
\label{lem:lin-prog-delta}
Let $r^*$ be the solution to the linear program. If $r^*$ is vulnerable, then $r^* = r^\delta$.
\end{lemma}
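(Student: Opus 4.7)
The plan is to sandwich the minimum coordinate $r^*_K$ between $\delta$ and $\delta$, thereby forcing every $r^*_i$ to equal $\delta$.

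First, I observe that $r^\delta$ is always a feasible point of the linear program with objective value $z = \delta$: the equality constraint becomes $\delta \sum_i |S_i| p^{(S_i)} = \delta \cdot 1 = \delta$; the monotonicity chain is trivially satisfied; and constraints 3, 4, 5 all reduce to $\delta \geq \delta$. Consequently the optimal value of the program satisfies $z^* \geq \delta$.

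Next, I use the vulnerability hypothesis to push $z^*$ down to $r^*_K$. By monotonicity $r^*_K = \min_i r^*_i$, and the event minimizers are exactly the elements of $\bigcup\{S_k : r^*_k = r^*_K\}$. Vulnerability supplies some $j \in I_{min}(r^*)$ with $D_P(X^{(j)}) \geq \Lambda$; writing $j \in S_{k_0}$, this forces $r^*_{k_0} = r^*_K$ and $S_{k_0} \in \cL \cup \cM$, hence $k_0 \leq w$. Then the monotonicity chain between indices $k_0$ and $K$ collapses: $r^*_{k_0} = r^*_{k_0+1} = \dots = r^*_K$, and in particular $r^*_w = r^*_K$. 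Constraint 3 of the program now gives $z^* \leq r^*_w = r^*_K$.

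Finally, the equality constraint $\sum_i r^*_i |S_i| p^{(S_i)} = \delta$ together with $\sum_i |S_i| p^{(S_i)} = 1$ and $r^*_i \geq r^*_K$ for every $i$ yields the averaging bound $r^*_K \leq \delta$, with equality if and only if $r^*_i = \delta$ for every $i$. Chaining the three bounds gives $\delta \leq z^* \leq r^*_K \leq \delta$, so all four quantities coincide; the equality case of the third bound then delivers $r^*_i = \delta$ for all $i$, i.e., $r^* = r^\delta$.

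I do not anticipate any substantive obstacle. The only side remark worth including is that $w$ is well-defined whenever $r^*$ is vulnerable, since vulnerability requires $\cL \cup \cM \neq \emptyset$; this is already guaranteed in the regime $\Lambda \leq D_P(X^{(1)})$ that the paper has imposed.
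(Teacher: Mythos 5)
Your proof is correct, and it takes a genuinely different route from the paper's. The paper argues by contradiction about the index $u$ of the lowest-probability level set attaining the minimum rejection value: it splits into the case $D^{S_K}_P \geq \Lambda$ (where the divergence constraint is vacuous and $w=K$) and the case $D^{S_K}_P < \Lambda$, and in the latter it constructs an explicit $\epsilon$-perturbation that shifts type-I mass from level set $S_{u-1}$ onto the minimizing suffix $S_u,\dots,S_K$, strictly increasing the objective and contradicting optimality. You avoid both the case split and the perturbation entirely by a sandwich: feasibility of $r^\delta$ gives $z^* \geq \delta$; vulnerability places some minimizing level set $S_{k_0}$ in $\cL \cup \cM$, so $k_0 \leq w$ and the monotonicity chain forces $r^*_w = r^*_K$, whence the constraint $r_w \geq z$ gives $z^* \leq r^*_K$; and the type-I equality constraint gives $r^*_K \leq \delta$ by averaging, with equality only for the constant function. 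Each step checks out (in particular, $w$ is well defined under the paper's standing assumption $\Lambda \leq D_P\left(X^{(1)}\right)$, and all level sets carry positive mass since $p_i>0$, so the equality case of the averaging bound does force $r^*_i=\delta$ for every $i$). Your argument is shorter and arguably more transparent, since it leans directly on the constraint $r_w \geq z$ that the LP already contains; the paper's perturbation argument is heavier but yields, along the way, the slightly more explicit structural fact that any feasible non-constant $r$ whose minimum is attained on a level set in $\cL\cup\cM$ can be strictly improved.
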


\begin{remark}
We attempted to determine explicit bounds on the value of $1 - z$, the optimal type II error, that
would result from solving the linear program in Theorem~\ref{thm:linear_program}, including via examining
the dual form of the problem, but were unsuccessful. If the optimal rejection
function $r^* \neq r^\delta$ then one can prove several interesting properties, some of which we have proven
in Lemma~\ref{lem:nature-gd-characteristics}, which may be of use in determining bounds
on the optimal type II error. However, as the following example illustrates,
even determining whether or not the optimal solution outperforms $r^\delta$ is not trivial.
\begin{example}
\label{ex:dkl_not_trivial}
Let $P = \{0.05, 0.05, \cdots, 0.05, 0.2\}$, $\delta = 0.2$, $\Lambda = 3$ and $D_P(\cdot) = D(\cdot||P)$
be the KL-divergence.
Then, solving the linear program gives $r^\delta$ (it is possible that other solutions exist, however).
Interestingly, changing $\delta$ does not appear to change the result (even when taking values
as small as $\delta = 0.001$, or as large as $\delta = 0.999$).
Furthermore, if we increase $\Lambda$ to $3.2$, we achieve solutions to the linear
program which aren't $r^\delta$, but do not improve on its rejection rate (again for
the same range of $\delta$ values).
\end{example}
\end{remark}

\begin{figure}[ht]
\label{fig:performance}
\centerline{
\begin{tabular}{cc}
\psfig{figure=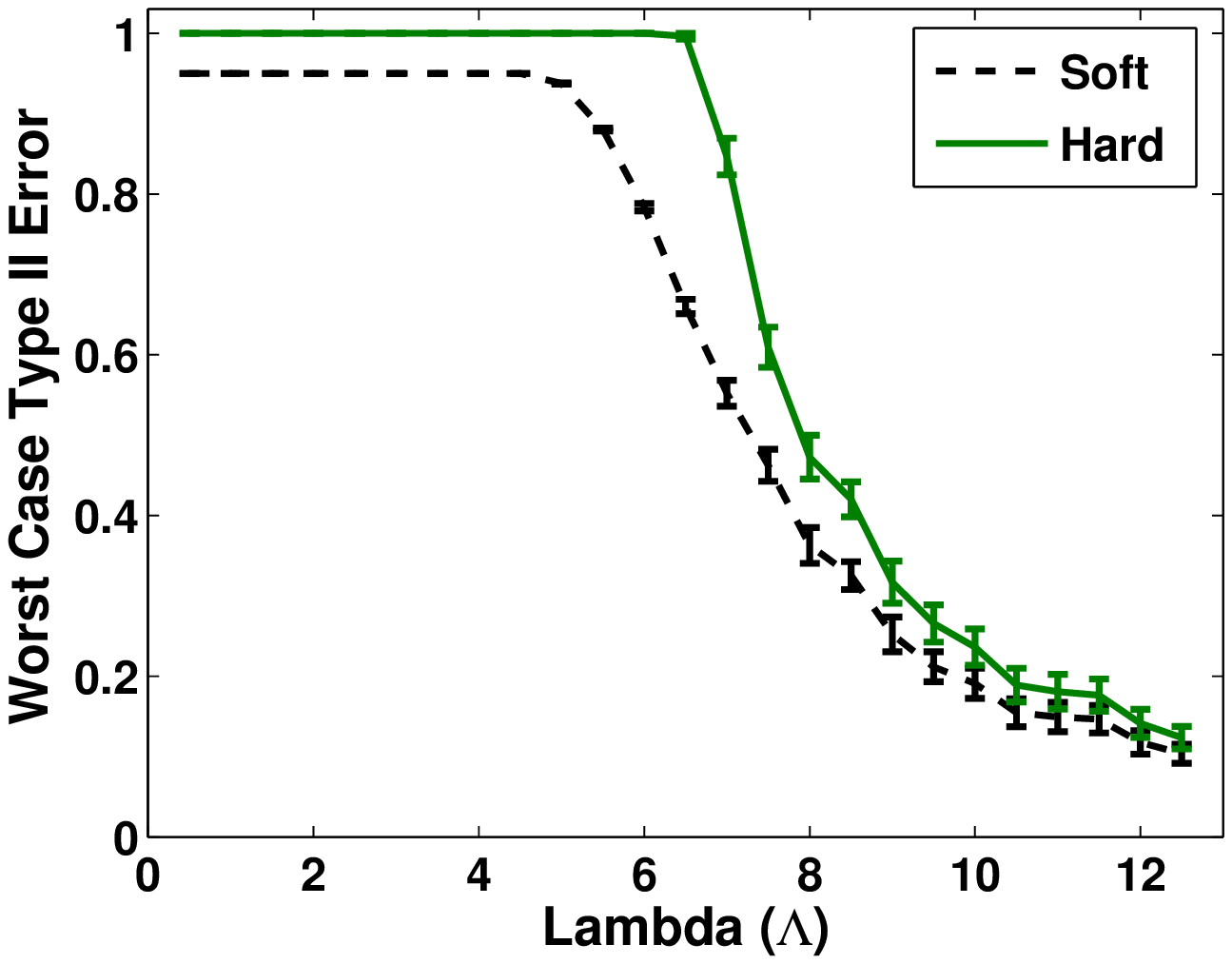,height=6cm}
&
\psfig{figure=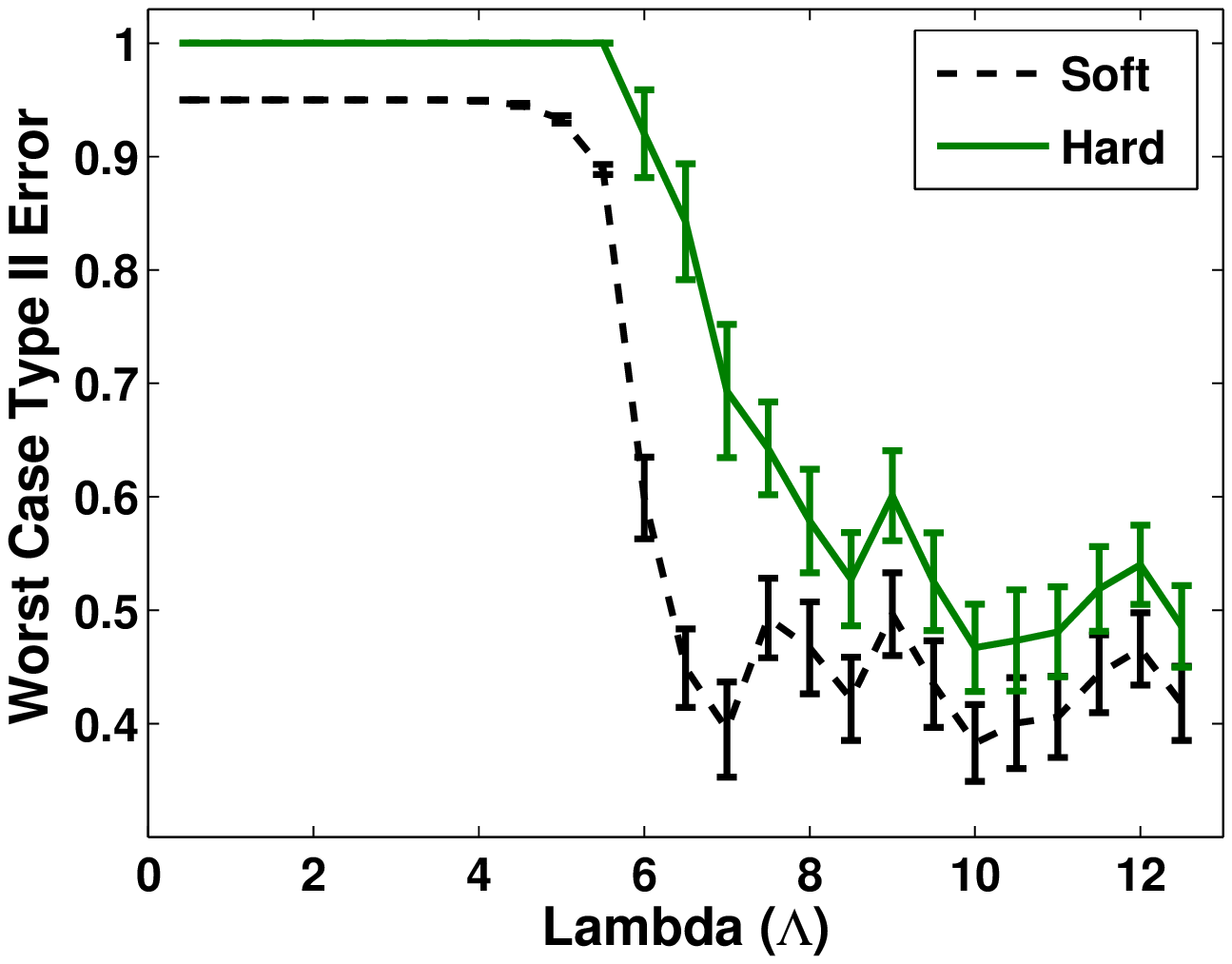,height=6cm}\\
(a) Arbitrary & (b) Gaussians
\end{tabular}
}
\caption{
Type II error vs. $\Lambda$, for $N = 50$ and $\delta = 0.05$. 50 distributions
were generated for each value of $\Lambda$ ($\Lambda = 0.5, 0.1, \cdots, 12.5$).
Error bars depict standard error of the mean (SEM).
}
\end{figure}

\subsubsection{Numerical Examples}
\label{sec:numerical}
We numerically compare the performance of hard and soft rejection strategies for a constrained game,
where $D(Q||P) \geq \Lambda$, for various values of $\Lambda$, and two
different families of target distributions, $P$, over a support of size $N=50$. The families are arbitrary probability
mass functions over $N$ events and discretized Gaussians (over $N$ bins).
For each $\Lambda$ we generated 50 random distributions $P$
for each of the families.
For each such $P$ we solved the optimal hard and soft strategies and computed the corresponding
worst-case optimal type II error, $1-\rho(r,Q)$.

Since $\max_Q D(Q||P) = \log (1 / \min_i p_i)$,
it is necessary that $\min_i p_i \leq 2^{-\Lambda}$
when generating $P$ (to ensure that a $\Lambda$-distant $Q$ exists).
Distributions in the first family of arbitrarily random distributions, Figure~6.1(a),
are generated
by sampling a point ($p_1$) uniformly in $(0,2^{-\Lambda}]$. The other $N-1$ points are drawn i.i.d.
$\sim U(0,1]$, and then normalized so that their sum is $1 - p_1$. The second family,
Figure~6.1(b),
are Gaussians centered at $0$ and discretized over $N$ evenly
spaced bins in the range $[-10,10]$.
A (discretized) random Gaussian $N(0,\sigma)$ is selected by choosing $\sigma$ uniformly in
some range $[\sigma_{min},\sigma_{max}]$.
$\sigma_{min}$ is set to the minimum $\sigma$ ensuring that the first/last bin
will not have ``zero'' probability (due to limited precision).
$\sigma_{max}$ was set so that the cumulative probability in the first/last
bin will be $2^{-\Lambda}$, if possible (otherwise $\sigma_{max}$ is arbitrarily
set to  $10 * \sigma_{min}$).

The results for $\delta = 0.05$ are shown in Figure~6.1.
Other results (not presented) for a wide variety of the problem parameters (e.g., $N$, $\delta$) are qualitatively
the same.
It is evident that both the soft and hard strategies are ineffective for small $\Lambda$.
Clearly, the soft method has significantly lower error than that of the  hard (until $\Lambda$ becomes
``sufficiently large'').


\section{Low Density Rejection in a Continuous Setting}
\label{sec:cont}

In Section~\ref{sec:characterization} we presented a number of results on LDRS optimality in a simplified finite and discrete
setting. In this section, we reconsider LDRS (now only in the hard setting) in a much more general framework
where the learner and adversary distributions are infinitely continuous.
After defining this general setting we extend theorem~\ref{thm:LDRS_optimal} of Section~\ref{sec:characterization} on
hard LDRS optimality. The resulting Theorem~\ref{thm:LDRS-optimality-cont} is obtained by assuming that
the adversary strategy space is sufficiently large, now satisfying a continuous extension of $\PropA$
called $\PropAcont$
($\PropC$ is not required in the continuous setting).

The main contribution of this section is a reduction of the SCC problem to two-class classification problem.
The two-class classification is facilitated by sampling points from a synthetically generated
``other class.'' This other class is generated so that it is uniform over its support, which is appropriately selected
around the observed support of $P$. Using this synthetic sample we obtain a binary training set on which
we can train
a soft binary classifier. The final $\delta$-valid SCC classifier is then identified by selecting a
threshold on the classifier output so as to maximize the type I error up to $\delta$.
The entire routine is simple, practical and
if the underlying  two-class soft classifier learning algorithm runs in $C(n)$ time complexity, our SCC
algorithm runs in
time $O(C(n) + n)$.
An alternative approach where a hard two-class classifier can be used is described by \citeA{Nisenson2010}.


We show that the SCC routine obtained using this approach is consistent in the sense that if the underlying classification device
is consistent then the resulting one-class classifier is asymptotically an LDRF, thus providing
an optimal SCC solution when the adversary strategy space satisfies $\PropAcont$.

\subsection{Definitions}
The SCC problem in the continuous setting is essentially the same as in the finite case
(see Section~\ref{sec:formulation}) but now
both the source distribution $P$ and the adversary distribution can be infinitely continuous distributions
over $\Real^d$. Let $\lambda$ be the Lebesgue measure on $\Real^d$.
We assume that $P$
is absolutely continuous with respect to $\lambda$ (in other words, if a Borel set $b$ has zero volume in $\Real^d$,
then $P(b) = 0$).
Denote by $p$  the density function of $P$ and let $\supp(p)$ be its support in $\Real^d$.

We define the function $\ind_b(x) \eqdef \ind(x \in b)$, where $\ind(\cdot)$ is the indicator function.
For a Borel set $b$, we define $l_p(b) \eqdef b \bigcup \{x : p(x) = 0\}$.

\begin{definition}[Minimum Volume Set]
\label{def:min-vol-set}
A set $b \subseteq \supp(p)$ is called a minimum volume set of measure $1-\delta$
if $P(b) = 1-\delta$ and for all $b'$ such that $P(b') = P(b) = 1-\delta$,
$\lambda(b) \leq \lambda(b')$.
\end{definition}

\begin{definition}[Low Density Set]
\label{def:low-density-set}
~
\begin{enumerate}
\item[(i)]
Let $b \subseteq \supp(p)$ be a minimum volume set of measure $1-\delta$.
Let $m$ be any set such that $P(m) = \delta$ and $b \bigcap m = \emptyset$. Then, we call $m$ a \emph{core low density set w.r.t.~$P$ and $\delta$},
\item[(ii)]
Denote by $\cldp$ the set of all
core low-density sets w.r.t.~$P$ and $\delta$.
\item[(iii)]
We call a set $s$ a \emph{low density set w.r.t.~$P$ and $\delta$} if there
exists an $m \in \cldp$ such that $s = l_p(m)$.
\end{enumerate}
\end{definition}

\subsection{LDRS optimality in the continuous setting}

\begin{definition}[Low-Density Rejection Strategy (LDRS) and Function (LDRF)]
We define
\begin{align*}
LDRS_\delta(P) \eqdef& \left\{r(\cdot):\; \exists m \in \cldp\;s.t.\;
r(\cdot) \equiv \ind_{l_P(m)}(\cdot) \right\}.
\end{align*}
Any function $r(\cdot) \in LDRS_\delta(P)$
is called a $\delta$-tight Low-Density Rejection Function (LDRF),
and the Low-Density Rejection Strategy is to choose any $\delta$-tight LDRF.
\end{definition}

\begin{definition}[$\PropAcont$]
\label{def:PropA-cont}

We say that two Borel sets $j,k$ satisfy condition $(*)$ if: (i) $j,k \subset \supp(p)$;
(ii) $j \cap k = \emptyset$;
(iii) $P(j) = P(k)$;
and (iv) $\lambda(j) \geq \lambda(k)$.

An adversary strategy space $\cQ$ has
$\PropAcont$ w.r.t $P$, if
for every pair $j,k$ satisfying $(*)$:
$\forall Q \in \cQ$
such that $Q(j) < Q(k)$, $\exists Q' \in \cQ$,
for which
\begin{enumerate}
\item
$Q'(j) + Q(j) \geq Q'(k) + Q(k)$;
\item
For all Borel sets $b$ for which $b \bigcap \left(j \bigcup k\right) = \emptyset$,
$Q'(b) = Q(b)$.
\end{enumerate}
\end{definition}

The proof of the following theorem can be found in the appendix.
\begin{theorem}
\label{thm:LDRS-optimality-cont}
When the learner is restricted to hard-decisions and $\cQ$ satisfies $\PropAcont$ w.r.t.~$P$,
then LDRS is optimal.
\end{theorem}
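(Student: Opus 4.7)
My plan is to show that every LDRF dominates every valid hard rejection, from which the theorem follows. Fix an arbitrary valid hard $r = \ind_A$; without loss of generality $P(A) = \delta$ and $A \supseteq \{x : p(x) = 0\}$ (adding null-density points preserves validity and can only increase $\rho(r,Q)$). Fix any LDRF $r^* = \ind_{A^*}$ with $A^* = l_p(m^*)$ and $m^* \in \cldp$; such an $r^*$ exists by taking any minimum volume set $b$ of $P$-measure $1-\delta$ and setting $m^* = \supp(p) \setminus b$, since then $P(m^*) = \delta$ and $m^* \cap b = \emptyset$. I aim to show $\inf_{Q \in \cQ} \rho(r^*, Q) \geq \inf_{Q \in \cQ} \rho(r, Q)$.

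The main technical work is to isolate a pair of sets to which $\PropAcont$ applies. I will define the discrepancy sets $j := A^* \setminus A$ and $k := A \setminus A^*$ and verify that $(j,k)$ satisfies condition $(*)$. Disjointness is immediate; because $A \supseteq \{p=0\}$ and $A^* \supseteq \{p=0\}$, one has $j \subseteq m^* \subseteq \supp(p)$ and $k \subseteq b \subseteq \supp(p)$ (up to $\lambda$-null sets); and $P(j) = P(A^*) - P(A \cap A^*) = P(A) - P(A \cap A^*) = P(k)$. For the volume inequality $\lambda(j) \geq \lambda(k)$, I will invoke the standard level-set characterization of minimum volume sets: there is a threshold $t$ with $\{p>t\} \subseteq b \subseteq \{p \geq t\}$ (up to $\lambda$-null sets), and $t > 0$ must hold because $\delta > 0$ together with absolute continuity forces $P(\{p > s\}) \geq 1-\delta$ for all sufficiently small $s > 0$. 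Then $j \subseteq m^* \subseteq \{p \leq t\}$ gives $P(j) \leq t\lambda(j)$ and $k \subseteq b \subseteq \{p \geq t\}$ gives $P(k) \geq t\lambda(k)$, so $\lambda(j) \geq P(j)/t = P(k)/t \geq \lambda(k)$.

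With $(*)$ in hand, I compute $\rho(r, Q) - \rho(r^*, Q) = Q(k) - Q(j)$ for every $Q$, since $r$ and $r^*$ agree off $j \cup k$. If $Q(j) \geq Q(k)$, then $\rho(r^*, Q) \geq \rho(r, Q) \geq \inf_{Q''} \rho(r,Q'')$ trivially. Otherwise $Q(j) < Q(k)$, and $\PropAcont$ produces $Q' \in \cQ$ with $Q'(j) + Q(j) \geq Q'(k) + Q(k)$ and $Q' = Q$ on every Borel set disjoint from $j \cup k$. The latter condition forces $Q'(j \cup k) = Q(j \cup k)$, i.e., $Q'(j) + Q'(k) = Q(j) + Q(k)$; subtracting this equation from the sum inequality yields $2Q'(k) \leq 2Q(j)$, that is, $Q'(k) \leq Q(j)$. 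Therefore $\rho(r, Q') - \rho(r^*, Q) = Q'(k) - Q(j) \leq 0$, so $\inf_{Q''} \rho(r, Q'') \leq \rho(r, Q') \leq \rho(r^*, Q)$. In either case $\rho(r^*, Q) \geq \inf_{Q''} \rho(r, Q'')$, and taking the infimum over $Q$ finishes the proof.

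The step I expect to be the main obstacle is the volume inequality $\lambda(j) \geq \lambda(k)$, which relies on the standard level-set characterization of minimum volume sets and on ruling out $t = 0$. The rest is essentially a clean continuous analog of the discrete swap argument, with the pleasant observation that a single invocation of $\PropAcont$ is enough once one notices that the mass-preservation clause, combined with disjointness of $j$ and $k$, upgrades the sum inequality into the sharp pointwise bound $Q'(k) \leq Q(j)$; this is precisely why $\PropC$ (needed in the discrete Theorem~\ref{thm:LDRS_optimal}) is not required here.
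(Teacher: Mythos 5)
Your proof is correct and follows essentially the same route as the paper's: both reduce to a single swap of the discrepancy sets $j$ and $k$, verify condition $(*)$, and extract $Q'(k)\leq Q(j)$ from the mass-preservation clause of $\PropAcont$ (the paper packages this as Lemma~\ref{lem:propA-rearrange}), your direct domination statement being just the contrapositive of the paper's argument by contradiction. The only real divergence is the volume inequality $\lambda(j)\geq\lambda(k)$, which you derive via the level-set characterization of minimum volume sets, whereas the paper gets it directly from minimality of $b$ (the exchange $b\mapsto(b\setminus k)\cup j$ preserves $P$-measure and would strictly shrink the volume if $\lambda(j)<\lambda(k)$) --- both are valid, the latter being shorter.
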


\subsection{SCC via Two-Class Classification}
\label{subsec:occ_via_2class}
We propose an SCC routine that relies on a \emph{soft} binary classifier induction.
We can use any two-class algorithm, which is  consistent in the sense that it minimizes a loss function $\phi(\cdot)$ that is
non-negative, differentiable, convex, strictly convex over $[-\infty, 0)$ and satisfies $\phi'(0) < 0$.
These conditions are similar but stronger than the conditions required by
\citeA{Bartelett_loss}, which provide necessary and sufficient conditions for a convex $\phi$ to be \emph{classification-calibrated}.\footnote{Our additional conditions are differentiability everywhere and
strict convexity over $[-\infty, 0)$ .
The reason for these extra conditions is that we threshold the soft classifier's output and don't merely use its sign for classification.}
We note however that the commonly used loss functions as discussed in \citeA{Bartelett_loss} satisfy our conditions, including
the quadratic, truncated-quadratic, exponential and logistic loss functions, to name a few.
In the extensions to this section \cite<see>{Nisenson2010}
an SCC routine is presented
that can utilize any \emph{hard} binary classifier induction algorithm that minimizes
either the 0/1, $L_1$, or  hinge loss functions, as well as any
of the loss functions defined by \citeA{Bartelett_loss}.\footnote{The use of a hard classifier (as opposed to a soft one) results in a time complexity penalty of a factor of
$O(\log n)$.}

Our SCC algorithm is given a training sample $S_n= \{x_1,\ldots,x_n\}$ of $n$ training examples drawn i.i.d. from an unknown
source distribution $P$ over $\Real^d$. Given a type-I threshold $\delta$ the algorithm outputs a hard rejection function $r(\cdot)$ over $\Real^d$.
The main idea of the algorithm is based on the following observation. If our domain is bounded, we can define
a two-class classification problem where the first class is $P$ and the other class is a uniform distribution over the
(bounded) domain. Then, the output of a consistent soft binary classifier is strictly monotonically increasing with
$p(\cdot)$ (the density of $P$) over the support of $P$ (it is only weakly monotone in $p(\cdot)$ over the whole domain).
Therefore, thresholding the classifier's output, with an appropriate quantile,
identifies a $\delta$-valid level-set in $P$, inducing a rejection function.

In practice, sampling from a uniform distribution over large domains is computationally hard and moreover, undefined
for unbounded domains. Our algorithm avoids these obstacles by sampling uniformly in grid cells containing
sampled points from $P$.
An additional complication arises in cases where the density
$p$ is flat over some regions, which results in discontinuities of the
level sets. This is a known issue in level set estimation and is often avoided by assuming that there are no
flat regions in $p$, in particular in regions corresponding to the $\delta$ level set \cite{Tsybakov97,Molchanov90}.
We don't assume this; our algorithm handles flat regions in $p$
by jittering the classifier output using a small and vanishing (in $n$)
random noise (see step 6 in the algorithm below).
The resulting algorithm is computationally efficient and practical.

A major component of our algorithm is determining a threshold by quantile estimation.
This occurs in Step~7 of the algorithm. We apply a known estimator \cite{Uhlmann63,Zielinski04}
that is unbiased and has certain optimal characteristics (see below).
This quantile estimator assumes that the cumulative distribution function (cdf), $F$, underlying the sample, is continuous,
and is defined over $\Real$ (i.e., $F$ is the cdf of a real random variable).
Let $t_\mu$ be
the estimate of the $\mu$-quantile of $F$, given $n$ sample points drawn i.i.d.~according to $F$. The estimator is \emph{unbiased} if $\rE_F[F(t_\mu)] = \mu$.
Its variance is $Var_F[F(t_\mu)]$. The estimator we use
is called the ``uniformly minimum variance unbiased estimator.''
It was introduced by \citeA{Uhlmann63} and we rely on analysis by~\citeA{Zielinski04}.
This estimator can only
be used for estimating $\mu$-quantiles that satisfy $\frac{1}{n+1} \leq \mu \leq \frac{n}{n+1}$, which is equivalent to requiring that
$n \geq \max\left\{\frac{\mu}{1-\mu}, \frac{1-\mu}{\mu} \right\}$. The estimator chooses an index $\pi_\mu$ in $[1, \dots, n]$,
and the estimate of the $\mu$-quantile is the $\pi_\mu$-th order statistic; in other words, if our sample points are sorted in increasing order, then
the estimate is the $\pi_\mu$-th element. $\pi_\mu$ is calculated as follows:
\begin{itemize}
  \item Set $k \eqdef \lfloor (n+1)\mu \rfloor$.
  \item Set $\beta \eqdef (n+1)\mu - k$.
  \item With probability $\beta$, set $\pi_\mu = k+1$, and with probability $1-\beta$, set $\pi_\mu = k$.
\end{itemize}
The estimator's variance is \cite{Zielinski04}:
$$
\frac{\beta(1-\beta)}{(n+1)(n+2)} + \frac{\mu(1-\mu)}{n+2}.
$$
The variance is maximized when $\beta = \mu = \frac{1}{2}$, and thus the variance is at most $\frac{1}{4(n+1)}$.
Moreover, according to \citeA{Zielinski04}, the estimator
is unbiased and its variance is not greater than that of any other unbiased estimator within the family of estimators
that can be defined using a probability distribution over single order statistics.
For very small samples with $n < \max\left\{\frac{\mu}{1-\mu}, \frac{1-\mu}{\mu} \right\}$, we ``fall-back'' to a simple ``default'' estimator, which sets $\pi_\mu \eqdef \lceil n\mu\rceil$.
We term this quantile-estimation algorithm the ``uniformly minimum variance unbiased (with fall-back) estimator,'' or the ``UMVUFB estimator.''

The algorithm is as follows:
\begin{enumerate}
\item  Define a grid over $\Real^d$  with arbitrary origin and positive cell side length $g_n$.
Let $g_n \to 0$, be such that $ng^d \to \infty$. For example, $g_n \eqdef n^{-\frac{1}{d+2}}$.
Select an arbitrary origin $x_0$, for example, uniformly at random from the unit-hypercube.
For any point $x = (x^{(1)},\ldots,x^{(d)})$, define
the function
$$
A_n(x)\eqdef \left\lfloor\frac{x - x_0}{g_n}\right\rfloor \eqdef
\left(
	\left\lfloor\frac{x^{(1)} - x_0^{(1)}}{g_n}\right\rfloor, \left\lfloor\frac{x^{(2)} - x_0^{(2)}}{g_n}\right\rfloor, \dots, \left\lfloor	\frac{x^{(d)} - x_0^{(d)}}{g_n}\right\rfloor
\right).
$$
For each point $x$, $A_n(x)$ specifies the coordinates of the ``lower left'' corner of the grid cell containing $x$.
\item Define the set $G^n_P = \bigcup_{x \in S_n} A_n(x)$ of covered grid cell corners.
\item Generate an artificial sample $O_n$ of size $n$ from the ``other class.''
Each point is selected independently at random as follows:
\begin{enumerate}
  \item Choose $a \in G^n_P$ uniformly at random.
  \item Choose a point $x$ uniformly at random from the unit-hypercube.
  \item The new artificial sample point is $o \eqdef a + g_n \cdot x$.
\end{enumerate}
\item Using the training sample consisting of $S_n$ (labeled $+1$) and $O_n$ (labeled $-1$),
train a soft binary classifier $h_n(\cdot)$.
\item
Define a confidence margin for the $\delta$ threshold.
Select any $\theta_n \to \infty$ such that $\theta_n  = o(\sqrt{n})$,
for example, take $\theta_n \eqdef \sqrt[3]{n}$.
Now define $\delta^+_n \eqdef \delta + \frac{1}{\theta_n}$.
Choose $\delta^-_n \leq \delta - \frac{1}{\theta_n}$ be such that $\delta^-_n \to \delta$.

\item
Jitter the classifier output.
Let $X_P$ be a random variable where $X_P \sim P$ and $Y_n \eqdef h_n(X_p)$.
Let $\Phi(\cdot)$ be the cumulative distribution function of $N(0,1)$, and
let $m_n$ be such that $\frac{\Phi(-m_n)}{\Phi(m_n)} = o\left(\frac{1}{\theta_n}\right)$, for example
$m_n \eqdef e^n$.
Let $\sigma_n \eqdef o\left(\frac{1}{m_n}\right)$,
for example, $\sigma_n \eqdef \frac{e^{-n}}{m_n} = e^{-2n}$.
Let $\varepsilon \sim N[0, \sigma^2_n]$, and set $Z_n \eqdef Y_n + \varepsilon$.

\item
We use the following threshold mechanism. We will select two thresholds $t^-_n$ and $t^+_n$ on $Z_n$.
The cutoff is always $t^-_n$ and it is inclusive when $t^-_n < t^+_n$.
Specifically, let $t^-_n$ and $t^+_n$ be estimates of the $\left(\Phi(m_n)\delta^-_n + \frac{\Phi(-m_n)}{2}\right)$-quantile and $\left(\Phi(m_n)\delta^+_n + \frac{\Phi(-m_n)}{2}\right)$-quantile
of $Z_n$, respectively. In order to establish these estimates we require a sample from $Z_n$. The following procedure produces
a list of sample points $S_Z$.
\begin{itemize}
  \item Set $S_Z = []$, i.e. $S_Z$ is an empty list.
  \item For each $x \in S_n$: Choose a value $\epsilon_x \sim N[0,\sigma_n^2]$ and append the value $h_n(x) + \epsilon_x$ onto $S_Z$.
\end{itemize}

The sample $S_z$ is then the input to the UMVUFB estimator defined above.
%

\item Define the rejection function
\begin{align*}
r_n(x) \eqdef \begin{cases}
	1 & A_n(x) \not\in G^n_P;\\
    \ind(h_n(x) \leq t^-_n) & A_n(x) \in G^n_P \textrm { and }\; t^-_n < t^+_n;\\
    \ind(h_n(x) < t^-_n) & \textrm{ otherwise. }
 \end{cases}
\end{align*}
\end{enumerate}

\begin{remark}
Instead of a soft classifier, $h_n(\cdot)$ could have been any consistent class-probability estimator, where $h_n(x)$ is the estimate of $\Pr\{+1|x\}$. See \cite{Nisenson2010} for details.
$h_n(\cdot)$ could also be a consistent ranking algorithm \cite<see, e.g.,>{ClemenconLugosiaVayatis05}.
In this case, the quantile estimator must select a single sample point to represent the quantile.
All comparison operations (e.g. $<$, $\leq$), including those done by the quantile estimator, must be performed by the ranking algorithm. The ranking algorithm must also be able to distinguish between $t^-_n < t^+_n$ and $t^-_n = t^+_n$.
\end{remark}

Let $U_n$ (with density $u_n$) be the distribution of $O_n$ (defined in Step~3).
Clearly, $u_n$ is uniform over its bounded support.
As previously noted, if the support of the generated distribution is significantly larger than that of $P$, an exorbitant number
of points may need to be generated in practice in order to reject low density areas in $P$ \cite{Davenport06learningminimum}.
The following lemma shows that the probability of generating points outside of $p$'s support, almost surely tends to zero.
\begin{lemma}
\label{lem: bayesQ-creation-supp-cont}
$U_n(\Real^d \setminus \supp(p)) \aslim 0$.
\end{lemma}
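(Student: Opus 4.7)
The first move is to recognise that $U_n$ is the uniform distribution on the random set $V_n \eqdef \bigcup_{a \in G^n_P} C_a$, where $C_a$ is the grid cell of side $g_n$ with lower corner $a$: the algorithm first picks a cell uniformly from $G^n_P$ and then a uniform point inside it. Consequently
\[
  U_n(\Real^d \setminus \supp(p)) \;=\; \frac{\lambda(V_n \setminus \supp(p))}{\lambda(V_n)},
\]
and the plan is to show, almost surely, that the denominator stays bounded below by a positive constant while the numerator tends to zero.

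For the denominator I would use absolute continuity of $P$ to pick a compact set $K_0 \subseteq \supp(p)$ with $\lambda(K_0) > 0$ on which $p \geq p_0 > 0$ (such a set exists because $\{p \geq 1/k\} \cap B(0,k)$ has $P$-mass tending to one as $k \to \infty$). Any cell $C_a \subseteq K_0$ satisfies $P(C_a) \geq p_0 g_n^d$, so $\Pr[N_a = 0] \leq e^{-n p_0 g_n^d}$, where $N_a$ is the number of training points in $C_a$. The number of such cells is $O(\lambda(K_0)/g_n^d)$, and under $n g_n^d \to \infty$ at a polynomial rate (as with the suggested $g_n = n^{-1/(d+2)}$), a union bound and Borel--Cantelli give that almost surely, for all large $n$, every cell contained in $K_0$ is active; hence $\lambda(V_n) \geq \lambda(K_0)/2$ eventually.

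For the numerator I would invoke the Lebesgue density theorem on the closed set $\supp(p)$: the cell-fraction $\phi_{A_n(x)} \eqdef \lambda(C_{A_n(x)} \setminus \supp(p))/g_n^d$ tends to zero for $\lambda$-a.e.\ $x \in \supp(p)$, and hence (by $P \ll \lambda$) for $P$-a.e.\ $x$. Egorov's theorem combined with inner regularity of $P$ produces, for any $\eta,\varepsilon > 0$, a compact $K \subseteq \supp(p)$ with $P(K) > 1-\eta$ on which the convergence is uniform. For $n$ large, every active cell containing a sample from $K$ therefore contributes at most $\varepsilon g_n^d$ to $\lambda(V_n \setminus \supp(p))$, summing to at most $\varepsilon\,\lambda(V_n)$.

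The main obstacle is controlling the contribution of the active cells whose samples all lie in $K^c$. The crude bounds $\phi_a \leq 1$ and $|\{\text{such cells}\}| \leq |S_n \cap K^c| \sim \eta n$ give only $\eta n g_n^d$, which does not vanish against $\lambda(V_n) = \Theta(1)$ because $n g_n^d \to \infty$. I plan to remedy this by combining the above with the $L^1(P)$ estimate $\sum_a \phi_a P(C_a) = \rE_P[\phi_{A_n(X)}] \to 0$ (which follows from the same density theorem via dominated convergence), a Chernoff-type concentration inequality for $\sum_a \phi_a \mathbb{I}[N_a \geq 1]$, and a diagonalisation over a nested exhausting sequence $K_m \uparrow \supp(p)$ with $P(K_m) \uparrow 1$, choosing $m = m(n) \to \infty$ slowly enough that the number of active cells whose $\phi_a$ is not controlled by $\varepsilon$ is $o(g_n^{-d})$ almost surely. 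Together with the denominator bound, this would close the proof.
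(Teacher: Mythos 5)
Your decomposition $U_n(\Real^d \setminus \supp(p)) = \lambda(V_n \setminus \supp(p))/\lambda(V_n)$ is a genuinely different route from the paper's, which simply observes that $\supp(u_n)$ is contained in the Devroye--Wise support estimator $T_n = \bigcup_i A(x_i,g'_n)$ built with an auxiliary radius $g'_n \geq 2g_n$ satisfying $n(g'_n)^d/\log n \to \infty$, and then invokes the Devroye--Wise theorem ($\nu(T_n \symdif \supp(p)) \aslim 0$) as a black box. Your plan amounts to reproving a version of that theorem from scratch, and it founders exactly where that theorem is nontrivial. The step you yourself flag as ``the main obstacle'' is not closed by the remedy you sketch: you need the number of active cells $a$ with $\phi_a > \varepsilon$ to be $o(g_n^{-d})$ almost surely, and the only handle available is (up to concentration) $n\,P\{\phi_{A_n(X)} > \varepsilon\}$. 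Since $\rE_P[\phi_{A_n(X)}] \to 0$ comes with no rate, while $ng_n^d \to \infty$ at an arbitrary rate, the product $n\,P\{\phi_{A_n(X)}>\varepsilon\}\,g_n^d$ need not vanish, and no choice of $m(n)$ in your diagonalisation repairs this: the speed of convergence in the density theorem is distribution-dependent and cannot be traded off against $ng_n^d$. This is a genuine gap in the central step.

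Two smaller problems. First, the compact $K_0$ you extract by inner regularity may have empty interior (a fat-Cantor-type set), in which case no grid cell is contained in it and the occupancy argument is vacuous; you would need to pass to cells with $\lambda(C_a \cap K_0) \geq \frac{1}{2}g_n^d$, say, again via the density theorem. Second, the union bound over $O(g_n^{-d})$ cells followed by Borel--Cantelli needs $ng_n^d/\log n \to \infty$, whereas the lemma assumes only $ng_n^d \to \infty$; the paper sidesteps this by enlarging the radius to $g'_n$ inside the support estimator while keeping the actual grid at $g_n$, a move unavailable to you because your denominator $\lambda(V_n)$ is tied to the true grid. Finally, a cleaner elementary path to the numerator exists and you miss it: since the sample points lie in $\supp(p)$ almost surely and each occupied cell has diameter $O(g_n)$, one has the deterministic inclusion of $V_n$ in the closed $g_n$-neighborhood of $\supp(p)$, whose excess over the closed set $\supp(p)$ decreases to the empty set --- although this too requires care for unbounded supports, which is presumably why the paper routes the argument through the measures $U_m$ and the cited theorem rather than through Lebesgue measure.
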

\begin{proof}
Recall that $g_n$ is a sequence of positive numbers such that $\lim_{n \to \infty} ng_n^d = \infty$ and $\lim_{n \to \infty} g_n = 0$.
Define a sequence of positive numbers $g'_n$, such that $g'_n \geq 2g_n$, $g'_n \to 0$ and $\lim_{n \to \infty} \frac{n{g'}_n^d}{\log n} = \infty$.
Define $A(x, g'_n) \eqdef \{y: y \in \Real^d \textrm{ and } ||x-y||_\infty \leq g'_n\}$.
Define $T_n \eqdef \bigcup_{i=1}^n A(x_i, g'_n)$. \citeA{DevroyeWise80} show that
for any probability measure $\nu$ on the Borel sets of $\Real^d$ whose restriction to $\supp(p)$
is absolutely continuous w.r.t.~$P$, it holds that
$\nu(T_n \symdif \supp(p)) \aslim 0$.
We note that the grid cell of $x$ is always a sub-region of $A(x, g'_n)$,
and therefore $\supp(u_n) \subseteq T_n$. Thus, noting that $U_n(\Real^d \setminus \supp(p)) = U_n(\supp(u_n) \setminus \supp(p))$,

\begin{align*}
U_n(\supp(u_n) \setminus \supp(p)) &\leq \sup_{m} U_m(\supp(u_n) \setminus \supp(p)) \\
&\leq  \sup_{m} U_m(T_n \setminus \supp(p))\\
&\leq  \sup_{m} U_m(T_n \symdif \supp(p))\\
\Rightarrow \lim_{n \rightarrow \infty} U_n(\supp(u_n) \setminus \supp(p)) &\leq \lim_{n \rightarrow \infty}
\sup_{m} U_m(T_n \symdif \supp(p))\\
&= \sup_{m} \lim_{n \to \infty} U_m(T_n \symdif \supp(p))\\
&\aseq 0.
\end{align*}
\end{proof}
\begin{remark}
It is difficult to establish exact convergence rates in Lemma~\ref{lem: bayesQ-creation-supp-cont}
without constraints on $P$.
For cases where $\lambda(\Real^d \setminus \supp(p)) = 0$, we obviously have that $U_n(\Real^d \setminus \supp(p)) = 0$.
This is the case, for example, for finite mixtures of Gaussians.

If there exists a constant $K$, such that $|p(x) - p(y)| \leq K$
whenever $||x-y||_\infty \leq g_n$, we can establish an upper bound on the rate.
The condition $||x-y||_\infty \leq g_n$ is equivalent to $x$ and $y$ being in the same grid-cell.
Therefore, if $p(x) > K$, then for all $y$ in the same grid-cell,
$p(y) > 0$. Note that if $p$ is Lipschitz continuous such that $|p(x) - p(y)| \leq \frac{K}{g_n}||x-y||_\infty$,
then $p$ meets the above condition.
Let $1-\eta$ be a desired confidence level.
Let $C^\eta_K$ be the number of cells in the grid
which contain more than $Kng_n^d + \sqrt{\frac{n(\ln |G^n_p| -\ln\eta)}{2}}$ sample points.
Then, using Hoeffding's inequality, it isn't hard to show
that with probability at least $1-\eta$, $U_n(\Real^d \setminus \supp(p)) \leq 1 - \frac{C^\eta_K}{|G^n_P|}$.
\end{remark}

\begin{definition}[Quantile]
Let $\xi$ be a random variable whose domain is in $\Real$.
We say that $t$ is a \emph{$\mu$-quantile} of $\xi$ if
$$
t \in S_\xi(\mu) \eqdef
\left\{ \tau \in \Real: \Pr\{\xi < \tau\} \leq \mu \textrm{ and } \Pr\{\xi \leq \tau\} \geq \mu\right\}
.$$
\end{definition}

Define a new random variable $\kappa$ to represent the level sets of $P$. Formally, its cumulative distribution function is
$F_\kappa(t) = P(\{x: p(x) \leq t\})$.

\begin{definition}
Let $v$ be any $\delta$-quantile of $\kappa$.
We say $P$ has a \emph{$\delta$-jump} if $F_\kappa(v) > \delta$.
\end{definition}

\begin{figure}[ht]
\label{fig:noDeltaJumps}
\centerline{
\begin{tabular}{cc}
\psfig{figure=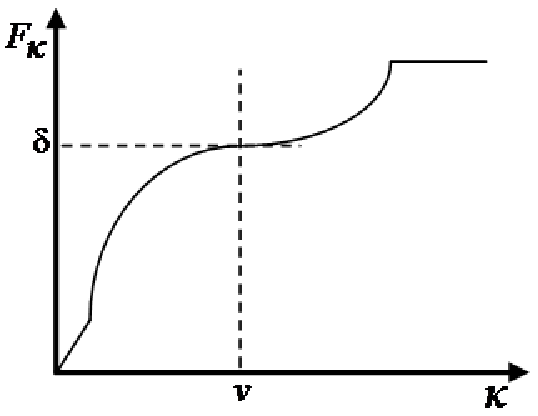,height=5.5cm}
&
\psfig{figure=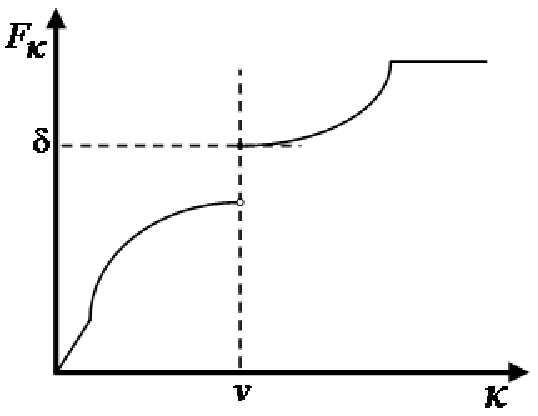,height=5.5cm}\\
(a) & (b)
\end{tabular}
}
\caption{
The cumulative distribution function $F_\kappa$ when $P$ doesn't have a $\delta$-jump.
}
\end{figure}

\begin{figure}[ht]
\label{fig:deltajumps}
\centerline{
\begin{tabular}{cc}
\psfig{figure=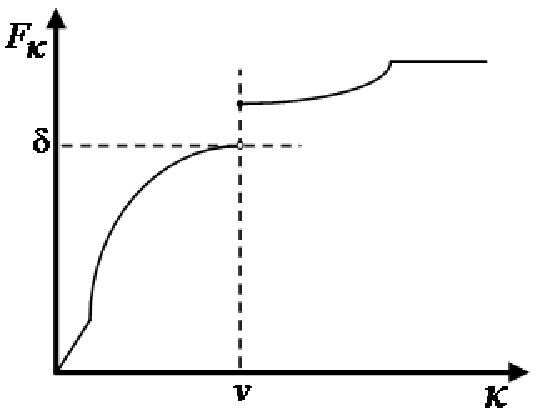,height=5.5cm}
&
\psfig{figure=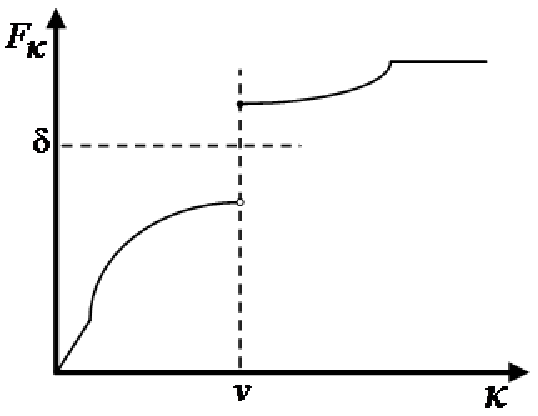,height=5.5cm}\\
(a) & (b)
\end{tabular}
}
\caption{
The cumulative distribution function $F_\kappa$ when $P$ does have a $\delta$-jump.
}
\end{figure}

We now will consider two cases,
one where $P$ doesn't have a $\delta$-jump and one where it does. See Figure~2 
and Figure~3. 
In all figures
the (unique) $\delta$-quantile of $\kappa$ is marked by $v$. Note that a quantile need not be unique, in particular there
will be a range of values wherever $F_\kappa$ is flat.

\begin{definition}
A rejection function $r(\cdot)$ is called a \emph{$\delta$-maximal level-set estimator for $P$} if,
for some $v \in S_\kappa(\delta)$, either:
\begin{enumerate}
  \item $P$ doesn't have a $\delta$-jump and $r(x) \equiv \ind(p(x) \leq v)$, almost everywhere.
  \item $P$ has a $\delta$-jump and $r(x) \equiv \ind(p(x) < v)$, almost everywhere.
\end{enumerate}
\end{definition}

Note that if $P$ doesn't have a $\delta$-jump, then a $\delta$-maximal level-set estimator for $P$ is a $\delta$-tight LDRF.
We will now prove that the output of the algorithm is asymptotically (almost surely)
a $\delta$-maximal level-set estimator for $P$.

\begin{theorem}
\label{thm:uniform-to-ldrs}
Let $\{U'_n\}$, $n=1,2,\ldots,$ be a sequence of probability measures such that for each $n$, $U'_n$ has uniform density $u'_n$ over
its bounded support, and $\lim_{n \to \infty} P(\supp(u'_n)) = 1$.
Define a Bayesian binary classification problem for each $n$.
Let the first class, $c_1 \equiv +1$ have distribution $P$,
and the second class $c_2 \equiv -1$ have distribution $U'_n$.
The classes' prior probabilities
are $\Pr\{+1\} = \Pr\{-1\} = \frac{1}{2}$.
Let $\phi(\cdot)$ be a non-negative, differentiable, convex loss function such that it is strictly convex on $[-\infty, 0)$ and $\phi'(0) < 0$.
Let $h^*_n(\cdot)$ be the soft Bayes-optimal classifier that minimizes the expected loss.
Define a random variable $Y^*_n \eqdef h^*_n(X_P)$.
Let $t^*_n$ be a $\delta$-quantile of $Y^*_n$.
Define the rejection function:
\begin{align*}
r^*_n(x) \eqdef \begin{cases}
	1 & x \not\in \supp(u'_n);\\
    \ind(h^*_n(x) \leq t^*_n) & x \in \supp(u'_n) \textrm{ and $P$ doesn't have a $\delta$-jump}; \\
    \ind(h^*_n(x) < t^*_n) & \textrm{ otherwise. }
 \end{cases}
\end{align*}
Then, $r^*(\cdot) \eqdef \lim_{n \to \infty} r^*_n(\cdot)$ is a $\delta$-maximal level-set estimator for $P$.
\end{theorem}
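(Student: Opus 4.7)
The plan is to exploit the fact that $h^*_n(\cdot)$ is a strictly monotone transformation of $p(\cdot)$ on $\supp(u'_n)$ and is constant on $\supp(u'_n)^c$, so that thresholding $h^*_n$ is equivalent to thresholding $p$, and then to pass to the limit using $P(\supp(u'_n)) \to 1$.

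\textbf{Step 1: Pointwise form of the Bayes classifier.} With equal priors, the posterior for class $+1$ is $\eta_n(x) = p(x)/(p(x)+u'_n(x))$, and the optimal soft classifier minimizes pointwise $H_{\eta}(z) \eqdef \eta\phi(z) + (1-\eta)\phi(-z)$. I would show, using the stated hypotheses on $\phi$ (non-negativity, differentiability, convexity, strict convexity on $[-\infty,0)$, and $\phi'(0)<0$), that the minimizer $z^*(\eta)$ is uniquely defined and strictly increasing in $\eta \in (0,1)$. Concretely, one differentiates to get $\partial_z H_\eta(z) = \eta\phi'(z) - (1-\eta)\phi'(-z)$, observes this expression is strictly increasing in $\eta$ for fixed $z$, and applies strict convexity plus the implicit function theorem. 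The value at $\eta = 1$ is a maximal constant $z^*_{\max} \in (0,+\infty]$.

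\textbf{Step 2: Reducing a threshold on $h^*_n$ to a threshold on $p$.} Since $u'_n$ has constant density $c_n > 0$ on $\supp(u'_n)$, there $\eta_n(x) = p(x)/(p(x)+c_n)$ is strictly increasing in $p(x)$, and so $h^*_n(x) = \psi_n(p(x))$ for a strictly increasing function $\psi_n$ defined on the range of $p$ on $\supp(u'_n)$; outside $\supp(u'_n)$, $h^*_n \equiv z^*_{\max}$. Consequently, for any $t < z^*_{\max}$, on $\supp(u'_n)$ the event $\{h^*_n(x) \le t\}$ coincides with $\{p(x) \le \psi_n^{-1}(t)\}$, and analogously for strict inequality. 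Setting $v^*_n \eqdef \psi_n^{-1}(t^*_n)$, the condition that $t^*_n$ is a $\delta$-quantile of $Y^*_n = h^*_n(X_P)$ translates into a $\delta$-quantile-type condition on $p(X_P)$ restricted to $\{X_P \in \supp(u'_n)\}$.

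\textbf{Step 3: Passing to the limit.} Because $P(\supp(u'_n)) \to 1$, the distribution of $p(X_P)\,\ind(X_P \in \supp(u'_n))$ converges weakly to the distribution of $\kappa$. A compactness/subsequence argument then shows that every limit point $v$ of $\{v^*_n\}$ lies in $S_\kappa(\delta)$. In the no-jump case, $F_\kappa$ is continuous at $v$ so the limit point is unique; for every $x$ with $p(x) \ne v$, we have $\ind(h^*_n(x) \le t^*_n) = \ind(p(x) \le v^*_n) \to \ind(p(x) \le v)$, and since $\{x:p(x)=v\}$ has $P$-measure zero this gives the required form a.e. (and since $P(\supp(u'_n)^c) \to 0$, the contribution of the ``forced reject'' region vanishes).

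\textbf{Step 4: The jump case.} This is the step I expect to be the main obstacle. When $P$ has a $\delta$-jump at $v$, the set $\{p = v\}$ has positive $P$-measure, so $\ind(p \le v)$ would overshoot $\delta$ while $\ind(p < v)$ undershoots; the definition of ``$\delta$-maximal level-set estimator'' chooses the latter, and the rejection function in the theorem correspondingly uses strict inequality. I would argue that the jump in $F_\kappa$ at $v$ induces a jump in the distribution of $Y^*_n$ at $\psi_n(v)$, and that the $\delta$-quantile of $Y^*_n$ therefore falls exactly at $\psi_n(v)$ for all sufficiently large $n$ (equivalently $v^*_n = v$). The strict-inequality threshold then gives $\ind(h^*_n(x) < t^*_n) = \ind(p(x) < v^*_n) = \ind(p(x) < v)$, matching the definition. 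Making this precise requires tracking how $v^*_n$ approaches $v$ across the jump, using the monotonicity of $\psi_n$ from Step~2 and the weak convergence from Step~3, and verifying that the strict inequality in the definition of $r^*_n$ is preserved under the pointwise limit a.e.
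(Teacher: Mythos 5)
Your proposal follows essentially the same route as the paper's proof: show that the Bayes-optimal soft classifier is a strictly increasing function of the density ratio $p(x)/u'_n(x)$ (hence of $p(x)$ on $\supp(u'_n)$, where $u'_n$ is constant), convert the $\delta$-quantile threshold $t^*_n$ on $h^*_n$ into a threshold $v^*_n$ on $p$, and let $n \to \infty$ using $P(\supp(u'_n)) \to 1$ to place the limit of $v^*_n$ in $S_\kappa(\delta)$; the paper's treatment of the jump case is, like yours, a brief remark that only the strictness of the inequalities changes. One sign slip in your Step~1: at the minimizer both $\phi'(z)$ and $\phi'(-z)$ are $\leq 0$, so $\partial_z H_\eta(z) = \eta\phi'(z) - (1-\eta)\phi'(-z)$ is (weakly) \emph{decreasing} in $\eta$ there, not increasing --- which is precisely what pushes the convex minimizer to larger $z$ as $\eta$ grows; the paper reaches the same monotonicity conclusion by a direct contradiction argument on $|\phi'(\pm y_i)|$.
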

\begin{proof}
We first consider $x \in \supp(u'_n)$.
Define the function $\psi_n(x) \eqdef \frac{p(x)}{u'_n(x)}$, defined over $\supp(u'_n)$.
From Bayes theorem, it is not hard to show that $\Pr\{+1|x\} = \frac{p(x)}{p(x) + u'_n(x)}$.
The loss for a point $x$ when we assign it value $y$ is \cite{Bartelett_loss}:
\begin{align*}
\ell(x,y) \eqdef& \Pr\{+1|x\}\phi(y) + \Pr\{-1|x\}\phi(-y)\\
 &= \frac{p(x) \phi(y) + u'_n(x)\phi(-y)}{p(x) + u'_n(x)}.
\end{align*}
It is easy to verify that
for a fixed $x$, at the minimum (over $y$) of $\ell(x,y)$,
$p(x)\phi'(y) = u'_n(x)\phi'(-y)$.
Alternatively: $\phi'(-y) = \psi_n(x)\phi'(y)$.
Let $x_1$ and $x_2$ be two points such that $\psi_n(x_1) > \psi_n(x_2)$.
Note that $\min\{\phi'(y), \phi'(-y)\} \leq \phi'(0) < 0$ for all $y$.
Let $c_i \eqdef \psi_n(x_i)$
and $y_i$ be a solution to $\phi'(-y) = c_i \phi'(y)$, for $i \in \{1,2\}$.
Note that $c_1,c_2 \geq 0$ and therefore, in order for equality to occur
it is necessary that $\phi'(y_i), \phi'(-y_i) \leq 0$ (with equality only if $c_i = 0$).
We can now rewrite  $\phi'(-y_i) = c_i \phi'(y_i)$ as $|\phi'(-y_i)| = c_i |\phi'(y_i)|$.

We will now prove that $y_1 > y_2$. Assume by contradiction that the statement is
false. Then $y_2 \geq y_1$. Therefore, $|\phi'(y_2)| \leq |\phi'(y_1)|$ and $|\phi'(-y_2)| \geq |\phi'(-y_1)|$.
Since $\psi_n(x_1) > \psi_n(x_2)$, it follows that $c_1 > c_2$.
If $c_2 = 0$, then $0 = |\phi'(-y_2)| \geq |\phi'(-y_1)|$. Therefore $\phi'(-y_1) = 0$
and $\phi'(y_1) < 0$, which gives $0 = |\phi'(-y_1)| = c_1 |\phi'(y_1)| < 0$, which is a contradiction. Thus, $c_2 \neq 0$, and
$|\phi'(-y_2)| = c_2|\phi'(y_2)| \leq c_2|\phi'(y_1)| = \frac{c_2}{c_1} |\phi'(-y_1)|
\leq \frac{c_2}{c_1} |\phi'(-y_2)| < |\phi'(-y_2)|$. Contradiction.

Now consider the case where $c_i = \frac{|\phi'(-y_i)|}{|\phi'(y_i)|} > 0$. Therefore, $\phi'(y_i), \phi'(-y_i) < 0$. Since $\phi(\cdot)$
is strictly convex over $[-\infty, 0)$ it follows that as $y_i$ increases
$|\phi'(y_i)|$ decreases and $|\phi'(-y_i)|$ increases. Therefore, if $\psi_n(x_1) = \psi_n(x_2) > 0$, there
is a unique solution.

Therefore, $h^*_n(x)$ is monotonically increasing with $\psi_n(x)$, almost everywhere over $\supp(u_n)$
and strictly monotonically increasing with $\psi_n(x)$, almost everywhere over $\supp(u_n) \bigcap \supp(p)$.
Since $u'_n(\cdot)$ is constant over its support, this implies: $p(x_1) < p(x_2) \Rightarrow h^*_n(x_1) < h^*_n(x_2)$, and
$0 < p(x_1) = p(x_2) \Rightarrow h^*_n(x_1) = h^*_n(x_2)$.
Therefore, for some $v^*_n$, $\{x \in \supp(u'_n) \bigcap \supp(p):  h^*_n(x) \leq t^*_n\}$ is identical to $\{x \in \supp(u'_n) \bigcap \supp(p): p(x) \leq v^*_n\}$
(with the possible exception of a set of points of zero Lebesgue measure).
Recalling that $Y^*_n = h^*_n(X_P)$ for $X_P \sim P$ and that $P(\supp(u'_n)) \to 1$:
\begin{align*}
\lim_{n \to \infty} t^*_n \in&  \lim_{n \to \infty} \{\tau \in \Real: \Pr\{Y^*_n < \tau\} \leq \delta \textrm{ and } \Pr\{Y^*_n \leq \tau\} \geq \delta\}\\
 =&  \lim_{n \to \infty} \{\tau \in \Real: P(\{x: h^*_n(x) < \tau\}) \leq \delta \textrm{ and }  P(\{x: h^*_n(x) \leq \tau\}) \geq \delta\}\\
 =&  \lim_{n \to \infty} \{\tau \in \Real: P(\{x \in \supp(u'_n) \bigcap \supp(p): h^*_n(x) < \tau\}) \leq \delta \textrm{ and }\\
 \;&\;\;\;\;\;\;\;\;\;\;\;\;\;\;\;\;\;P(\{x \in \supp(u'_n) \bigcap \supp(p): h^*_n(x) \leq \tau\}) \geq \delta\}\\
\Rightarrow
\lim_{n \to \infty} v^*_n \in&
\lim_{n \to \infty} \{\tau' \in \Real: P(\{x \in \supp(u'_n) \bigcap \supp(p): p(x) < \tau'\}) \leq \delta \textrm{ and }\\
 \;&\;\;\;\;\;\;\;\;\;\;\;\;\;\;\;\;\;P(\{x \in \supp(u'_n) \bigcap \supp(p): p(x) \leq \tau'\}) \geq \delta\}\\
  =&  \lim_{n \to \infty} \{\tau' \in \Real: P(\{x: p(x) < \tau\}) \leq \delta \textrm{ and }  P(\{x: p(x) \leq \tau'\}) \geq \delta\}\\
  =& \{\tau' \in \Real: P(\{x: p(x) < \tau\}) \leq \delta \textrm{ and }  P(\{x: p(x) \leq \tau'\}) \geq \delta\}\\
  =& \{\tau' \in \Real: \Pr\{\kappa < \tau'\} \leq \delta \textrm{ and }  \Pr\{\kappa \leq \tau'\} \geq \delta\}\\
  =& S_\kappa(\delta)
\end{align*}
Therefore, let $v^\delta_p \in S_\kappa(\delta)$ be such that
$v^\delta_p = \lim_{n \to \infty} v^*_n$.
Note that since $\delta > 0$, $v^\delta_p > 0$
(otherwise $\delta \leq P(\{x: p(x) \leq v^\delta_p) = 0$).
Therefore, for sufficiently large $n$, $v^*_n > 0$.

Let us assume that $P$ doesn't have a $\delta$-jump.
Therefore, for almost every $x \in \supp(u'_n)$, $\ind(h^*_n(x) \leq t^*_n)= \ind(p(x) \leq v^*_n)$. Then almost
everywhere in $\supp(u'_n)$: $r(x) = \lim_{n \to \infty} \ind(h^*_n(x) \leq t^*_n) = \ind(p(x) \leq v^\delta_p)$.
It is given that $P(\Real^d \setminus \supp(u'_n)) \to 0$. Therefore, $\lambda(\{x \not\in \supp(u'_n): p(x) > v^\delta_p\}) \to 0$.
which is equivalent to $\lambda(\{x \not\in \supp(u'_n): \ind(p(x) \leq v^\delta_p) \neq r^*_n(x)\}) \to 0$.

If $P$ has a $\delta$-jump, the proof is almost identical, only with minor changes in the strengths of inequalities. For almost every $x \in \supp(u'_n)$:
$r(x) = \lim_{n \to \infty} \ind(h^*_n(x) < t^*_n) = \ind(p(x) < v^\delta_p)$, and $\lambda(\{x \not\in \supp(u'_n): \ind(p(x) < v^\delta_p) \neq r^*_n(x)\}) \to 0$.
\end{proof}

We will now make clear the relation between the algorithm given and Theorem~\ref{thm:uniform-to-ldrs}.
Clearly $\{U_n\}$
is a series of distributions each having a uniform density, $u_n$, over its bounded support.
We will now prove that $P(\supp(u_n)) \aslim 1$.

\begin{lemma}
For any $\epsilon > 0$, $\Pr\{P(\supp(u_n)) \leq 1 - \epsilon\} \leq 2e^{-2n\epsilon^2 - n o(1)}$ and $P(\supp(u_n)) \aslim 1$.
\end{lemma}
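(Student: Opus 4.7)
My plan is to work with the \emph{missing mass} $W_n \eqdef 1 - P(\supp(u_n))$---the total $P$-measure of grid cells containing no sample point---and prove (a) $\rE[W_n]\to 0$ and (b) $W_n$ concentrates sharply around its mean at a Hoeffding-type rate. Combined, these two facts yield the stated probabilistic bound, and Borel--Cantelli then upgrades it to almost-sure convergence.

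For (a), independence of the samples gives $\rE[W_n] = \sum_c p_c(1-p_c)^n \le \sum_c p_c e^{-n p_c}$, where the sum runs over grid cells $c$. Fix $\eta>0$ and, using tightness of $P$, choose a compact $K_\eta$ with $P(K_\eta^c) < \eta$. Cells outside $K_\eta$ together contribute at most $\eta$. Cells meeting $K_\eta$ number at most $\lambda(K_\eta)/g_n^d$, each contributing at most $1/(ne)$ via the elementary inequality $xe^{-nx} \le 1/(ne)$; hence their aggregate contribution is $O(1/(n g_n^d))$, which vanishes by the standing assumption $n g_n^d \to \infty$. Letting $\eta \downarrow 0$ gives $\rE[W_n] \to 0$.

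For (b), I would invoke a missing-mass concentration inequality of Hoeffding type, namely $\Pr[W_n - \rE[W_n] \ge t] \le \exp(-2 n t^2)$, derivable from the negative association of the indicators $\ind(c\text{ uncovered})$ together with the self-bounding structure of $W_n$. For $n$ large enough that $\rE[W_n] < \epsilon$, setting $t = \epsilon - \rE[W_n]$ gives
\begin{align*}
\Pr[W_n \ge \epsilon]
 &\le \exp\bigl(-2n(\epsilon - \rE[W_n])^2\bigr)\\
 &= \exp\bigl(-2n\epsilon^2 + 4n\epsilon\,\rE[W_n] - 2n\rE[W_n]^2\bigr)\\
 &= \exp(-2n\epsilon^2 - n\,o(1)),
\end{align*}
using $\rE[W_n] = o(1)$ from (a); the factor of $2$ in the lemma absorbs the analogous two-sided form.

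Finally, for each fixed $\epsilon>0$, exponential decay in $n$ makes $\sum_n \Pr[W_n \ge \epsilon]$ summable, so Borel--Cantelli yields $W_n \to 0$ a.s., i.e., $P(\supp(u_n)) \aslim 1$. The main obstacle is securing the sharp Hoeffding-type rate in (b): a plain McDiarmid bound with bounded-differences constant $\eta_n \eqdef \sup_c p_c$ (which vanishes as $g_n \to 0$ by absolute continuity of $P$, since absolute continuity forces $\sup_{\lambda(A)\le g_n^d}P(A) \to 0$) gives only $\exp(-2t^2/(n\eta_n^2))$, insufficient under the standing assumption $n g_n^d \to \infty$, which does not force $\eta_n = O(n^{-1/2})$. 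The negative-association/self-bounding route sidesteps this by exploiting the structure of the missing-mass indicators rather than worst-case per-coordinate sensitivity.
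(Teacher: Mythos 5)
Your overall architecture (show $\rE[W_n]\to 0$, add Hoeffding-rate concentration, finish with Borel--Cantelli) is sound, and part (a) is correct and essentially self-contained. The gap is in part (b), which is exactly where the difficulty lives. The inequality $\Pr[W_n - \rE W_n \ge t]\le e^{-2nt^2}$ does not follow from the two mechanisms you name: negative association of the uncovered-cell indicators only lets you treat the summands $p_c\,\ind(c\ \textrm{uncovered})\in[0,p_c]$ as if independent, and Hoeffding then yields $\exp(-2t^2/\sum_c p_c^2)$ --- the same dead end you already identified for McDiarmid, since $\sum_c p_c^2$ need not be $O(1/n)$ under $ng_n^d\to\infty$; and the missing mass is not self-bounding in a sense that produces sub-Gaussian tails with variance proxy $O(1/n)$ (self-bounding arguments in the entropy method give Bernstein/Poissonian tails). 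The distribution-free $O(1/n)$ variance proxy for the missing mass is a genuine theorem (McAllester--Schapire, McAllester--Ortiz, Berend--Kontorovich), whose proof needs the Kearns--Saul inequality applied to each scaled Bernoulli --- exploiting that its success probability is $(1-p_c)^n$ rather than worst-case --- and even granting it, the constant in the exponent is not obviously the $2$ claimed in the lemma. As written, your key step is an appeal to an unproven and nontrivial result.

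The paper sidesteps all of this by changing the functional to which concentration is applied. It introduces the histogram estimate $\tilde p_n(x) = c(G(x))/(ng_n^d)$, notes that $\tilde p_n\equiv 0$ off $\supp(u_n)$ so that $W_n = \int_{\Real^d\setminus\supp(u_n)}|p-\tilde p_n|\,d\lambda$, and then invokes the Devroye--Gy\"orfi exponential $L_1$-consistency bound together with Scheff\'e's theorem. The point is that the \emph{full} $L_1$ error $V_n \eqdef \int|p-\tilde p_n|\,d\lambda$ has bounded differences exactly $2/n$ in each sample, uniformly in the cell probabilities (moving one sample shifts $1/(ng_n^d)$ of estimated density between two cells), so McDiarmid applies to $V_n$ at the right rate; Scheff\'e converts a deviation of $\epsilon$ for $W_n$ into a deviation of $2\epsilon$ for $V_n$, which is where the constant $2$ in the exponent comes from, and $\rE V_n\to 0$ (classical histogram consistency under $g_n\to 0$, $ng_n^d\to\infty$) supplies the $-n\,o(1)$ term --- playing the role of your part (a). If you want to keep your direct missing-mass route, you must either import the missing-mass concentration theorem explicitly (and check its constant) or replace your step (b) with this dominate-by-$V_n$ trick.
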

\begin{proof}
We define $G(x)$ to be the cell in the grid which contains $x$.
Define $c(b) \eqdef \left|\{S \bigcap b\}\right|$ to be the count of the number
of training samples which fall within set $b$.
Then the histogram density estimate is $\tilde{p}_n(x) \eqdef \frac{c\left(G(x)\right)}{nh^d}$.
As shown by \citeA{DevroyeGyorfi02} in Theorem~5.6,
$\Pr\left\{\int_{\Real^d} |p(x) - \tilde{p}_n(x)|\lambda(dx) > 2\epsilon\right\} \leq 2e^{-2n\epsilon^2 - n o(1)}$.
However, since $P$ is absolutely continuous w.r.t.~$\lambda$, it
follows from Scheff\'{e}'s theorem (\citeA{Scheffe47}, used as Theorem~5.4 by \citeA{DevroyeGyorfi02}), that for any Borel set $B$ over $\Real^d$,
$\Pr\left\{\int_{B} |p(x) - \tilde{p}_n(x)|\lambda(dx) > \epsilon\right\} \leq 2e^{-2n\epsilon^2 - n o(1)}$.

By definition, $\tilde{p}_n(x) = 0$ for all
$x \not\in \supp(u_n)$. Therefore:
\begin{align*}
\Pr\left\{P\left(\Real^d \setminus \supp(u_n)\right) > \epsilon\right\} =& \Pr\left\{\int_{\Real^d \setminus \supp(u_n)} |p(x) - \tilde{p}_n(x)|\lambda(dx) > \epsilon\right\}\\
 \leq& 2e^{-2n\epsilon^2 - n o(1)}.
\end{align*}
Since this is true for any $\epsilon$, it immediately follows that $\Pr\{\lim_{n \to \infty} P(\Real^d \setminus \supp(u_n)) \neq 0\} = 0$, or $P(\supp(u_n)) \aslim 1$.
\end{proof}

Therefore, the only remaining part is to show how $t^-_n$ and $t^+_n$ relate to $t^*_n$ and to whether $P$ has a $\delta$-jump or not.
We note that for all $x$, at the limit, $h^*_n(x) = h_n(x)$
and therefore, $Y^*_n \equiv Y_n$ (i.e. they are distributed identically).
For sufficiently large $n$,
the quantile estimator used is unbiased with standard deviation vanishing at a rate of $O\left(\frac{1}{\sqrt{n+1}}\right)$ \cite{Zielinski04}.
Therefore, since $\theta_n = o(\sqrt{n})$, it follows that $\frac{1}{\sqrt{n}} = o(\frac{1}{\theta_n})$, and thus
for sufficiently large $n$, $t^-_n$ and $t^+_n$ are tightly concentrated around a $\left(\Phi(m_n)\delta^-_n + \frac{\Phi(-m_n)}{2}\right)$-quantile (which is not greater than the $\left(\Phi(m_n)\left[\delta - \frac{1}{\theta_n}\right] + \frac{\Phi(-m_n)}{2}\right)$-quantile)
and a $\left(\Phi(m_n)\left[\delta + \frac{1}{\theta_n}\right] + \frac{\Phi(-m_n)}{2} \right)$-quantile for $Z_n$, respectively.
Therefore, since $\frac{\Phi(-m_n)}{\Phi(m_n)} = o\left(\frac{1}{\theta_n}\right)$, by the following lemma, $t^-_n$ and $t^+_n$ are also tightly concentrated around a $\delta^-_n$-quantile and a $\delta^+_n$-quantile for $Y_n$, respectively.

\begin{lemma}
\label{lem: YZ-concentration}
Let $m \geq 0$.
Let $t_\mu$ be a $\left(\Phi(m)\mu + \frac{\Phi(-m)}{2}\right)$-quantile of $Z_n$.
Then, for some $\mu'$ such that $|\mu' - \mu| \leq \frac{\Phi(-m)}{2\Phi(m)}$, a $\mu'$-quantile of $Y_n$, $t^*_{\mu'}$, satisfies
$|t^*_{\mu'} - t_\mu| \leq m\sigma_n$.
\end{lemma}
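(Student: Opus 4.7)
The plan is to compare the cdfs of $Y_n$ and $Z_n = Y_n + \varepsilon$ using the concentration of the Gaussian noise $\varepsilon$ on the interval $[-m\sigma_n, m\sigma_n]$, and then to translate the resulting cdf bounds at $t_\mu$ into existence of a nearby quantile of $Y_n$. Let $F_Y$ and $F_Z$ denote the cdfs of $Y_n$ and $Z_n$ respectively.

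First I will establish the two-sided sandwich
$$
\Phi(m)\,F_Y(t - m\sigma_n) \;\le\; F_Z(t) \;\le\; \Phi(m)\,F_Y(t + m\sigma_n) + \Phi(-m), \qquad \forall\, t \in \Real.
$$
This follows by writing $F_Z(t) = \rE_\varepsilon[F_Y(t-\varepsilon)]$ using independence of $Y_n$ and $\varepsilon$, splitting the $\varepsilon$-integral over $(-\infty,-m\sigma_n)$, $[-m\sigma_n, m\sigma_n]$, $(m\sigma_n,\infty)$ (with probabilities $\Phi(-m)$, $2\Phi(m)-1$, $\Phi(-m)$), bounding $F_Y(t-\varepsilon)$ by $F_Y(t\pm m\sigma_n)$ or by the trivial values $0,1$ on each piece by monotonicity, and then collapsing coefficients via $\Phi(m)+\Phi(-m)=1$. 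Since $\varepsilon$ has a density, $Z_n$ has a density too, so $F_Z$ is continuous and the quantile condition in the hypothesis collapses to the equality $F_Z(t_\mu) = \Phi(m)\mu + \tfrac12\Phi(-m)$. Substituting into the sandwich and dividing by $\Phi(m)$ produces the key \emph{quantile sandwich}
$$
F_Y(t_\mu - m\sigma_n) \;\le\; \mu + \delta_m \quad\text{and}\quad F_Y(t_\mu + m\sigma_n) \;\ge\; \mu - \delta_m,
$$
where $\delta_m \eqdef \Phi(-m)/(2\Phi(m))$.

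It remains to produce the target $\mu'$-quantile. Let $a \eqdef F_Y(t_\mu^-)$ and $b \eqdef F_Y(t_\mu)$. If $[a,b]\cap[\mu-\delta_m,\mu+\delta_m]\neq\emptyset$, pick any $\mu'$ in the intersection; the paper's quantile definition is then satisfied by $t^*_{\mu'}\eqdef t_\mu$. Otherwise, since $a \le b$, exactly one of $a > \mu+\delta_m$ or $b < \mu-\delta_m$ holds. In the first subcase I set $\mu' \eqdef \mu+\delta_m$ and $t^*_{\mu'} \eqdef \inf\{s \ge t_\mu - m\sigma_n : F_Y(s)\ge \mu'\}$: the set is nonempty because $F_Y(t_\mu^-)>\mu'$ forces points just below $t_\mu$ into it, and it is bounded below by $t_\mu - m\sigma_n$, so $t^*_{\mu'}\in[t_\mu-m\sigma_n,t_\mu)$; right-continuity of $F_Y$ gives $F_Y(t^*_{\mu'})\ge\mu'$, while the infimum property combined with $F_Y(t_\mu-m\sigma_n)\le\mu'$ gives $F_Y(t^{*-}_{\mu'})\le\mu'$. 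The second subcase is symmetric with $\mu' \eqdef \mu-\delta_m$ and $t^*_{\mu'}\eqdef \inf\{s\ge t_\mu : F_Y(s)\ge\mu'\}\in(t_\mu,t_\mu+m\sigma_n]$. In every case $|\mu'-\mu|\le\delta_m$ and $|t^*_{\mu'}-t_\mu|\le m\sigma_n$, completing the proof.

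The main obstacle is this last step: the cdf sandwich is a routine integral computation, but establishing that the constructed $t^*_{\mu'}$ actually satisfies both defining quantile inequalities requires careful use of right-continuity of $F_Y$, of the infimum characterization, and of the strict vs.\ non-strict boundary that appears when $F_Y$ has plateaus or jumps at levels $\mu\pm\delta_m$.
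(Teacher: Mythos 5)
Your proposal is correct and follows essentially the same route as the paper's proof: the cdf sandwich $\Phi(m)F_Y(t-m\sigma_n)\leq F_Z(t)\leq \Phi(m)F_Y(t+m\sigma_n)+\Phi(-m)$ is exactly the pair of probability bounds the paper derives, and the concluding step is the same intermediate-value argument for quantiles on $[t_\mu-m\sigma_n,\,t_\mu+m\sigma_n]$. The only difference is that you make the last step fully explicit via the infimum construction and right-continuity of $F_Y$, where the paper simply asserts that every $\Delta\in[\Delta_1,\Delta_2]$ is attained by some quantile in that interval.
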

\begin{proof}
\begin{align*}
\mu + \frac{\Phi(-m)}{2\Phi(m)} \geq& \frac{1}{\Phi(m)}\Pr\{Z_n < t\} = \frac{1}{\Phi(m)}\Pr\{Y_n < t + \varepsilon\}\\
 \geq& \frac{1}{\Phi(m)}\Pr\{Y_n < t - m\sigma_n\}\Pr\{\varepsilon \geq -m\sigma_n\} = \Pr\{Y_n < t - m\sigma_n\}
\end{align*}
\begin{align*}
\mu + \frac{\Phi(-m)}{2\Phi(m)} \leq& \frac{1}{\Phi(m)}\Pr\{Z_n \leq t\} = \frac{1}{\Phi(m)}\Pr\{Y_n \leq t + \varepsilon\}\\
 \leq& \frac{1}{\Phi(m)}\left[\Pr\{Y_n \leq t + m\sigma_n\}\Pr\{\varepsilon \leq m\sigma_n\}
 + \Pr\{\varepsilon > m\sigma_n\}\right]\\
 \leq& \frac{1}{\Phi(m)}\left[\Pr\{Y_n \leq t + m\sigma_n\}\Phi(m) + \Phi(-m)\right]\\
 =& \Pr\{Y_n \leq t + m\sigma_n\} + \frac{\Phi(-m)}{\Phi(m)}.
\end{align*}
Therefore, $\mu + \frac{\Phi(-m)}{2\Phi(m)} \geq \Pr\{Y_n < t - m\sigma_n\}$ and $\mu - \frac{\Phi(-m)}{2\Phi(m)} \leq \Pr\{Y_n \leq t + m\sigma_n\}$.
Let $\Delta_1 \eqdef \Pr\{Y_n < t - m\sigma_n\} - \mu$, and
let $\Delta_2 \eqdef \Pr\{Y_n \leq t + m\sigma_n\} - \mu$.
Note that $t-m\sigma_n$ is a $(\mu + \Delta_1)$-quantile and that $t+m\sigma_n$ is a $(\mu + \Delta_2)$-quantile of $Y_n$.
Therefore, since $\Delta_2 \geq \Delta_1$,
for every $\Delta \in [\Delta_1, \Delta_2]$, there is some
$t'$ such that $|t' - t_\mu| \leq m\sigma_n$,
which is a $(\mu + \Delta)$-quantile of $Y_n$.
To complete the proof, note that $\Delta_1 \leq \frac{\Phi(-m)}{2\Phi(m)}$ and $\Delta_2 \geq -\frac{\Phi(-m)}{2\Phi(m)}$.
Therefore, there exists some $\Delta \in \left[-\frac{\Phi(-m)}{2\Phi(m)}, \frac{\Phi(-m)}{2\Phi(m)}\right]$ such
that $\Delta \in [\Delta_1, \Delta_2]$.
\end{proof}

\begin{theorem}
The rejection function output by the algorithm is (almost surely) identical to that of Theorem~\ref{thm:uniform-to-ldrs} at the limit, where $U'_n \eqdef U_n$.
\end{theorem}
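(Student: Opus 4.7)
The plan is to verify that each component of the algorithm asymptotically reproduces the construction of Theorem~\ref{thm:uniform-to-ldrs} when we identify $U'_n$ with $U_n$. Three things must match: the ``other class'' distribution together with its support, the soft classifier together with its threshold, and the $\leq$-versus-$<$ rule tied to the presence of a $\delta$-jump.

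First I would verify the hypotheses of Theorem~\ref{thm:uniform-to-ldrs} for $U'_n = U_n$. Each $U_n$ has a uniform density over its (bounded) support by construction in Step~3, and the immediately preceding lemma gives $P(\supp(u_n)) \aslim 1$. Combined with the assumed consistency of the binary learner under the loss $\phi(\cdot)$, the output $h_n(\cdot)$ converges in the appropriate risk sense to the Bayes-optimal $h^*_n(\cdot)$, so that $Y_n = h_n(X_P)$ and $Y^*_n = h^*_n(X_P)$ have the same limiting distribution under $P$. The jittered variable $Z_n = Y_n + \varepsilon$ with $\varepsilon \sim N(0,\sigma_n^2)$ has a continuous cdf, so the UMVUFB quantile estimator applies, and since $\sigma_n = o(1/m_n)$ the jitter contributes vanishing error to the location of the resulting thresholds. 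Outside the covered grid, both $r_n$ (by definition in Step~8) and $r^*_n$ (because $P(\Real^d \setminus \supp(u_n)) \to 0$) reject, so it suffices to compare the two rejection rules on $\supp(u_n)$.

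Next, I would show that $t^-_n$ and $t^+_n$ converge to quantiles of $Y_n$ sandwiching the $\delta$-quantile used in Theorem~\ref{thm:uniform-to-ldrs}. The UMVUFB estimator is unbiased with standard deviation $O(1/\sqrt{n+1})$, and because $\theta_n = o(\sqrt n)$ the sampling error is $o(1/\theta_n)$. Lemma~\ref{lem: YZ-concentration} applied with $m = m_n$ then transfers these statistics from $Z_n$ back to $Y_n$: up to an additive error $m_n \sigma_n = o(1)$ in the threshold and an additive error $\Phi(-m_n)/(2\Phi(m_n)) = o(1/\theta_n)$ in the quantile level, $t^-_n$ is a $(\delta - 1/\theta_n + o(1/\theta_n))$-quantile of $Y_n$ and $t^+_n$ is a $(\delta + 1/\theta_n + o(1/\theta_n))$-quantile. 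Both therefore converge almost surely to a $\delta$-quantile of $Y^*_n$, i.e.~to the $t^*_n$ used in Theorem~\ref{thm:uniform-to-ldrs}.

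The remaining, and main, obstacle is to show that the algorithm's $\leq$-versus-$<$ switch matches the $\delta$-jump dichotomy in Theorem~\ref{thm:uniform-to-ldrs}. I would argue this through the behavior of $F_\kappa$ at the $\delta$-quantile, using the strict monotonicity of $h^*_n$ in $p$ established in the proof of Theorem~\ref{thm:uniform-to-ldrs} to transport statements about $\kappa$ into statements about $Y_n$. When $P$ has no $\delta$-jump, $F_\kappa$ is strictly increasing through $\delta$, so the $(\delta - 1/\theta_n)$- and $(\delta + 1/\theta_n)$-quantiles of $\kappa$ are asymptotically distinct and separated by a positive gap; hence $t^-_n < t^+_n$ for all sufficiently large $n$, and the algorithm selects the $\leq$-rule, matching case~(1) of the $\delta$-maximal level-set estimator definition. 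When $P$ has a $\delta$-jump at $v$, there is an open interval of $\mu$-values around $\delta$ on which the $\mu$-quantile of $\kappa$ is exactly $v$; for $n$ large enough both $\delta^-_n$ and $\delta^+_n$ lie in this interval, so the corresponding $Y_n$-quantiles coincide and the algorithm uses the strict $<$-rule, matching case~(2). Combining these observations with the convergence of $h_n$ to $h^*_n$ yields $\lim_{n\to\infty} r_n \equiv \lim_{n\to\infty} r^*_n$ almost everywhere, as required.
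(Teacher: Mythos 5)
Your overall architecture (verify the hypotheses of Theorem~\ref{thm:uniform-to-ldrs} for $U'_n = U_n$, transfer quantiles from $Z_n$ to $Y_n$ via Lemma~\ref{lem: YZ-concentration}, then match the $\leq$/$<$ switch to the $\delta$-jump dichotomy) follows the paper's outline, but your final step contains a genuine gap: you key the algorithm's choice of rule solely on whether $P$ has a $\delta$-jump, whereas the algorithm's switch is actually governed by whether $t^-_n < t^+_n$, i.e.\ by the relation between the $\delta^-_n$- and $\delta^+_n$-quantiles. These two dichotomies do not coincide, and the paper must therefore run a four-case analysis over the pairs ($\delta^-_n$-jump, $\delta$-jump). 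Concretely, suppose $F_\kappa$ is continuous and strictly increasing up to $v$ with $F_\kappa(v^-)=\delta$ and then jumps to $F_\kappa(v)=\delta+0.1$. Then $P$ has a $\delta$-jump, but has no $\delta^-_n$-jump: the $\delta^-_n$-quantile sits strictly below $v$, so $t^{*-}_n < t^*_n = t^{*+}_n$, hence $t^-_n < t^+_n$ and the algorithm selects the \emph{$\leq$-rule} --- contradicting your claim that ``the corresponding $Y_n$-quantiles coincide and the algorithm uses the strict $<$-rule.'' Your premise that a $\delta$-jump yields an \emph{open} interval of $\mu$-values around $\delta$ whose $\mu$-quantile is $v$ fails precisely when $\Pr\{\kappa < v\} = \delta$, so $\delta$ can be the left endpoint of that interval and $\delta^-_n$ falls outside it.

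In this mixed case the output still matches Theorem~\ref{thm:uniform-to-ldrs}, but by a mechanism your proof never supplies: the algorithm applies $\ind(h_n(x)\leq t^-_n)$ with $t^-_n$ converging to $t^*_n$ \emph{strictly from below}, and it is this one-sided convergence that makes $\{h_n \leq t^-_n\}$ agree in the limit with $\{h^*_n < t^*_n\}$, i.e.\ with the $<$-rule demanded by the $\delta$-jump branch of the theorem. Symmetrically, your no-$\delta$-jump case rests on the false assertion that $F_\kappa$ is ``strictly increasing through $\delta$'': $P$ can have a $\delta^-_n$-jump without a $\delta$-jump ($F_\kappa$ jumping from below $\delta$ to exactly $\delta$), in which case $t^{*-}_n = t^*_n$; the conclusion $t^-_n < t^+_n$ still holds there only because the $\delta^+_n$-quantile must exceed $v$, not for the reason you give. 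To repair the proof you need to split on the $\delta^-_n$-jump status as well, and in the two mixed cases argue separately how the limiting threshold location (equal to, or strictly below, $t^*_n$) reproduces the correct inequality in $r^*_n$.
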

\begin{proof}
By definition, $x \in \supp(u'_n) \Leftrightarrow A_n(x) \in G^n_p$.

We represent by $t^{*-}_n$ and $t^{*+}_n$ the $\delta^-_n$ and $\delta^+_n$ quantiles of $Y^*_n$, around which (for sufficiently large $n$),
$t^-_n$ and $t^+_n$ are tightly concentrated. In particular, $t^{*-}_n < t^{*+}_n \Rightarrow t^-_n < t^+_n$.
Note that $t^-_n \leq t^+_n$ and $t^{*-}_n \leq t^*_n \leq t^{*+}_n$ always,
and at the limit, $t^-_n = t^{*-}_n = t^*_n = t^{*+}_n = t^+_n$.
We now consider four cases.

In the first, $P$ doesn't have a  $\delta^-_n$-jump or a $\delta$-jump (see Figure~7.1(a)).
Then, $t^{*-}_n < t^*_n < t^{*+}_n$.
Therefore, for $x \in \supp(u'_n)$, at the limit: $\ind(h_n(x) \leq t^-_n) = \ind(h^*_n(x) \leq t^*_n(x))$.

In the second, $P$ has a $\delta^-_n$-jump but it doesn't have a $\delta$-jump (see Figure~7.1(b)).
Then, for sufficiently large $n$, $t^{*-}_n = t^*_n < t^{*+}_n$ and $\delta^-_n < \Pr\{Y^*_n \leq t^{*-}_n\} \leq \delta$.
Therefore, for $x \in \supp(u'_n)$, at the limit: $\ind(h_n(x) \leq t^-_n) = \ind(h^*_n(x) \leq t^*_n(x))$.

In the third, $P$ doesn't have a $\delta^-_n$-jump but it does have a $\delta$-jump (see Figure~7.2(a)).
Then, for sufficiently large $n$, $t^{*-}_n < t^*_n = t^{*+}_n$.
Therefore, for $x \in \supp(u'_n)$, at the limit: $\ind(h_n(x) \leq t^-_n) = \ind(h^*_n(x) < t^*_n(x))$.

In the fourth, $P$ has both a $\delta^-_n$-jump and a $\delta$-jump (see Figure~7.2(b)).
Then, for sufficiently large $n$, $t^{*-}_n = t^*_n = t^{*+}_n$.
Therefore, for $x \in \supp(u'_n)$, at the limit: $\ind(h_n(x) < t^-_n) = \ind(h^*_n(x) < t^*_n(x))$.
\end{proof}

\begin{remark}[Rates of Convergence and Finite Sample Notes]
The time complexity for our algorithm is $O(C(n) + n)$, where $C(n)$ is
the time complexity for the soft-classification algorithm.
The rate of convergence for the given algorithm is $\Theta\left(\frac{1}{\theta_n}\right) = \Theta\left(\frac{1}{n^{\frac{1}{2} + \epsilon}}\right)$, for
any $\epsilon > 0$, in addition
to the classifier's rate of convergence.\footnote{
The classifier doesn't truly need to minimize the loss.
Depending on the quantile-estimator, it is possible that only classifier errors which result in ``ordering
violations'' across the $\delta$-quantile can affect the output (beyond whether a strong or weak inequality is
used for testing the threshold). Thus, faster rates than the classifier's convergence rate to the minimum
may be possible. Also, ranking algorithms
\cite<see, e.g.,>{ClemenconLugosiaVayatis05} 
 could be used instead of soft-classification. In this case, achievable error
rates could provide (loose) upper bounds on such ordering violations.}
$\theta_n$ is only affected by the quantile-estimator used. In our case, the quantile
estimator utilized only requires that $F$, the cdf whose quantile is being estimated,
be continuous.
To meet this condition we added the noise term $\varepsilon$.
Note that $\varepsilon$ has no effect on the convergence rate; this is because
$\sigma_n$ can vanish as fast as desired. Similarly, by Lemma~\ref{lem: YZ-concentration}, we
can achieve arbitrarily tight bounds on the nearness of the quantiles of $Z_n$ and $Y_n$ by increasing the rate at which $m_n$ tends to infinity.

For finite sample sizes, some additional modifications are advisable. First, in order to ensure that $h^*_n(\cdot) = \rE_{U_n}[h_n(\cdot)]$, $O_n$ should be of size $n' \sim NB(n,\frac{1}{2})$,
and not $n$.
It is also possible to use a non-uniform prior probability (without affecting the algorithm's correctness), if
it is desired.
A validation set could be used for determining the quantile-estimates, rather than the training set. Note that for finite samples, it is not guaranteed
that $\Pr\{Y_n \leq t^-_n\} \approx \delta^-_n$. In fact, it is possible to be significantly larger if $Y_n$ has a large jump in
the range $[t^-_n - m\sigma_n, t^-_n)$.
Since by Lemma~\ref{lem: YZ-concentration}, $\Pr\{Y_n \leq t^-_n - m\sigma_n\} \leq \delta^-_n + \frac{\Phi(-m)}{2\Phi(m)} + o\left(\frac{1}{\theta_n}\right)$,
almost surely,
we can address this
issue by refining the definition of the rejection function output by the algorithm:
\begin{align*}
r_n(x) \eqdef \begin{cases}
	1 & A_n(x) \not\in G^n_P;\\
    \ind(h_n(x) \leq t^-_n - m\sigma_n) & A_n(x) \in G^n_P \textrm { and }\; t^-_n < t^+_n;\\
    \ind(h_n(x) < t^-_n - m\sigma_n) & \textrm{ otherwise. }
 \end{cases}
\end{align*}
Note that this fix isn't possible when using a ranking algorithm in place of a soft binary classifier, since only points, and not values, can be compared (i.e., $x$
and the chosen quantile point in $S_Z$ are compared in order to determine whether to reject $x$).

Finally, one needs to determine $\delta^-_n$, so that it is guaranteed (with high probability) that $\rho(r_n, P) \leq \delta$.
To accomplish this, one must take into account the quantile estimator used, since $\delta^-_n \leq \delta - \frac{1}{\theta_n}$, and $P(\Real^d \setminus \supp(u_n))$,
since this is always rejected.
It is known~\cite{HallHannan88} for the histogram density estimator, upon which the sampling of $O_n$ in the algorithm is loosely-based, that
$g_n$ of order $n^{-\frac{1}{d+2}}$ is optimal for minimizing $L_b$ distance for $1 \leq b < \infty$,
and that $g_n$ of order $\left(\frac{\log n}{n}\right)^\frac{1}{d+2}$ is the correct order
for minimizing $L_\infty$ distance. However, we are only interested in $P(\Real^d \setminus \supp(u_n))$.
We note that this is just the missing mass. Let $n_1$ be the number of grid cells
containing exactly one point from the sample. Then, as shown by \citeA{McallesterSchapire2000}, 
with probability at least $1-\eta$, $P(\Real^d \setminus \supp(u_n)) \leq \frac{n_1}{n} + \left(2\sqrt{2} + \sqrt{3}\right)\sqrt{\frac{\ln{3} - \ln{\eta}}{n}}$.
Clearly, increasing $g_n$ results in $n_1$ decreasing.
Therefore, $g_n$ should be large in order to minimize $P(\Real^d \setminus \supp(u_n))$ and small in order to minimize $U_n(\Real^d \setminus \supp(p))$ (since
if $g_n$ vanishes faster, $\lambda(\supp(u_n) \setminus \supp(p))$ decreases faster as well). This results
in a simple heuristic, namely to set $g_n$ to the smallest value such that $n_1 \leq t$, for some threshold $t$. For example, if we know the sample is ``clean'' in the sense that all points are drawn i.i.d. according to $P$, then we can take $t=0$. A larger value of $t$ could be chosen were we to suspect that the sample may contain noise, for example $t=\log{n}$. In general, it remains an open question of how $g_n$ should be optimized to balance between $P(\Real^d \setminus \supp(u_n))$ and $U_n(\Real^d \setminus \supp(p))$.
\end{remark}
\begin{remark}
\citeA{Cuevas97} use a plug-in approach to support estimation that can be leveraged here
to further decrease $U_n(\Real^d \setminus \supp(p))$ when $p$
has compact support and is continuously differentiable.
Let $g_n = cn^{-\frac{1}{d+2}}$ for some constant $c$, and let $\alpha_n$ be such that $\alpha_n^{-1} = o\left(g_n^{-1}\right)$. For example, $\alpha_n \eqdef \sqrt{g_n}$, or if $d = o\left(\frac{\log{n}}{\log\log{n}}\right)$, then $\alpha_n \eqdef \frac{1}{\log{n}}$.
Then, let $G^n_P$ only contain the ``lower-left'' corners of grid cells containing more than $n\alpha_n$
sample points. Since this only decreases $U_n(\Real^d \setminus \supp(p))$, Lemma~\ref{lem: bayesQ-creation-supp-cont}
remains correct and $U_n(\Real^d \setminus \supp(p)) \aslim 0$. Furthermore, $\lambda(\supp(p) \symdif \supp(u_n)) \aslim 0$~\cite{Cuevas97}, and thus $P(\Real^d \setminus \supp(u_n)) \aslim 0$, as well.
One may use results given by~\citeA{McallesterSchapire2000} to obtain an upper bound on $P(\Real^d \setminus \supp(u_n))$ for finite sample sizes.
\end{remark}
\begin{remark}
It may be possible to improve on the convergence rate for the quantile-estimator, by using more information than 1 to 2 order statistics.
This carries with it the risk of being less robust to classifier error. One such method is
kernel-based quantile regression \cite{ChristmannSteinwart08},
which is provably consistent. More complex quantile
estimation methods may be useful in improving the convergence rate, without
affecting the overall time complexity (dependent on the soft-classifier's time complexity), but these may exclude
the use of ranking algorithms, as the quantile estimation method may rely on more than the relative ordering of the sample points.
\end{remark}

\subsection{Discussion}
We have provided a computationally simple and consistent procedure for determining a $\delta$-maximal level set estimator, which for measures that don't have a $\delta$-jump, is also a $\delta$-tight LDRF.
While we have generated a uniform distribution for identifying low-density areas of $P$,
this is not strictly necessary. Indeed, to return to the investment analogy,
it is only necessary that the low-density areas have greater $ROI$ than the high density areas.
We term distributions which meet this condition as \emph{lenient adversarial distributions}.
The soft-classification approach used in this section applies for any such lenient adversarial distribution.
Indeed, lenient adversarial distributions can also be used when the underlying mechanism is a hard-classifier.
See \cite{Nisenson2010} for a full discussion on lenient adversarial distributions and their relation
to the existing SCC literature.
The importance of these results, including the generation of a ``tight'' lenient adversarial
distribution as given by the algorithm,
lies not only
in their justification of various approaches in the literature, but also in their applicability. Their only
requirements are on the loss function used and that $P$ be absolutely continuous with respect to the Lebesgue measure.
Since most common loss functions satisfy the requirements and the condition on $P$ is quite weak, a large body of results for regression and two-class classification can be utilized.

\section{On The Dual SCC Problem}
\label{sec:dual}

In the dual SCC problem the learner would like to guarantee the type II error, and minimize the type I error.
This problem can be relevant to intrusion detection and authentication applications as well as to data mining and novelty  detection. For example, in a biometric passport authentication system the authorities may mandate
a maximal intruder pass rate. Under this constraint one would clearly want to minimize the
false alarm rate. An alternative example is spam detection. A user may already have a two-class classification spam detection
system in place. This system may perform very well at detecting spam which is similar to previously encountered spam. However, spammers
are continually updating their spam so it will evade these filters. A second level system could be created, where
an SCC classifier is trained on the legitimate e-mails. Any e-mails which the first-level determines as legitimate would then
be tested against the second-level SCC classifier, which would either accept or reject them. A user may be willing to tolerate
a certain level of spam from this second-level system, such as 1 in every 100 messages belonging to a new spam class getting through, but given that rate, would
like as few legitimate messages as possible to be rejected.

Let $\delta_{\cQ}$ be the maximally allowed type II error.
Then the dual SCC problem is:
\begin{align*}
\argmin_r \;&\rho(r,P)\\
\textrm{such that:}  &\;\;\rho(r,Q) \geq 1 - \delta_{\cQ},\;\; \forall Q \in \cQ,
\end{align*}
where $r(\cdot)$ is any function $\Omega \to [0,1]$.
When $\Omega$ is discrete and finite, this problem
has a finite number of variables and a possibly infinite number of constraints depending on $\cQ$.
Thus, it is a linear semi-infinite program.

We represent by $r^*_I(\cdot)$ a solution to the primal problem, and by $r^*_{II}(\cdot)$ a solution
to the dual problem.
Define $\delta^* \eqdef \rho(r^*_{II}, P)$ and
$\delta^*_{\cQ} \eqdef 1 - \min_{Q \in \cQ} \rho(r^*_I, Q)$.
Since $r(\omega) \equiv \delta$ and $r(\omega) \equiv \delta_{\cQ}$
are respectively feasible solutions to the primal and dual problems, $\delta^* \leq \delta_{\cQ}$ and $\delta \leq \delta^*_{\cQ}$.

\begin{lemma}
\label{lem: dual-tight}
Let $\Omega$ be finite and discrete.
If $\delta^*_{\cQ} > 0$, then $\rho(r^*_I,P) = \delta$.
If $\delta^* > 0$, then $\min_Q \rho(r^*_{II}, Q) = 1 - \delta_{\cQ}$.
\end{lemma}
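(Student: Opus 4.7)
The plan is to prove both statements by contradiction using the same mechanism: if the constraint in question is strictly slack at the optimum, then one can nudge the optimizer in a way that strictly improves the objective while preserving feasibility. The two nudges are affine rescalings of the rejection function (toward $1$ for the primal, toward $0$ for the dual). Since $\rho(\cdot, D)$ is linear in $r$ for any fixed distribution $D$, both perturbations interact cleanly with $\min_{Q \in \cQ}$, so the bookkeeping is straightforward.

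For the primal assertion, suppose toward contradiction that $\delta^*_\cQ > 0$ yet $\rho(r^*_I, P) < \delta$. Since $\delta^*_\cQ = 1 - \min_{Q \in \cQ} \rho(r^*_I, Q)$, the hypothesis $\delta^*_\cQ > 0$ gives $\min_{Q \in \cQ} \rho(r^*_I, Q) = 1 - \delta^*_\cQ < 1$. For $\epsilon \in (0,1]$ define the ``inflation'' $r'(\omega) \eqdef (1-\epsilon)r^*_I(\omega) + \epsilon$, which is valued in $[0,1]$. For any distribution $D$, $\rho(r', D) = (1-\epsilon)\rho(r^*_I, D) + \epsilon$. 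Taking $D = P$, feasibility $\rho(r', P) \leq \delta$ holds whenever $\epsilon \leq (\delta - \rho(r^*_I, P))/(1 - \rho(r^*_I, P))$, which is a positive quantity by the strict-slack assumption. Taking $\min$ over $Q \in \cQ$, $\min_Q \rho(r', Q) = (1-\epsilon)(1-\delta^*_\cQ) + \epsilon$, which strictly exceeds $1 - \delta^*_\cQ$ by $\epsilon\,\delta^*_\cQ > 0$. This contradicts the optimality of $r^*_I$, which by definition maximizes $\min_Q \rho(r, Q)$ over $\delta$-valid $r$.

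For the dual assertion, suppose $\delta^* > 0$ and assume toward contradiction $\min_{Q \in \cQ} \rho(r^*_{II}, Q) > 1 - \delta_\cQ$; set $\eta \eqdef \min_Q \rho(r^*_{II}, Q) - (1 - \delta_\cQ) > 0$. For $\epsilon \in (0,1]$ define the ``deflation'' $r''(\omega) \eqdef (1-\epsilon)\,r^*_{II}(\omega)$, again valued in $[0,1]$. For any $D$, $\rho(r'', D) = (1-\epsilon)\rho(r^*_{II}, D)$. Taking $D = P$, $\rho(r'', P) = (1-\epsilon)\delta^* < \delta^*$ because $\delta^* > 0$, so $r''$ strictly improves the dual objective. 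Taking $\min$ over $Q$, $\min_Q \rho(r'', Q) = (1-\epsilon)(1 - \delta_\cQ + \eta)$, which remains $\geq 1 - \delta_\cQ$ for all sufficiently small $\epsilon$ (specifically $\epsilon \leq \eta/(1 - \delta_\cQ + \eta)$). Hence $r''$ is dual-feasible and strictly beats $r^*_{II}$, a contradiction.

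The only real subtlety is that the argument uses $\min_{Q \in \cQ} \rho(r, Q)$ as though it were attained; in the spirit of the paper's footnote about replacing $\cQ$ by $cl(\cQ)$ when the infimum is not attained, one would phrase the perturbation argument in terms of the infimum and use that the map $Q \mapsto \rho(r, Q)$ is linear and hence lower-semicontinuous, so the infimum behaves well under the affine reparametrization $r \mapsto (1-\epsilon)r + \epsilon$ or $r \mapsto (1-\epsilon)r$. No further hypotheses on $\cQ$ beyond those implicit in the problem statement are needed, and the finiteness of $\Omega$ is used only to guarantee that the affine perturbations stay within $[0,1]^\Omega$ for small $\epsilon$.
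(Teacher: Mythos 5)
Your proof is correct and follows essentially the same perturbation argument as the paper, which uses a clipped additive shift $r^*\mapsto\min\{1,\,r^*+c/N\}$ (respectively $\max\{0,\,r^*-c/N\}$) where you use the affine rescalings $r\mapsto(1-\epsilon)r+\epsilon$ and $r\mapsto(1-\epsilon)r$. Your variant is marginally cleaner in that $\min_{Q\in\cQ}$ commutes exactly with an increasing affine map of $\rho$, so the strict improvement $\epsilon\,\delta^*_{\cQ}>0$ is immediate, whereas the paper's clipped shift leaves implicit the check that the minimizing $Q$ cannot concentrate entirely where $r^*_I=1$.
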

\begin{proof}
Let $\delta^*_{\cQ} > 0$. Let us assume by contradiction that $\rho(r^*_I, P) < \delta$.
Then, define $r'(\omega) \eqdef \min\left\{1, r^*_I(\omega) + \frac{\delta - \rho(r^*_I,P)}{N}\right\}$.
Clearly, $\rho(r',P) \leq \delta$ and $\min_Q \rho(r',Q) > \min_Q \rho(r^*_I,Q)$. Contradiction.

Let $\delta^* > 0$. Let us assume by contradiction that $\min_Q \rho(r^*_{II}, Q) > 1 - \delta_{\cQ}$.
Then, define $r''(\omega) \eqdef \max\left\{0, r^*_{II}(\omega) - \frac{\min_Q \rho(r^*_{II},Q) - (1 - \delta_{\cQ})}{N}\right\}$.
Clearly, $\min_Q \rho(r'',Q) \geq 1 - \delta_{\cQ}$ and $\rho(r'',P) < \rho(r^*_{II},P)$. Contradiction.
\end{proof}

We define $R^I_\delta \eqdef R^*_\delta$ as the set of primal-optimal rejection
functions, and $R^{II}_{\delta_{\cQ}}$ as the set of dual-optimal rejection functions.
Examining the dual SCC problem in the investment analogy, the learner is
assigned a target amount of money, $1 - \delta_{\cQ}$, which must be obtained
on selling all assets. The learner's goal is to achieve this with the minimal
starting investment. We can see that if the learner invests no money,
then the amount of money made will fall short of the target. By investing in assets with higher ROI,
the learner makes the most amount of progress towards the target with the
least amount of money invested. Thus, we can see that the optimal investment strategy
is likely to be similar to that of the primal problem.
In fact, as shown by the following theorem, under mild conditions, the two
sets of optimal strategies are identical.

\begin{theorem}[Primal-Dual Equivalence]
\label{thm: dual-equiv}
Let $\Omega$ be finite and discrete.
If $\delta > 0$ and $\delta^*_{\cQ} > 0$, then $R^{II}_{\delta^*_{\cQ}} = R^I_\delta$.
If $\delta_{\cQ} > 0$ and $\delta^* > 0$, then $R^{I}_{\delta^*} = R^{II}_{\delta_{\cQ}}$.
\end{theorem}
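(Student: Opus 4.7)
The approach is to apply Lemma~\ref{lem: dual-tight} together with a convex-combination ``scaling'' trick. In each equality, one inclusion is easy: the optimum of the ``other'' problem is feasible for the given one and achieves the same objective, so the two optima sandwich each other. The other inclusion rules out strictly better rejection functions by interpolating toward $\mathbf{1}$ or $\mathbf{0}$ to simultaneously adjust one objective while improving the other.

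For $R^I_\delta \subseteq R^{II}_{\delta^*_{\cQ}}$: let $r \in R^I_\delta$. Lemma~\ref{lem: dual-tight} (using $\delta^*_{\cQ} > 0$) gives $\rho(r,P) = \delta$, and by primal-optimality $\min_Q \rho(r,Q) = 1 - \delta^*_{\cQ}$, so $r$ is dual-feasible with dual-objective $\delta$. If some dual-feasible $r'$ had $\rho(r',P) < \delta$, the convex combination
\begin{equation*}
r_\alpha(\omega) \eqdef (1-\alpha) r'(\omega) + \alpha, \qquad \alpha^* \eqdef \frac{\delta - \rho(r',P)}{1 - \rho(r',P)} \in (0,1),
\end{equation*}
would satisfy $\rho(r_{\alpha^*},P) = \delta$ (primal-feasibility), while the identity $\rho(r_\alpha,Q) = (1-\alpha)\rho(r',Q) + \alpha$ yields
\begin{equation*}
\min_Q \rho(r_{\alpha^*},Q) = (1-\alpha^*)\min_Q \rho(r',Q) + \alpha^* \geq (1-\delta^*_{\cQ}) + \alpha^* \delta^*_{\cQ} > 1 - \delta^*_{\cQ},
\end{equation*}
contradicting primal-optimality of $r^*_I$. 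Conversely, for $R^{II}_{\delta^*_{\cQ}} \subseteq R^I_\delta$: $r^*_I$ is itself dual-feasible, so any dual-optimal $r$ satisfies $\rho(r,P) \leq \rho(r^*_I,P) = \delta$, hence $r$ is primal-feasible; dual-feasibility gives $\min_Q \rho(r,Q) \geq 1 - \delta^*_{\cQ}$, and since $1 - \delta^*_{\cQ}$ is already the maximal primal objective value, equality must hold, so $r$ is primal-optimal.

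The second equality $R^I_{\delta^*} = R^{II}_{\delta_{\cQ}}$ is proved symmetrically, with the downward scaling $r_\alpha(\omega) \eqdef (1-\alpha) r'(\omega)$ replacing the upward one: given a primal-feasible $r'$ (for threshold $\delta^*$) with $\min_Q \rho(r',Q) > 1 - \delta_{\cQ}$, choosing $\alpha^* \in (0,1)$ so that $\min_Q \rho(r_{\alpha^*},Q) = 1 - \delta_{\cQ}$ exactly makes $r_{\alpha^*}$ dual-feasible, while $\rho(r_{\alpha^*},P) = (1-\alpha^*) \rho(r',P) < \delta^*$ (using $\delta^* > 0$) contradicts dual-optimality of $r^*_{II}$. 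The main technical obstacle in both equalities is ensuring that the scaling step produces a \emph{strict} improvement — this is exactly where the positivity hypotheses $\delta, \delta^*_{\cQ} > 0$ (respectively $\delta_{\cQ}, \delta^* > 0$) are indispensable, since they force $\alpha^* > 0$ and the relevant multiplicative factor strictly less than one.
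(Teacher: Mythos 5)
Your proof is correct and follows essentially the same route as the paper: invoke Lemma~\ref{lem: dual-tight}, show each problem's optimum is feasible for the other, and rule out a strictly better solution by a feasibility-preserving perturbation. The only (cosmetic) difference is that you perturb via a convex combination with the constant functions $\mathbf{1}$ or $\mathbf{0}$, whereas the paper uses a truncated additive shift $\min\{1, r^*(\omega) + \frac{\delta - \rho(r^*,P)}{N}\}$; your version makes the strictness of the improvement slightly more transparent.
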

\begin{proof}
Let $\delta > 0$ and $\delta^*_{\cQ} > 0$. By Lemma~\ref{lem: dual-tight}, $\rho(r^*_I,P) = \delta$.
Clearly, $r^*_I$ is a feasible solution to the dual problem with $\delta_{\cQ} = \delta^*_{\cQ}$.
Thus, $\delta^* \leq \delta$.
Let us assume by contradiction that $\delta^* < \delta$. Then, there must exist some $r^*(\cdot)$ such that $\min_Q \rho(r^*,Q) \geq 1 - \delta^*_{\cQ}$
and $\rho(r^*,P) < \delta$.
Define $r'(\omega) \eqdef \min\left\{1, r^*(\omega) + \frac{\delta - \rho(r^*,P)}{N}\right\}$.
Then, clearly $\rho(r',P) \leq \delta$, but $\min_Q \rho(r',Q) > \min_Q \rho(r^*,Q) \geq 1 - \delta^*_{\cQ} = \min_Q \rho(r^*_I, Q)$. Contradiction. Therefore, $\delta^* = \delta$.

Since $\delta^* = \delta > 0$, by Lemma~\ref{lem: dual-tight}, $\min_Q \rho(r^*_{II}, Q) = 1 - \delta_{\cQ}$.
Thus, $r \in R^I_\delta$ if $\min_Q \rho(r,Q) = 1 - \delta^*_{\cQ}$ and $\rho(r,P) = \delta$.
Likewise, $r \in R^{II}_{\delta^*_{\cQ}}$ if $\min_Q \rho(r,Q) = 1 - \delta^*_{\cQ}$ and $\rho(r,P) = \delta$.
Therefore, $R^{II}_{\delta^*_{\cQ}} = R^I_\delta$.

Let $\delta_{\cQ} > 0$ and $\delta^* > 0$. By Lemma~\ref{lem: dual-tight}, $\min_Q \rho(r^*_{II}, Q) = 1 - \delta_{\cQ}$.
Clearly, $r^*_{II}$ is a feasible solution to the primal problem with $\delta = \delta^*$.
Thus, $\delta^*_{\cQ} \leq \delta_{\cQ}$.
Let us assume by contradiction that $\delta^*_{\cQ} < \delta_{\cQ}$. Then, there must exist some $r^*(\cdot)$ such that $\min_Q \rho(r^*,Q) > 1 - \delta_{\cQ}$
and $\rho(r^*,P) \leq \delta^*$.
Define $r''(\omega) \eqdef \max\left\{0, r^*(\omega) - \frac{\min_Q \rho(r^*,Q) - (1 - \delta_{\cQ})}{N}\right\}$.
Then, clearly $\min_Q \rho(r'',Q) \geq 1 - \delta_{\cQ}$, but $\rho(r'',P) < \rho(r^*,P) \leq \delta^* = \rho(r^*_{II},P)$. Contradiction. Therefore, $\delta^*_{\cQ} = \delta_{\cQ}$.

Since $\delta^*_{\cQ} = \delta_{\cQ} > 0$, by Lemma~\ref{lem: dual-tight}, $\rho(r^*_I,P) = \delta = \delta^*$.
Thus, $r \in R^I_\delta$ if $\min_Q \rho(r,Q) = 1 - \delta_{\cQ}$ and $\rho(r,P) = \delta^*$.
Likewise, $r \in R^{II}_{\delta_{\cQ}}$ if $\min_Q \rho(r,Q) = 1 - \delta_{\cQ}$ and $\rho(r,P) = \delta^*$.
Therefore, $R^{I}_{\delta^*} = R^{II}_{\delta_{\cQ}}$.
\end{proof}

Using Theorem~\ref{thm: dual-equiv}, it is trivial to solve the dual SCC problem where $\cQ = \cQ_\Lambda = \{Q: D_P(Q) \geq \Lambda\}$.
To begin with, since we assume that $p_i > 0$ for all $i$, note that $\delta_{\cQ} < 1 \Rightarrow \delta^* > 0$. Therefore, since $\delta_{\cQ} > 0$
the optimal solution sets are identical by Theorem~\ref{thm: dual-equiv}, and all the intermediate results, including Theorem~\ref{thm:gd-2-points}, are correct when
solving the primal problem with $\delta = \delta^* > 0$. Therefore, it is trivial to construct a dual-analogue to Theorem~\ref{thm:linear_program}. We also prove
the analogue to Lemma~\ref{lem:lin-prog-delta}.
\begin{theorem}[Dual SCC Linear Program]
\label{thm:dual-linear_program}
An optimal soft rejection function
and the optimal type I error, $z_I$, is obtained by solving the following
linear program:
\begin{align}
\label{eqn:dual-linprog}
  \text{minimize}_{r_1, r_2, \dots, r_K, z_I} \;\;z_I, \;&\; \text{subject to:}\nonumber\\
                    & \sum_{i=1}^{K}{r_i|S_i|p(S_i)} \leq z_I\\
                 &1 \geq r_1 \geq r_2 \geq \dots \geq r_K \geq 0\nonumber\\
                 &r_w \geq 1 - \delta_{\cQ}\nonumber\\
                 &\rho_i \geq 1 - \delta_{\cQ}, \; i \in \{1, 2, \dots, |\cL||\cH|+|\cM|\} .\nonumber
\end{align}
\end{theorem}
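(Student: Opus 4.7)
The plan is to derive the dual LP by invoking the Primal--Dual Equivalence (Theorem~\ref{thm: dual-equiv}) to transport Theorem~\ref{thm:linear_program} directly into the dual setting, rather than redoing the entire structural analysis from Section~\ref{sec:games}. First I would dispose of the degenerate cases. If $\delta_{\cQ} \geq 1$, then $r \equiv 0$ is trivially dual-feasible and the LP correctly returns $z_I = 0$. Otherwise $\delta_{\cQ} < 1$, and as already observed in the remark following Theorem~\ref{thm: dual-equiv}, the assumption $p_i > 0$ for all $i$ forces $\delta^* = \rho(r^*_{II},P) > 0$. With both $\delta_{\cQ} > 0$ and $\delta^* > 0$, Theorem~\ref{thm: dual-equiv} yields $R^{II}_{\delta_{\cQ}} = R^{I}_{\delta^*}$, so every dual optimum coincides with a primal optimum at the specific value $\delta = \delta^*$, and every structural property that went into Theorem~\ref{thm:linear_program} (strict monotonicity, two-event support for an adversary-optimal $Q$, enumerability of the candidate rejection rates) is inherited unchanged.

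Given this transfer, I would then rebuild the LP term by term, mirroring the primal derivation. By Lemma~\ref{lem:gd_hasPropsABC}, $\cQ_{\Lambda}$ satisfies $\PropB$ w.r.t.~$P$, so Theorem~\ref{thm:chr-monotonic-equality} supplies a strictly monotone optimal $r$, justifying the partition into level sets $S_1,\dots,S_K$ and the use of variables $r_1 \geq \dots \geq r_K$ with $r_i$ constant on $S_i$. The type I error $\rho(r,P)$ becomes $\sum_i r_i |S_i| p(S_i)$; minimizing this is encoded by the objective $\min z_I$ together with the slack constraint $\sum_i r_i |S_i| p(S_i) \leq z_I$, which will be tight at optimum. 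On the feasibility side, Theorem~\ref{thm:gd-2-points} (applied for any non-vulnerable $r$) tells us that an adversary-optimal $Q$ has effective support of size at most two, so $\min_{Q \in \cQ_{\Lambda}} \rho(r,Q)$ is determined by exactly the same finite list of candidate rates $r_w$ (from Lemma~\ref{lem: w}, covering single-event distributions concentrated on the unique level set in $\cM$) and $\rho_i$ (covering two-event distributions spanning $\cL$ and $\cH$) used in Theorem~\ref{thm:linear_program}. Requiring each candidate to be $\geq 1 - \delta_{\cQ}$ exactly encodes the dual constraint $\min_Q \rho(r,Q) \geq 1 - \delta_{\cQ}$.

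The main obstacle is the dual analogue of Lemma~\ref{lem:lin-prog-delta}: since Theorem~\ref{thm:gd-2-points} was invoked under the non-vulnerability assumption, I must verify that the LP still returns an optimal solution when its own output $r^*$ turns out to be vulnerable. In that regime there is some $j \in I_{min}(r^*)$ with $D_P(X^{(j)}) \geq \Lambda$, so the adversary can concentrate on $j$ to achieve $\rho(r^*,Q) = \min_i r^*_i$, and true dual feasibility collapses to the single condition $\min_i r^*_i \geq 1 - \delta_{\cQ}$. Minimizing $\sum_i r_i |S_i| p(S_i)$ under this constraint together with monotonicity forces the constant profile $r_i \equiv 1 - \delta_{\cQ}$, with optimal type I error $1 - \delta_{\cQ}$. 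I would then verify, by transcribing the shrink-and-contradict argument from Lemma~\ref{lem:lin-prog-delta}, that the LP's output in this vulnerable regime is exactly this constant profile: any feasible point with some $r_i > 1 - \delta_{\cQ}$ could be shifted downward while still satisfying every $r_w \geq 1 - \delta_{\cQ}$ and $\rho_i \geq 1 - \delta_{\cQ}$ bound, contradicting minimality of $z_I$. Combined with the equivalence argument in the non-vulnerable regime, this closes the proof.
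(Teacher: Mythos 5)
Your proposal is correct and follows essentially the same route as the paper: the paper also derives this theorem by invoking the Primal--Dual Equivalence (noting $\delta_{\cQ}<1$ and $p_i>0$ force $\delta^*>0$, so all intermediate results including Theorem~\ref{thm:gd-2-points} transfer), then transcribes the primal LP with the equality $\rho(r,P)=\delta$ replaced by the minimized slack variable $z_I$ and the max-min constraints replaced by $\geq 1-\delta_{\cQ}$, and handles the vulnerable case in the separate Lemma~\ref{lem:dual-lin-prog-delta} exactly as you outline. The only caution is that your one-line ``shift downward'' summary of the vulnerable case is too coarse on its own (lowering one $r_i$ can violate a $\rho_j$ constraint), but since you defer to the full level-set case analysis of Lemma~\ref{lem:lin-prog-delta}, the argument closes as in the paper.
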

\begin{lemma}
\label{lem:dual-lin-prog-delta}
Let $r^*$ be the solution to the linear program. If $r^*$ is vulnerable, then $r^* = r^{1-\delta_{\cQ}}$.
\end{lemma}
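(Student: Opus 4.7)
The plan is to mirror the (unstated) proof of Lemma~\ref{lem:lin-prog-delta} in the dual setting. By Lemma~\ref{lem:gd_hasPropsABC} together with Theorem~\ref{thm:chr-monotonic-equality}, the LP can be restricted to strictly monotone rejection functions, so I parametrise $r^*$ by level-set values $1 \geq r^*_1 \geq r^*_2 \geq \cdots \geq r^*_K \geq 0$ with $r^*_K = \min_i r^*_i$.

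The central step is to show that vulnerability forces $r^*_w = r^*_K$. Vulnerability provides some $j \in I_{min}(r^*)$ with $D_P(X^{(j)}) \geq \Lambda$; by the indexing convention on level sets (probabilities are increasing and, since $D_P$ is receding, divergences are decreasing in the index), $j$ must lie in some $S_{i_0}$ with $i_0 \leq w$, and strict monotonicity of $r^*$ gives $r^*_{i_0} = r^*(j) = r^*_K$. Since $i_0 \leq w \leq K$, monotonicity sandwiches $r^*_{i_0} \geq r^*_w \geq r^*_K$, so $r^*_w = r^*_K$. The explicit LP constraint $r_w \geq 1-\delta_{\cQ}$ now yields $r^*_K \geq 1-\delta_{\cQ}$, and monotonicity propagates this to $r^*_i \geq 1-\delta_{\cQ}$ for every $i$.

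What remains is a short averaging argument. The LP objective constraint combined with the bound just derived gives
\[
z^*_I \;\geq\; \sum_{i=1}^K r^*_i\,|S_i|\,p(S_i) \;\geq\; (1-\delta_{\cQ})\sum_{i=1}^K |S_i|\,p(S_i) \;=\; 1-\delta_{\cQ}.
\]
For the matching upper bound, I check that the constant vector $r^{1-\delta_{\cQ}}$ is LP-feasible: it is trivially monotone, satisfies $r_w = 1-\delta_{\cQ}$, and each $\rho_i$ becomes a convex combination of values all equal to $1-\delta_{\cQ}$, hence $\rho_i = 1-\delta_{\cQ}$; its objective value is $1-\delta_{\cQ}$, so $z^*_I \leq 1-\delta_{\cQ}$. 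Combining the two bounds, $z^*_I = 1-\delta_{\cQ}$ and $\sum_i r^*_i |S_i| p(S_i) = 1-\delta_{\cQ}$; together with $r^*_i \geq 1-\delta_{\cQ}$ this forces $r^*_i = 1-\delta_{\cQ}$ for every $i$, i.e., $r^* = r^{1-\delta_{\cQ}}$.

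The only genuinely non-routine point is the first one: correctly translating vulnerability into $r^*_w = r^*_K$ via the level-set indexing. Everything downstream is equality-in-the-weighted-average combined with the obvious feasibility of the constant strategy $r^{1-\delta_{\cQ}}$.
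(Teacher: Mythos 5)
Your proof is correct, and it reaches the conclusion by a mildly different route than the paper's. The paper's argument first proves that the lowest-probability level set $S_1$ attains the minimum rejection value ($u=1$), via a contradiction split into two cases according to whether $D^{S_K}_P \geq \Lambda$ or $D^{S_K}_P < \Lambda$; in the second case it further sub-splits on whether $j$'s level set lies in $\cL$ or in $\cM$ to locate the constraint ($\rho^{(l,K)} \geq 1-\delta_{\cQ}$ or $r_w \geq 1-\delta_{\cQ}$) that forces $r^*_K \geq 1-\delta_{\cQ}$. Having shown $u=1$, it concludes all $r^*_i$ are equal and the objective reduces to $r^*_K$, minimized at $1-\delta_{\cQ}$. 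Your sandwich $r^*_{i_0} \geq r^*_w \geq r^*_K$ with $i_0 \leq w$ extracts $r^*_K \geq 1-\delta_{\cQ}$ from the single constraint $r_w \geq 1-\delta_{\cQ}$ in one stroke, eliminating the entire case analysis; you then replace the ``$u=1$ hence all equal'' step with the equality-in-weighted-average argument, using feasibility of the constant $r^{1-\delta_{\cQ}}$ (which both proofs need) and the strict positivity of the weights $|S_i|p(S_i)$. Both proofs rest on the same two pillars --- vulnerability forces $r^*_{\min} \geq 1-\delta_{\cQ}$, and the constant strategy witnesses $z^*_I \leq 1-\delta_{\cQ}$ --- but your decomposition is shorter and, I would say, cleaner than the paper's.
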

\begin{proof}
Let $r^*$ be a vulnerable solution to the linear program~(\ref{eqn:dual-linprog}),
which clearly satisfies
$1 \geq r^*_1 \geq r^*_2 \geq \dots \geq r^*_K \geq 0$.
Therefore, for all $i \in I_{min}(r^*)$, $r^*(i) = r^*_K$.
We define $z^*_I$ to be the minimal value of $z_I$ that the linear program achieves for $r^*$.
Let $j = \argmin_{i \in I_{min}(r^*)} p_i$ and let $S_u$ be the level set to
which $j$ belongs.
We now prove that $u=1$.

We first deal with the case where $D^{S_K}_P \geq \Lambda$ (in which case the constraint is completely vacuous).
We note in this case that $S_1, S_2, \dots, S_K \in \cL \bigcup \cM$, and therefore $w = K$.
Thus, we have $r^*_1 \geq r^*_2 \geq \dots \geq r^*_K = r^*_w \geq 1-\delta_{\cQ}$.
Therefore, $\sumk |S_i|p(S_i) r^*_i \geq r^*_K \geq 1-\delta_{\cQ}$. Therefore, $z^*_I \geq 1-\delta_{\cQ}$.
We note that $r_1 = r_2 = \dots r_K = 1-\delta_{\cQ}$ is a valid solution to the linear program
for which $z_I = 1-\delta_{\cQ}$, which is the minimal value achievable. Therefore, $z^*_I = 1-\delta_{\cQ}$.
If $u>1$, then $r^*_1 > r^*_K \geq z^*_I = 1-\delta_{\cQ}$ and $\sumk |S_i|p(S_i) r^*_i > r^*_K \geq 1-\delta_{\cQ}$.
Therefore, if $D^{S_K}_P \geq \Lambda$, $u = 1$.

We now turn our attention to the case where $D^{S_K}_P < \Lambda$. If we assume by contradiction that $u > 1$,
then $r^*_{u-1} > r^*_u = r^*_{u+1} = \dots = r^*_K$.
$D_P\left(X^{(j)}\right) \geq \Lambda$ and by our assumption, $D^{S_K}_P < \Lambda$, which implies that $S_K \in \cH$.
If $D_P\left(X^{(j)}\right) > \Lambda$,
then there exists some $l$ for which $j \in S_u \equiv S_l \in \cL$.
The rejection rate for the level-set pair, $(l,K)$,
is $r^*_K \geq 1-\delta_{\cQ}$.
Otherwise, $D_P\left(X^{(j)}\right) = \Lambda$, and $j \in S_u \equiv S_m \in \cM$, and
we have a rejection rate for $S_m$ of $r^*_m = r^*_K$ and since $r_w \geq 1-\delta_{\cQ}$,
this implies that $r^*_K = r^*_m = r_w \geq 1-\delta_{\cQ}$.
Therefore, since $u > 1$, we get that $\sumk |S_i|p(S_i) r^*_i > r^*_K \geq 1-\delta_{\cQ}$.
Therefore, if $D^{S_K}_P < \Lambda$, $u = 1$.

Therefore, $u=1$. This results in $r^*_1 = r^*_2 = \dots = r^*_K \geq 1-\delta_{\cQ}$.
Therefore, $\sumk |S_i|p(S_i) r^*_i = r^*_K$ is clearly minimized by $r^*_K = 1-\delta_{\cQ}$,
or $r^* = r^{1-\delta_{\cQ}}$.
\end{proof}


%
%
%
%

\section{Concluding Remarks}

We have introduced a game-theoretic approach to the SCC problem.
In this approach the learner is opposed by an adversary. We believe that this viewpoint
is essential for analyzing SCC applications such as intrusion detection and, in general,
for ``agnostic''
analysis of single-class classification.
This game-theoretic view lends itself
well to analysis, allowing us to prove under what conditions low-density rejection is hard-optimal and
if an optimal monotone rejection function is guaranteed to exist.
Our analysis introduces soft decision strategies, which potentially allow for
significantly better
performance in our adversarial setting.

Observing the learner's futility when facing an omniscient and unlimited adversary, we considered
restricted adversaries and provided full analysis of an interesting family of constrained games (in a decision-theoretic ``Bayesian''
setting where $P$ is assumed to be known).
The constraint we imposed on the adversary, given in terms of a divergence gap
between the target and opposing distributions, is inspired by similar constraints
used in ``two-sample
problem'' related work in information theory \cite{Ziv88,Gut89,ZM93}.
Of course, to compute the optimal learner strategy one has to know the exact value of this
divergence gap, which is unknown in pure SCC problems.
In applications we expect that something will be known or could be hypothesized about
possible opposing distributions. For example, in biometric authentication,
one should be able to statistically measure this gap.
Thus, one could perhaps determine with a high confidence level that at least 99.9\% of the
population has a distribution with a KL-Divergence of at least 10 from the distribution of any member of the population.
This can obviously be extended to $k$-factor authentication \cite<see, e.g.,>{PointchevalZ08}. Assuming
that the adversary may know $k-1$ factors, gaps can be found for each of the factors in order to ensure
a particular intruder pass rate, with high probability.
A different type of example occurs in extremely unbalanced two-class classification problems.
Here, one could utilize the very few given examples from the other class to infer a bound on the gap.
This complements the results of \citeA{KowalczykR03} where one-class learners were found to out-perform
their two-class counterparts in some settings.

Our final major contribution is a simple and computationally feasible one-class classification algorithm.
The SCC classifier is generated by thresholding a soft two-class classifier's output, where the output
serves as a proxy for a density estimate, and a quantile estimate serves as the threshold. This approach can be extended to other use cases.
For example, in \cite{YehLeeLee09}, a multi-class classification problem is solved
by constructing SVDD \cite{TaxD99} one-class classifiers, where each class is described by a sphere, and learning a discriminant function which
assigns a test point the class whose sphere center-point it is ``nearest'' to
(the distance is normalized by various statistics). Instead of using SVDD,
our approach would be to create a
two-class classifier for each class, where the second class is uniformly distributed over
the active cells. A test point would be passed to each two-class classifier and the class chosen would be that
belonging to the classifier which ranked the test point in the highest quantile (relative to the training sample for each class). This classification scheme makes sense because it labels the test point with
the class for which it has the highest ``relative'' density (relative to other points within each class).
Thus, we achieve the same goal without resorting to heuristics.

We have introduced a dual SCC problem and shown that, under very weak conditions, the solution
sets for the primal and dual problems coincide. This allows one to easily extend results from one setting
to the other, as we demonstrated by providing the dual solution to the constrained family of games considered earlier.

Various extension and generalizations to these results can be found in \cite{Nisenson2010}.
These include extensions of Section~\ref{sec:characterization} results
to the infinite discrete setting and
extensions to Section~\ref{sec:cont} giving
additional results in the continuous setting such as a two-class reduction of SCC to
hard binary-classification (as opposed to soft classification as we present here).

Our work can be extended in various ways and
we believe that it opens up new avenues for future research and in particular
could be useful for inspiring new algorithms for finite-sample SCC problems.
One of the most important questions would be
to determine convergence rates for the algorithm given in Section~\ref{subsec:occ_via_2class}.
It would be very nice to obtain an explicit expression for the lower bound
output by the linear program of Theorem~\ref{thm:linear_program}.
Extensions of the analysis and
algorithms for additional feature spaces, such as graphs or time-series, would be useful. An interesting
question is whether performance, whether in terms of type II error or convergence rates, could be improved in different spaces.
Clearly, the utilization of
randomized strategies should be carried over to the finite sample case as well.
A natural desirable extension is to extend
our analysis for the soft setting to continuously infinite spaces.

We have focused in this work on ``single-shot'' games, meaning that the learner
has to make a decision after every test observation. This is a very difficult setting as one cannot utilize
cumulative statistics of the other class. Thus, we would expect that the results could be improved upon in a
repeated-game setting, where several observations are provided from the same distribution,
or in change point detection \cite{Page54,Hinkley70}, where one has to determine in a series of observations where the distribution $P$
has been replaced by the (unknown) distribution $Q$ as the underlying source. In the
finite discrete setting, one should be able to easily extend some of the results here
by replacing events with types \cite{CoverT91}.
Finally, a very interesting
setting to consider is one in which the adversary has partial knowledge of the problem parameters
and the learner's strategy. For example, the adversary may only know that $P$ is in some
subspace.

\bibliography{one_class_arxiv}
\bibliographystyle{apacite}

\newpage
\appendix

\section{Section 5 Proofs}
\label{sec: appendix-monotone-proofs}
As a reminder, we assume that $p_i > 0$ for all $i \in \Omega$.
Furthermore, for convenience we assume w.l.o.g. that $\Omega$ is defined such that $0 < p_1 \leq p_2
\leq \dots \leq p_N$.

\begin{lemma}
\label{lem:propA-rearrange}
Let $a + b = c + d$ and $a + c \geq b + d$. Then, $a \geq d$.
\end{lemma}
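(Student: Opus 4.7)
The plan is to derive $a \geq d$ by a one-line algebraic manipulation: simply add the given equality $a + b = c + d$ to the given inequality $a + c \geq b + d$.

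Doing so yields $2a + b + c \geq 2d + b + c$, after which the common terms $b$ and $c$ cancel from both sides. Dividing the resulting inequality $2a \geq 2d$ by $2$ gives $a \geq d$, as desired.

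There is essentially no obstacle here; the lemma is a straightforward piece of arithmetic that will presumably be used as a helper in the proofs of the $\PropA$-related theorems (e.g., Theorem~\ref{thm:hard-decision}), where one needs to conclude that $q_j \geq q'_k$ or similar from a balance equation together with a $\PropA$-style inequality. The only ``care'' required is to note that addition of an equality to an inequality preserves the inequality, which is immediate.
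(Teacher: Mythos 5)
Your proof is correct and is essentially the same one-line algebraic argument as the paper's: the paper rewrites the inequality as $a \geq b+d-c$, halves it, and substitutes $a+b-c=d$, which is just your ``add the equality to the inequality and cancel'' step in a slightly different order. Nothing further is needed.
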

\begin{proof}
Clearly $\frac{a}{2} \geq \frac{b + d - c}{2}$. Therefore, $a \geq \frac{a + b + d - c}{2} = d$
\end{proof}

\theoremx{4}{
[Optimal Monotone Hard Decisions]
When the learner is restricted to hard-decisions
and $\cQ$ satisfies $\PropA$ w.r.t.~$P$, then there exists a monotone $r \in \cRdeltastar$.
}
\begin{proof}
Recalling that $0 < p_1 \leq p_2 \leq \cdots \leq p_N$,
we now define a rejection function as being $x$-monotone, if it is monotone up to index $x$.
In other words, a rejection function, $r(\cdot)$ is $x$-monotone if
$p_j < p_k \Rightarrow r(j) \geq r(k)$, for all $j < k \leq x$. Clearly, all rejection
functions are 1-monotone, and a monotone rejection function is $N$-monotone.

Let us assume, by contradiction, that no monotone rejection function exists in $\cRdeltastar$.
We will prove the existence of an $N$-monotone rejection function in $\cRdeltastar$ via induction.
Let $r \in \cRdeltastar$. Then, $r$ is $(k-1)$-monotone but not $k$-monotone, for some $2 \leq k \leq N$.
Let $j = \min \{i : r(i) = 0\}$. We note that $1 \leq j < k$ and $r(k) = 1$ (otherwise, $r$ would be $k$-monotone).
We now prove the existence of a $k$-monotone rejection function, $r^* \in \cRdeltastar$.
We define $r^*$ as follows:
\begin{align*}
r^*(i) = \begin{cases}
    1& i=j,\\
    0& i=k,\\
    r(i)& \text{otherwise}.
    \end{cases}
\end{align*}
Note that for all $i \leq j$, that $r^*(i) = 1$, and for all $j < i \leq k$, that $r^*(i) = 0$.
Thus, $r^*$ is a $k$-monotone rejection function. We now prove that $r^* \in \cRdeltastar$.
Note that $\rho(r^*,P) = \rho(r,P) + p_j - p_k < \rho(r,P) \leq \delta$, and thus $r^*$ is a $\delta$-valid
hard rejection function.
Let $Q^* \in \cQ$ be such that $\min_{Q} \rho(r^*,Q) = \rho(r^*,Q^*) = \rho(r,Q^*) + q^*_j - q^*_k$.
Thus, if $q^*_j \geq q^*_k$, $\rho(r^*,Q^*) \geq \rho(r,Q^*)$. Otherwise, there exists ${Q^*}'$ as in
$\PropA$ and in particular, by Lemma~\ref{lem:propA-rearrange},
$q^*_j \geq {q^*}'_k$. Consequently, $\rho(r^*,Q^*) = \rho(r,{Q^*}') + q^*_j - {q^*}'_k \geq \rho(r,{Q^*}')$.
Therefore, there always exists $Q \in \cQ$ such that $\rho(r^*,Q^*) \geq \rho(r,Q)$
(either $Q = Q^*$ or $Q = {Q^*}'$).
Therefore, $\min_Q \rho(r^*,Q) \geq \min_Q \rho(r,Q)$, and thus, $r^* \in \cRdeltastar$.
Therefore, by induction, there must exist an optimal $N$-monotone rejection
function. Contradiction.
\end{proof}
\begin{remark}
The above proof works for a weaker version of $\PropA$:
If for all $p_j < p_k$ and $Q \in \cQ$ for which $q_j < q_k$, there exists a distribution $Q' \in \cQ$
such that $q'_j - q'_k + \sum_{i=1}^j(q_i - q'_i) + \sum_{i=k+1}\min\{q_i - q'_i,0\} \geq 0$.
As used in the proof, this would read:
\begin{align*}
\rho(r^*,Q^*) =& \rho(r^*,{Q^*}') + \sumn r^*(i)(q^*_i - {q^*}'_i)\\
    =& \rho(r,{Q^*}') + {q^*}'_j - {q^*}'_k + \sum_{i=1}^j(q^*_i - {q^*}'_i) + \sum_{i=k+1}r^*(i)(q^*_i - {q^*}'_i)\\
    \geq& \rho(r,{Q^*}') + {q^*}'_j - {q^*}'_k + \sum_{i=1}^j(q^*_i - {q^*}'_i) + \sum_{i=k+1}\min\{q^*_i - {q^*}'_i,0\}\\
    \geq& \rho(r,{Q^*}').
\end{align*}
\end{remark}
\begin{remark}
If we strengthen the condition in $\PropA$ from
$q_j + q'_j \geq q_k + q'_k$ to $q_j + q'_j> q_k + q'_k$
for all distributions $Q$ such that $q_j \leq q_k$ (instead of $q_j < q_k$),
then all optimal rejection functions would be monotone.
Note that the set of all distributions
does not have this modified property, but the set of all distributions bounded away from zero
($\{Q: q_i > 0, \forall i \in \Omega\}$) does.
\end{remark}

\theoremx{6}{[Optimal Monotone Soft Decisions]~\\
If $\cQ$ satisfies $\PropB$ w.r.t.~$P$, then there exists an optimal strictly monotone rejection function.
}
\begin{proof}
We note that the condition for strict-monotonicity is equivalent to $p_j \leq p_k \Rightarrow r(j) \geq r(k)$,
and that $0 < p_1 \leq p_2 \leq \cdots \leq p_N$.
We now define an $x$-right-strictly-monotone rejection function as one
which has strictly-monotone properties for the last $x$ indices. In other words,
a rejection function $r(\cdot)$ is $x$-right-strictly-monotone if
$p_j \leq p_k \Rightarrow r(j) \geq r(k)$, for all $j < k$, $k > N - x$. Clearly, all
rejection functions are 0-right-strictly-monotone, and an $N$-right-strictly monotone rejection
function is strictly monotone.

We assume contradictorily that there is no such rejection function. Let $r \in \cRdeltastar$.
We note that $r$ is $(v-1)$-right-strictly-monotone but not $v$-right-strictly-monotone
for some $1 \leq v \leq N$. We will prove by induction that there exists an $N$-right-strictly-monotone
function in $\cRdeltastar$. Let $k = N - v + 1$. Since $r$ is not $v$-right-strictly-monotone, then
there must exist some $j < k$ for which $p_j \leq p_k$ and $r(j) < r(k)$.
Define, for any event $\omega$ and distribution $D$:
\begin{align*}
S_r(\omega) \eqdef& \{i: p_i = p_\omega \wedge r(i) = r(\omega)\};\\
g(D, \omega) \eqdef& \frac{\sum_{i \in S_r(\omega)} d_i}{|S_r(\omega)|p_\omega} = \frac{\sum_{i \in S_r(\omega)} \frac{d_i}{p_\omega}}{|S_r(\omega)|}.
\end{align*}
$S_r(\omega)$ is the intersection of $\omega$'s probability level-set with $\omega$'s rejection level-set.
$g(D, \omega)$ is simply an average of the elements of $D$ corresponding to symbols in $S_r(\omega)$ normalized
by $\frac{1}{p_\omega}$.
We note that $g(P, \omega) = 1$ always. We define $r^*$ as follows:
\begin{align*}
r^*(i) =& \begin{cases}
    \frac{|S_r(j)|p_jr(j) + |S_r(k)|p_kr(k)}{|S_r(j)|p_j + |S_r(k)|p_k}& i \in S_r(j) \cup S_r(k),\\
    r(i)& \text{otherwise};
    \end{cases}\\
    \\
\Rightarrow r^*(j) - r(j) =& \frac{|S_r(j)|p_jr(j) + |S_r(k)|p_kr(k)}{|S_r(j)|p_j + |S_r(k)|p_k} - r(j)\\
                             =& \frac{|S_r(k)|p_k(r(k) - r(j))}{|S_r(j)|p_j + |S_r(k)|p_k} > 0\\
r(k) - r^*(k) =& \frac{|S_r(j)|p_j(r(k) - r(j))}{|S_r(j)|p_j + |S_r(k)|p_k} = \frac{|S_r(j)|p_j}{|S_r(k)|p_k}(r^*(j) - r(j))\\
\Rightarrow \forall D, \rho(r^*,D) - \rho(r,D) =& \left[(r^*(j) - r(j))\sum_{i \in S_r(j)}d_i\right] + \left[(r^*(k) - r(k))\sum_{i \in S_r(k)}d_i\right]\\
                                           =& (r^*(j) - r(j))\left[\sum_{i \in S_r(j)}d_i
                                                         - \frac{|S_r(j)|p_j}{|S_r(k)|p_k}\sum_{i \in S_r(k)}d_i\right]\\
                                           =& (r^*(j) - r(j))|S_r(j)|p_j\left[\frac{\sum_{i \in S_r(j)}d_i}{|S_r(j)|p_j}
                                                                        - \frac{\sum_{i \in S_r(k)}d_i}{|S_r(k)|p_k}\right]\\
                                           =& (r^*(j) - r(j))|S_r(j)|p_j\left[g(D,j) - g(D,k)\right] .
\end{align*}
Therefore, noting that $r^*(j) > r(j)$,
\begin{equation}
\label{eqn:first_conclusion}
\rho(r^*,D) < \rho(r,D) \Rightarrow \;
    g(D,j) < g(D,k) \Rightarrow \; \min_{i \in S_r(j)} \frac{d_i}{p_j} < \max_{i \in S_r(k)} \frac{d_i}{p_k} .
\end{equation}
Since $g(P,j) = g(P,k) = 1$, $\rho(r^*,P) = \rho(r,P) = \delta$. Therefore, $r^*$ is a valid rejection
function.
Let $u > k$. We note by the definition of $r^*$ and the fact that $r$ is $(v-1)$-right-strictly-monotone
that $r^*(u) = r(u) \leq r(j) < r^*(j) = r^*(k) < r(k)$. Therefore, $r^*$ is still $(v-1)$-right-strictly-monotone
(but not necessarily $v$-right-strictly-monotone).

Let $Q^*$ be such that $\rho(r^*,Q^*) = \min_Q \rho(r^*,Q)$. We will now show that $\exists \hat{Q} \in \cQ$ s.t.
$\rho(r^*,Q^*) \geq \rho(r,\hat{Q})$ (and therefore, $\min_Q \rho(r^*,Q) \geq \min_Q \rho(r,Q)$). The following algorithm
finds such a $\hat{Q}$:
\begin{enumerate}
    \item Set $Q = Q^*$.
    \item while $\rho(r^*,Q) < \rho(r,Q)$
    \begin{enumerate}
        \item Let $a$ and $b$ be such that $q_a = \min_{i \in S_r(j)} q_i$ and $q_b = \max_{i \in S_r(k)} q_i$.
                We note that $\rho(r^*,Q) < \rho(r,Q) \Rightarrow \frac{q_a}{p_j} < \frac{q_b}{p_k} \Rightarrow
                \frac{q_a}{p_a} < \frac{q_b}{p_b}$.
        \item Since $\cQ$ satisfies $\PropB$, there exists a $Q' \in \cQ$ which is identical to $Q$ for all $i \neq a,b$
                and such that $\frac{q'_a}{p_a} \geq \frac{q'_b}{p_b}$. Set $Q = Q'$.
    \end{enumerate}
    \item end while. Output $\hat{Q} = Q$.
\end{enumerate}
Since for all iterations, $r^*(a) = r^*(b)$, at step (b) we have $\rho(r^*,Q') = \rho(r^*,Q) = \rho(r^*,Q^*)$.
After setting $Q = Q'$ at step (b), we have
$\frac{q_a}{p_a} \geq \frac{q_b}{p_b}$,
and therefore the loop never repeats for the same pair of symbols $(a,b)$.
Therefore, the loop is guaranteed to terminate. After ending, $\rho(r^*,Q^*) = \rho(r^*,\hat{Q}) \geq \rho(r,\hat{Q})$,
so $\min_Q \rho(r^*,Q) \geq \min_Q \rho(r,Q)$, and $r^* \in \cRdeltastar$.

While there still exists a $j$ such that $r^*(j) < r^*(k)$ we
relabel $r^*$ as $r$ and repeat the above procedure (note that
it never repeats for the same pair $j,k$). The resulting $r^*$ is $(v-1)$-right-strictly-monotone
as shown above, but since now $j < k \Rightarrow r^*(j) \geq r^*(k)$, $r^*$ is $v$-right-strictly-monotone.

Thus, by induction there exists an optimal $N$-right-strictly-monotone rejection function, which is a contradiction.
\end{proof}

\begin{remark}
\label{rm: chr-monotonic-or-equality}

Strengthening the conditions in $\PropB$ to $\frac{q_j}{p_j} \leq \frac{q_k}{p_k}$ and
$\frac{q'_j}{p_j} > \frac{q'_k}{p_k}$ would strengthen Theorem~\ref{thm:chr-monotonic-equality}
so that all optimal rejection functions are strictly monotone. Once more, the set of all distributions
does not have this modified property, but the set of all distributions bounded away from zero does.
\end{remark}

\theoremx{10}{[LDRS optimality]
Let $r^*$ be an LDRF. Let $r$ be any monotone $\delta$-valid
rejection function. Then
\begin{equation*}
\min_{Q \in \cQ} \rho(r^*,Q) \geq \min_{Q \in \cQ} \rho(r,Q),
\end{equation*}
for any $\cQ$
satisfying $\PropC$. Thus, if $\cQ$ possess both $\PropA$ and $\PropC$ w.r.t.~$P$, then LDRS is
hard-optimal.
}
\begin{proof}
We define, for a hard rejection function $r$, $\theta(r) \eqdef \min_{\omega : r(\omega) = 0} p_\omega$,
$Z_\theta(r) \eqdef \{\omega : p_\omega = \theta(r) \wedge r(\omega) = 1\}$ and $z_\theta(r) \eqdef |Z_\theta(r)|$.

Assume, by contradiction, that
$\min_{Q \in \cQ} \rho(r^*,Q) < \min_{Q \in \cQ} \rho(r,Q)$. Let $Q^*$ be the minimizer of $\rho(r^*,Q)$.
Then, $\rho(r^*,Q^*) < \rho(r,Q^*)$.
If $\theta(r) > \theta(r^*)$ then, by the definition of LDRF and by the monotonicity of $r$, $\rho(r,P) > \delta$,
which contradicts $r$'s validity. If $\theta(r) < \theta(r^*)$ then, by $r$'s monotonicity,
$r(\omega) = 1 \Rightarrow r^*(\omega) = 1$, and for any distribution
$D$, $\rho(r,D) \leq \rho(r^*,D)$,
contradicting $\rho(r^*,Q^*) < \rho(r,Q^*)$.
Therefore, $\theta(r) = \theta(r^*)$.
If $z_\theta(r) > z_\theta(r^*)$ then $\rho(r,P) > \delta$ since $r^*$ is an LDRF.
Otherwise, $z_\theta(r) \leq z_\theta(r^*)$, and by $\PropC$ the set $\cQ$ contains all distributions identical to $Q^*$ up to a permutation of the
$\theta$-probability events. Therefore, $\min_{Q \in \cQ} \rho(r^*,Q) \geq \min_{Q \in \cQ} \rho(r,Q)$.
Contradiction.
\end{proof}

\section{Section 6 Proofs}
\label{sec: appendix-games-proof}

\lemmax{24}{Let $r^*$ be the solution to the linear program. If $r^*$ is vulnerable, then $r^* = r^\delta$.
}
\begin{proof}
Let $r^*$ be a vulnerable solution to the linear program~(\ref{eqn:linprog}),
which clearly satisfies
$1 \geq r^*_1 \geq r^*_2 \geq \dots \geq r^*_K \geq 0$.
Therefore, for all $i \in I_{min}(r^*)$, $r^*(i) = r^*_K$.
We define $z^*$ to be the maximal value of $z$ that the linear program achieves for $r^*$.
Let $j = \argmin_{i \in I_{min}(r^*)} p_i$ and let $S_u$ be the level set to
which $j$ belongs.
We now prove that $u=1$.

We first deal with the case where $D^{S_K}_P \geq \Lambda$ (in which case the constraint is completely vacuous).
We note in this case that $S_1, S_2, \dots, S_K \in \cL \bigcup \cM$, and therefore $w = K$.
Thus, we have $r^*_1 \geq r^*_2 \geq \dots \geq r^*_K = r^*_w \geq z^*$.
We note that $z^* \leq \delta$, otherwise $\delta = \sumk |S_i|p(S_i) r^*_i \geq r^*_K \geq z^* > \delta$.
We note that $r_1 = r_2 = \dots r_K = \delta$ is a valid solution to the linear program
for which $z = \delta$, which is the maximal value achievable. Therefore, $z^* = \delta$.
If $u>1$, then $r^*_1 > r^*_K \geq z^* = \delta$ and $\sumk |S_i|p(S_i) r^*_i > r^*_K \geq \delta$.
Therefore, if $D^{S_K}_P \geq \Lambda$, $u = 1$.

We now turn our attention to the case where $D^{S_K}_P < \Lambda$. If we assume by contradiction that $u > 1$,
then $r^*_{u-1} > r^*_u = r^*_{u+1} = \dots = r^*_K$. We define $\Pr[S] \eqdef |S|p(S)$ for a level set $S$, and
$c \eqdef \frac{\Pr[S_{u-1}]}{\sum_{i=u}^K \Pr[S_i]}$.
Let $0 < \epsilon < \frac{r^*_{u-1} - r^*_u}{c + 1}$. We now define a new rejection function $r'$ as follows:
\begin{align*}
r'_i \eqdef& \begin{cases}
    r^*_i & i < u - 1,\\
    r^*_i - \epsilon& i = u-1,\\
    r^*_i + c\epsilon& i \geq u.
    \end{cases}
\end{align*}
We note that:
\begin{align*}
\rho(r',P) =& \rho(r^*,P) - \Pr[S_{u-1}]\epsilon + \sum_{i = u}^{K} \Pr[S_i]c\epsilon \\
        =& \rho(r^*,P) - \Pr[S_{u-1}]\epsilon + \Pr[S_{u-1}]\epsilon = \rho(r^*,P) = \delta.
\end{align*}
Therefore, $r'$ is $\delta$-valid.
Let $z'$ be the maximal value of $z$ that the linear program
achieves for $r'$.
$D_P\left(X^{(j)}\right) \geq \Lambda$ and by our assumption, $D^{S_K}_P < \Lambda$, which implies that $S_K \in \cH$.
If $D_P\left(X^{(j)}\right) > \Lambda$,
then there exists some $l$ for which $j \in S_u \equiv S_l \in \cL$.
The rejection rate for the level-set pair, $(l,K)$,
is $r^*_K$.
Otherwise, $D_P\left(X^{(j)}\right) = \Lambda$, and $j \in S_u \equiv S_m \in \cM$, and
we have a rejection rate for $S_m$ of $r^*_m = r^*_K$.
Since $z^*$ cannot be less than $r^*_{min} = r^*_K$, we have $z^* = r^*_K$ in both cases ($r^*_K \geq z^* \geq r^*_K$).
We note that:
$$
r'_{u-1} - r'_u = (r^*_{u-1} - \epsilon) - (r^*_u + c\epsilon) = (r^*_{u-1} - r^*_u) - (c + 1)\epsilon > 0
$$
Clearly for  $i > u$, $r'_{i-1} = r'_i = r^*_K + c\epsilon$.
Obviously, $1 \geq r'_1$ and $r'_K > r^*_K \geq 0$.
Therefore, $1 \geq r'_1 \geq r'_2 \geq \dots \geq r'_K > 0$, and $r'$ is a feasible solution
to the linear program. Furthermore, $z' \geq r'_{min} = r'_K > r^*_K = z^*$, which contradicts the fact
that $r^*$ maximizes $z$ (and is the solution to the linear program).

Therefore, $u=1$. This results in $r^*_1 = r^*_2 = \dots = r^*_K$.
Since $\rho(r^*,P) = \delta$, we have that $\delta = \sumk |S_i|p(S_i) r^*_i = r^*_K$, or $r^* = r^\delta$.
\end{proof}

\section{Section 7 Proofs}
\label{sec: appendix-cont-ldrs-proof}
We begin by providing some additional definitions.
Let $\bB$ be the set of all Borel sets over $\Real^d$.
For two Borel sets $a, b$ we define  $a \lebeq b \Leftrightarrow \lambda(a \symdif b) = 0$,
where $\symdif$ is the symmetric difference operator.
For two functions, $f,g$ over $\Real^d$ and Borel set $b$, define
$\Delta_b(f,g) \eqdef \{x \in b: f(x) \neq g(x)\}$. We define the function $\ind_b(x) \eqdef \ind(x \in b)$, where $\ind(\cdot)$ is the indicator function.
\begin{lemma}
Let $m' \in \cldp$. Let $m$ be a Borel set such that $m \lebeq m'$. Then $m \in \cldp$.
\end{lemma}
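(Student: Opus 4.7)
The plan is to take the witnessing minimum volume set for $m'$ and slightly modify it so that it witnesses the claim for $m$. Concretely, since $m' \in \cldp$, there exists a minimum volume set $b'$ of measure $1-\delta$ with $P(b') = 1-\delta$, $P(m') = \delta$, and $b' \cap m' = \emptyset$. I propose to verify that $b \eqdef b' \setminus m$ witnesses $m \in \cldp$.

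First I would dispose of the easy requirements: $b \cap m = \emptyset$ is immediate from the definition, and $b \subseteq b' \subseteq \supp(p)$ gives the support containment. Next, $P(m) = \delta$ follows from $m \lebeq m'$ combined with the hypothesis that $P$ is absolutely continuous with respect to $\lambda$, which forces $P(m \symdif m') = 0$ and hence $P(m) = P(m') = \delta$.

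The two remaining claims, that $P(b) = 1-\delta$ and that $b$ is still of minimum volume, both rest on the same observation: since $b' \cap m' = \emptyset$, we have $b' \cap m \subseteq m \setminus m' \subseteq m \symdif m'$, which has $\lambda$-measure zero and therefore $P$-measure zero. This gives $\lambda(b) = \lambda(b') - \lambda(b' \cap m) = \lambda(b')$ and $P(b) = P(b') - P(b' \cap m) = 1-\delta$. Since $\lambda(b')$ was the infimum of volumes over all Borel sets of $P$-measure $1-\delta$, and $\lambda(b) = \lambda(b')$ with $P(b) = 1-\delta$, $b$ is itself a minimum volume set of measure $1-\delta$.

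Combining these, $b$ is a minimum volume set of measure $1-\delta$ that is disjoint from $m$, while $P(m) = \delta$, so by Definition~\ref{def:low-density-set}(i) we conclude $m \in \cldp$. I do not expect a real obstacle here; the only subtlety is recognizing that one cannot reuse $b'$ itself (it may intersect $m$ on a set of $\lambda$-measure zero but nonempty), and that the fix $b' \setminus m$ preserves both the $P$-measure and the $\lambda$-measure precisely because $P \ll \lambda$.
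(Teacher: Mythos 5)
Your proof is correct and takes essentially the same route as the paper: both replace the witnessing minimum volume set $b'$ by $b = b' \setminus m$ and use $\lambda(m \symdif m') = 0$ together with $P \ll \lambda$ to see that $b$ remains a minimum volume set of measure $1-\delta$ disjoint from $m$. Your version is in fact more explicit than the paper's (which leaves the absolute-continuity step implicit), so nothing is missing.
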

\begin{proof}
Since $m' \in \cldp$, $P(m') = \delta$ and there exists a minimum volume set $b'$ of measure $1-\delta$,
such that $m' \bigcap b' = \emptyset$. Let $b \eqdef b' \setminus m$.
We note that $\lambda(m \symdif m') = 0$.
Therefore, $b = b' \setminus m \lebeq b' \setminus m' = b'$.
Therefore, $b'$ is a minimum volume set of measure $1-\delta$. Since $m \bigcap b = \emptyset$ and
$P(m) = \delta$, $m \in \cldp$.
\end{proof}

\theoremx{30}{[LDRS optimality - Continuous Setting]
When the learner is restricted to hard-decisions and $\cQ$ satisfies $\PropAcont$ w.r.t.~$P$,
then LDRS is optimal.
}
\begin{proof}
Assume that the statement is false.
Therefore, there must exist some $m \in \cldp$ such that for all $r \in R^*_\delta$, $\lambda(\Delta_{\Real^d}(r, \ind_{l_P(m)})) > 0$.
Let $r' \in R^*_\delta$, such that $\rho(r',P) = \delta$. Define
\begin{align*}
r(x) \eqdef \begin{cases}
    1& p(x) = 0,\\
    r'(x)& \text{otherwise}.
    \end{cases}
\end{align*}
Therefore, $r \in R^*_\delta$ and $\lambda(\Delta_{\Real^d}(r, \ind_{l_P(m)})) > 0$.
Let $j = \{x \in m: r(x) = 0\}$. Therefore, $P(j) > 0$. Thus, there must exist
a set $k$, such that $k \bigcap m = \emptyset$, $k \subset \supp(p)$, $\int_k r(x)\lambda(dx) = \lambda(k)$,
and $P(k) = P(j)$ (otherwise, $\rho(r',P) \neq \delta$).
Since $P(k) = P(j) > 0$ and $m \in \cldp$, we have $\lambda(j) \geq \lambda(k)$.
We define:
\begin{align*}
r^*(x) \eqdef \begin{cases}
    1& x \in j,\\
    0& x \in k,\\
    r(x)& \text{otherwise}.
    \end{cases}
\end{align*}
We note that $\rho(r^*, P) = \rho(r, P) \leq \delta$.

Let $Q^* \in \cQ$ be such that $\min_Q \rho(r^*, Q) = \rho(r^*, Q^*) = \rho(r, Q^*) + Q^*(j) - Q^*(k)$.
Thus, if $Q^*(j) \geq Q^*(k)$, $\rho(r^*, Q^*) \geq \rho(r, Q^*)$. Otherwise,
there exists ${Q^*}'$ as in $\PropAcont$ and in particular, by Lemma~\ref{lem:propA-rearrange},
$Q^*(j) > {Q^*}'(k)$. Consequently, $\rho(r^*, Q^*) = \rho(r, {Q^*}') + Q^*(j) - Q^*(k) \geq \rho(r,{Q^*}')$.
Therefore, there always exists $Q \in \cQ$ such that $\rho(r^*,Q^*) \geq \rho(r,Q)$ (either $Q=Q^*$
or $Q={Q^*}'$). Therefore, $\min_Q \rho(r^*,Q) \geq \min_Q \rho(r,Q)$, and thus $r^* \in R^*_\delta$.
However, $\lambda(\Delta_{\Real^d}(r^*, \ind_{l_P(m)})) = 0$.
Contradiction.
\end{proof}

\end{document}